\documentclass[nohyperref]{article}

\usepackage{microtype}
\usepackage{graphicx}
\usepackage{url}
\usepackage{algorithm}
\usepackage{algorithmic}
\usepackage{amsmath,amsfonts,bm}
\usepackage{amsthm,amssymb}
\usepackage{bbm}
\usepackage{diagbox}
\usepackage{booktabs}  
\usepackage{color}
\usepackage{xcolor}
\usepackage[colorlinks=true, linkcolor=blue, citecolor=blue]{hyperref}
\usepackage{float}
\usepackage{lipsum}  %
\usepackage{multirow}
\usepackage{graphicx}  %
\usepackage{caption}
\usepackage{subcaption}
\usepackage{newtxmath}
\usepackage{enumitem}
\usepackage{mathtools}
\usepackage{wrapfig}

\usepackage{amsmath,amsfonts,bm}

% Matrix

% Graph

\def\gB{{\mathcal{B}}}

\def\gD{{\mathcal{D}}}

\def\gG{{\mathcal{G}}}

\def\gL{{\mathcal{L}}}

\def\gN{{\mathcal{N}}}

\def\gP{{\mathcal{P}}}

\def\gS{{\mathcal{S}}}
\def\gT{{\mathcal{T}}}
\def\gU{{\mathcal{U}}}

\def\gX{{\mathcal{X}}}

% Sets

% Don't use a set called E, because this would be the same as our symbol
% for expectation.

\def\sN{{\mathbb{N}}}

\def\sR{{\mathbb{R}}}

% Indicator function
\def\indicator{{\mathbbm{1}}}

\newcommand{\E}{\mathbb{E}}

\newcommand{\R}{\mathbb{R}}

\newcommand{\ra}{{\rightarrow}}

\newcommand{\BlackBox}{\rule{1.5ex}{1.5ex}}  % end of proof
\ifdefined\proof
    \renewenvironment{proof}{\par\noindent{\bf Proof\ }}{\hfill\BlackBox\\[2mm]}
\else
    \newenvironment{proof}{\par\noindent{\bf Proof\ }}{\hfill\BlackBox\\[2mm]}
\fi
\newtheorem{example}{Example} 
\newtheorem{theorem}{Theorem}
\newtheorem{lemma}{Lemma} 
\newtheorem{proposition}{Proposition}

\newtheorem{definition}{Definition}

\usepackage[accepted]{style/icml2023}

\newtoggle{arxiv}
\toggletrue{arxiv}

\usepackage[capitalize,noabbrev]{cleveref}  %

\crefname{proposition}{Prop.}{Props.}
\crefname{definition}{Def.}{Defs.}
\crefname{theorem}{Thm.}{Thms.}
\crefname{lemma}{Lemma}{Lemmas}
\crefname{example}{Ex.}{Exs.}
\crefname{equation}{}{}
\makeatletter
\crefname{section}{\S\@gobble}{\S\S}
\crefname{appendix}{\S\@gobble}{\S\S}
\crefname{subsection}{\S\@gobble}{\S\S}
\crefname{subsubsection}{\S\@gobble}{\S\S}
\makeatother
\crefname{figure}{Fig.}{Figs.}
\crefname{wrapfigure}{Fig.}{Figs.}
\crefname{corollary}{Cor.}{Cors.}
\crefname{table}{Table}{Tables}

\theoremstyle{plain}

\newenvironment{customthm}[1]
  {\innercustomthm}
  {\endinnercustomthm}

\newenvironment{customlemma}[1]
  {\innercustomlemma}
  {\endinnercustomthm}

\newenvironment{customproposition}[1]
  {\innercustomproposition}
  {\endinnercustomthm}

\newcommand{\explainer}[0]{{\color{blue}\textbf{($\star$)} }}

\icmltitlerunning{A Theory of Continuous Generative Flow Networks \hfill \thepage}

\makeatletter

\renewcommand{\paragraph}[1]{\textbf{#1}}
\makeatother

\begin{document}

\twocolumn[
\icmltitle{A Theory of Continuous Generative Flow Networks}

\icmlsetsymbol{equal}{*}

\icmlcorrespondingauthor{Salem Lahlou}{lahlosal@mila.quebec}

\begin{icmlauthorlist}
\icmlauthor{Salem Lahlou}{mila,udem}
\icmlauthor{Tristan Deleu}{mila,udem}
\icmlauthor{Pablo Lemos}{mila,ciela,udem}
\icmlauthor{Dinghuai Zhang}{mila,udem}
\icmlauthor{Alexandra Volokhova}{mila,udem}
\icmlauthor{Alex Hernández-García}{mila,udem}
\icmlauthor{Léna Néhale Ezzine}{mila,udem}
\icmlauthor{Yoshua Bengio}{mila,udem,cifar}
\icmlauthor{Nikolay Malkin}{mila,udem}
\end{icmlauthorlist}

\icmlaffiliation{mila}{Mila}% -- Qu\'ebec AI Institute}
\icmlaffiliation{ciela}{Ciela Institute}
\icmlaffiliation{udem}{Universit\'e de Montr\'eal}
\icmlaffiliation{cifar}{CIFAR}

\icmlkeywords{}

\vskip 0.3in
]

\printAffiliationsAndNotice{}  %

\begin{abstract}
Generative flow networks (GFlowNets) are amortized variational inference algorithms that are trained to sample from unnormalized target distributions over compositional objects. A key limitation of GFlowNets until this time has been that they are restricted to discrete spaces. We present a theory for generalized GFlowNets, which encompasses both existing discrete GFlowNets and ones with continuous or hybrid state spaces, and perform experiments with two goals in mind. First, we illustrate critical points of the theory and the importance of various assumptions. Second, we empirically demonstrate how observations about discrete GFlowNets transfer to the continuous case and show strong results compared to non-GFlowNet baselines on several previously studied tasks. This work greatly widens the perspectives for the application of GFlowNets in probabilistic inference and various modeling settings.
\vspace*{-4mm}
\end{abstract}

\section{Introduction}
\label{sec:introduction}

Generative flow networks \citep[GFlowNets;][]{bengio2021flow} are an increasingly popular class of methods that amortize sampling from intractable distributions over spaces with a compositional structure by learning a sequential sampling policy. Their applications include the design of biological structures such as molecules~\citep{bengio2021flow, jain2022biological}, Bayesian structure learning~\citep{deleu2022bayesian,nishikawa2022bayesian}, and robust combinatorial optimization \citep{robust-scheduling}. 
Naturally, their development and theoretical foundations~\citep{bengio2021foundations,malkin2022gfnhvi,zimmermann2022variational} have been geared towards environments with discrete structures. 

As many probabilistic inference and modeling problems involve continuous variables, it is natural to ask whether the advantages of GFlowNets, which include stable off-policy learning and the ability to capture many modes of the target distribution, extend to general spaces. For example, molecule design implies specifying relative spatial positions of atoms and benefits from modeling continuous variables, such as torsion angles \citep{torsionaldiffusion2022}, and Bayesian structure learning requires the discovery of not only the structure of the graphical model, but also its parameters. 

As a first attempt at using GFlowNet losses to train an amortized sampler of a unnormalized continuous density, \citet{malkin2022gfnhvi} showed that the off-policy benefits of GFlowNets extend to a toy stochastic control problem. However, a theory justifying the soundness GFlowNet losses in domains with continuous actions has been still lacking. More recently, \citet{li2023cflownets} presented an extension of the flow-matching conditions~\citep{bengio2021flow} to continuous domains; however, this extension relies upon several invalid assumptions, as we expand on in \cref{sec:theory-summary}. %

This paper presents a theory extending all known GFlowNet training objectives to arbitrary spaces. It relies on measurable pointed graphs, a generalization of directed acyclic graphs (DAGs) to measurable spaces, based on Markov kernels. Our main \textbf{theoretical contributions} are an extension of the flow-matching \citep[FM;][]{bengio2021flow}, detailed balance \citep[DB;][]{bengio2021foundations}, and trajectory balance \citep[TB;][]{malkin2022trajectory} conditions and a theorem proving that the learned \emph{forward kernel} samples from the target distribution when any of these conditions is satisfied. These conditions lead to training losses involving density functions and allowing gradient-based learning. Existing losses for discrete GFlowNets are special cases of the ones we state.

Additionally, we provide \textbf{experimental results} in multiple domains with different structures, some of which include action spaces with both discrete and continuous components. These experiments serve both to validate the theoretical claims and to inform practitioners of caveats that are specific to continuous domains. Our comparative experiments confirm that the already-proven advantages of discrete GFlowNets transfer to more general state spaces.

The remainder of the paper is structured as follows:

\cref{sec:background} reviews GFlowNets and work on stochastic sampling;

\cref{sec:theory} presents the theoretical results and a practical summary;

\cref{sec:experiments} is devoted to empirically validating the theory and comparing generalized GFlowNets with baselines.

\section{Background and related work}
\label{sec:background}

\subsection{Stochastic sampling in continuous spaces}

Sequential sampling in continuous spaces has a long history. Specialized Markov chain Monte Carlo (MCMC) methods exist for sampling from continuous or differentiable densities, such as Langevin and Hamiltonian MCMC \cite{langevin,neal-hmc,nuts}.

Another line of work considers stochastic sampling in a finite number of steps. The family of sequential Monte Carlo methods~\citep{doucet01introduction} and the closely related annealed importance sampling \cite{neal-ais} specify a sequence of intermediate target densities with respect to which samplers aim to approximately satisfy detailed balance, but the transition kernel is typically not a learned neural network policy. More recently, learnable-kernel sampling methods, formulated as score-based or stochastic differential equation modeling, have been used for maximum-likelihood generative modeling \citep[e.g.,][]{sohldickstein2015deep,song2019generative,ho2020ddpm,cdld}, as well as for learning to sample from an intractable target density \citep{zhang2022path}. As we show in our experiments, such algorithms can be seen as special cases of GFlowNets where the state space is of a particular form (\cref{sec:pis}, \cref{sec:diffusion}).

Another related direction is stochastic normalizing flows \citep{wu2020stochastic}, which have been interpreted with a Markov chain perspective \citep{hagemann2022stochastic}, relying on Markov kernels and Radon-Nikodym derivatives just as our theory of generalized GFlowNets.

\subsection{Discrete GFlowNets}

GFlowNets were first framed as a reinforcement learning (RL) algorithm~\citep{bengio2021flow}, with discrete state and action spaces, that trains a sampler of a target distribution given by its unnormalized probability mass function, called reward, using a local consistency objective known as flow matching. This contrasts with usual RL algorithms that aim at maximizing a given reward function, but is equivalent to entropy-regularized RL methods in special cases.~\citet{bengio2021foundations} laid out the theoretical foundations of GFlowNets, based on flow networks defined on DAGs, and proposed the detailed balance loss as a more efficient alternative that bypasses the need to sum the flows over large sets of children and parents, opening the door for continuous states. The trajectory balance and subtrajectory balance losses ~\citep{malkin2022trajectory,madan2022learning} have been found to be yet more efficient. \citet{pan2022gafn} proposed a  framework to incorporate intrinsic exploration rewards when training a GFlowNet. \citet{zhang2023unifying,malkin2022gfnhvi,zimmermann2022variational} proved a partial equivalence between GFlowNets and hierarchical variational methods, but showed the superiority of GFlowNets when learning with trajectories sampled \emph{off-policy}. \citet{zhang2022generative} used GFlowNets for approximate maximum-likelihood training of energy-based models, bypassing the need for a given target reward. 

\citet{jain2022biological} used GFlowNets within an active learning loop to design biological sequences. \citet{robust-scheduling} used GFlowNets for an NP-hard combinatorial optimization problem. Other applications include Bayesian structure learning: \citet{deleu2022bayesian,nishikawa2022bayesian} learn posteriors over the combinatorially large space of causal graphs, which are naturally compositional.

\paragraph{Review of discrete GFlowNets.}
Given a non-negative target reward function $R$ on a finite space $\gX$, which coincides with a subset of the vertices of a DAG called the terminating states, GFlowNet training objectives aim at learning transition probabilities $P_F(s' \mid s)$ along the edges of the DAG. The \emph{(forward) action policy} $P_F$ induces a marginal distribution $P_F^\top$ over the terminating states, the final states of trajectories that begin at the designated initial state and sample transitions according to $P_F$. The parameters of the forward policy $P_F$ are sequentially updated using a stochastic gradient of the objective function applied to a trajectory, sampled either from the forward policy itself (\emph{on-policy}), or a modified version thereof in order to incentivize exploration (\emph{off-policy}). When the loss is at a global minimum, the forward policy is able to effectively sample from a probability distribution over $\gX$ proportional to $R$. 

All GFlowNet losses must introduce additional objects into the parametrization to cope with the intractable representation of $P_F^\top(x)$ as a sum of the likelihoods of all (possibly exponentially many) trajectories leading to $x$. For example, the DB and TB objectives use a parametric \emph{backward policy} $P_B(s\mid s')$, which specifies a distribution over the parents of any state in the DAG.

\section{A theory for generalized GFlowNets}
\label{sec:theory}

\subsection{Practical summary}
\label{sec:theory-summary}

\begin{table*}[ht]
\caption{Dictionary between discrete and generalized GFlowNets}
\resizebox{1\linewidth}{!}{
\begin{tabular}{@{}p{0.49\textwidth}p{0.49\textwidth}r@{}}
\toprule
\multicolumn{1}{c}{Discrete GFlowNet} & \multicolumn{1}{c}{Generalized GFlowNet} & Reference\\\midrule
The state space is a finite set with distinguished source and sink states
& The state space is a topological space with distinguished source and sink states, and may consist of both continuous and discrete components 
& \cref{def:measurable_pointed_graph}
\\\midrule
Children and parents of a state $s$
&Supports of the measures $\kappa(s, -), \kappa^b(s, -)$
& \cref{def:measurable_pointed_graph}
\\\midrule
All states are reachable from $s_0$
& All open sets are reachable from $s_0$ with nonzero likelihood 
& \cref{eq:accessibility}
\\\midrule
The state $\bot$ has no outgoing edges 
& The state $\bot$ is absorbing 
& \cref{eq:absorption}
\\\midrule
The state graph is acyclic ($\Rightarrow$ trajectory lengths are bounded) & The measurable pointed graph is finitely absorbing
& \cref{eq:finitely-absorbing}
\\\midrule
State flow $F$, forward policy $P_F$, backward policy $P_B$
& Flow measure $\mu$, forward kernel $P_F$, backward kernel $P_B$
& \cref{def:flow-and-flow-matching,def:db-conditions}
\\\midrule
Transition likelihoods $P_F(-|s)$ only positive along edges
& 
Transition kernels $P_F(s,-)$  absolutely continuous w.r.t.\ $\kappa$
& \cref{def:flow-and-flow-matching}
\\\midrule
Terminating distribution $P_F^\top$
& Terminating state measure $P_\top$
& \cref{def:trajectory-measures,thm:fm-implies-correct-sampling}
\\\midrule
Flow-matching implies sampler matches reward function $R$
& 
Flow-matching implies sampler matches reward measure $R$
& \cref{def:flow-and-flow-matching,thm:fm-implies-correct-sampling}
\\\midrule
Detailed balance: $F(s)P_F(s'|s)=F(s')P_B(s|s')$
&
Detailed balance: $\mu(ds)P_F(s,ds')=\mu(ds')P_B(s',ds)$
& \cref{def:db-conditions}
\\\midrule
Trajectory balance: $ZP_F(\tau)=R(x_\tau)P_B(\tau \mid x_\tau)$
&
Trajectory balance: $Z P_F(s_0, ds_1) \dots P_F(s_n, \{\bot\}) = R(d s_n) P_B(s_n, ds_{n-1}) \dots P_B(s_1, \{s_0\})$
& \cref{def:tb-conditions}
\\\bottomrule
\end{tabular}
}
\label{tab:dictionary}
\end{table*}

A summary of the key differences and analogies between discrete and generalized GFlowNets is provided in \cref{tab:dictionary}, and the precise way in which discrete GFlowNets are special cases of generalized GFlowNets is stated in \cref{appendix:pointed-graphs}. 

The theory we develop in this section shows that the main losses used to train discrete GFlowNets, namely the detailed balance~\citep{bengio2021foundations} and the trajectory balance~\citep{malkin2022trajectory} losses, {\em naturally} extend to generalized GFlowNets, simply by replacing probability mass functions with probability density functions. Most importantly, however, the soundness of the theory relies upon the following assumptions, which need to be carefully verified when training a GFlowNet in an infinite space:
\begin{enumerate}[label=(\arabic*),nosep,left=0pt]
    \item The structure of the state space must allow all states to be reachable from the source state $s_0$ \cref{eq:accessibility};
    \item The structure must ensure that the number of steps required to reach any state from $s_0$ is bounded \cref{eq:finitely-absorbing};
    \item The learned probability measures need to be expressed through densities over states, rather than over actions.
\end{enumerate}
These assumptions are naturally verified in discrete domains, as long as the state space is described by a pointed directed acyclic graph~\citep{bengio2021foundations}.

\paragraph{On previous attempts to train continuous GFlowNets.} 
While \citet{li2023cflownets} proposed to train continuous GFlowNets by writing the flow matching conditions as integrals rather than sums, assumptions (1) and (3) are violated in a critical way. First, the environments considered in \citet{li2023cflownets}'s experiments violate assumption (1), without which the main GFlowNet training theorems do not hold. 
Second, regarding (3), \citet{li2023cflownets} implicitly assumes that for a state $s$ and flow function $F(s\to s)$,
\[\int_{s': s\rightarrow s'}F(s \to s')\,ds'=\int_aF(s\to T(s,a))\,da,\]
where the second integral is taken over actions and $T(s,a)$ is the state reached by taking action $a$ from $s$. This change of variables is invalid in general: 
\textit{the integrand on the right side is missing the Jacobian term $\frac{dT(s,a)}{da}$},
which need not equal $1$. In particular, it does not equal $1$ in the environments studied by \citet{li2023cflownets} (although it may hold in special cases, such as sampling in Euclidean spaces where $T(s,a)=s+a$). These issues are concerning for the scope of that method's applicability.

\subsection{Structured state space}
\label{sec:structured-state-space}

\paragraph{Note.} To help the reader form a mental picture, we list the concepts introduced and their discrete analogues in \cref{tab:dictionary} and formally state the connection in \cref{appendix:pointed-graphs}. Paragraphs marked \explainer explain the meaning of the technical results.

\paragraph{\explainer How could one describe a structure in general spaces, similar to DAGs on finite sets? } In finite sets, it would suffice to enumerate the child sets and parent sets of all states, with the constraint that $s'$ is a child of $s$ if and only if $s$ is a parent of $s'$. In general state spaces, however, enumeration is replaced by measure. One could thus define, for each state, a measure on the state space describing what states can be accessed in one step.

The structured state spaces we consider will be called {\em measurable pointed graphs} and rely on {\em transition kernels}~\citep{nummelin2004general,cappe2009inferencehmm, petritis2015markov}, of which we recall the definition in~\cref{appendix:transition-kernels}.

\begin{definition}[Measurable pointed graph]
A \emph{measurable pointed graph} $G=(\bar{\gS}, \gT, \Sigma, s_0, \bot, \kappa, \kappa^b, \nu)$ consists of:
\begin{itemize}[nosep,left=0pt]
\item A topological space $(\bar{\gS}, \gT)$, where $\gT$ is the set of open subsets of $\bar{\gS}$ and $\Sigma$ is the Borel $\sigma$-algebra associated to the topology on $\bar{\gS}$;
\item A pair of distinct distinguished states $s_0\in\bar\gS$ and $\bot\in\bar\gS$, called the \emph{source state} and \emph{sink state}, such that $\{s_0\}$ and $\{\bot\}$ are both open and closed sets. We define $\gS=\bar{\gS}\setminus\{\bot\}$ and $\gS^\circ=\gS\setminus\{s_0\}$, so the topology on $\bar{\gS}$ is the disjoint union topology on $\{s_0\}$, $\{\bot\}$, and $\gS^\circ$.
\item A $\sigma$-finite transition kernel $\kappa$ on $(\bar{\gS}, \Sigma)$, called the \emph{reference kernel},
\item A $\sigma$-finite transition kernel $\kappa^b$ on $(\bar{\gS}, \Sigma)$, called the \emph{backward reference kernel},
\item A \emph{strictly positive} $\sigma$-finite measure $\nu$ on $(\bar{\gS}, \Sigma)$, called the \emph{reference measure},
\end{itemize}
such that the following conditions hold:
\begin{align}
    &\forall B \in \gT \setminus \{ \emptyset \}\quad\exists n \geq 0 : \kappa^n(s_0, B) > 0,\label{eq:accessibility} \\
    &\kappa(\bot,-) = \delta_\bot, \label{eq:absorption} \\
    &\forall B \in \Sigma, \ s\mapsto \kappa(s, B) \text{ is continuous,}\\
    &\forall B \in \Sigma \otimes \Sigma, \ ((s_0, s_0) \notin B, \ (\bot, \bot) \notin B) \Rightarrow \nonumber\\
    &\quad\quad\quad\quad\quad\quad\quad\quad\nu \otimes \kappa(B) = \nu \otimes \kappa^b(B) ,\label{eq:backward-kernel}\\
    &\forall B \in \Sigma, \ \kappa^b(s_0, B) = 0 ,\label{eq:trivial-measure}\\
    &\forall s \in \gS, \ \kappa(s, \{\bot\}) > 0 \Rightarrow \kappa(s, \{\bot\}) = 1. \label{eq:kappa-terminal-1}
\end{align}
The measurable pointed graph is called \emph{finitely absorbing} if
\begin{equation}
\label{eq:finitely-absorbing}
\exists N>0: {\rm supp}(\kappa^N(s_0,-))=\{\bot\},
\end{equation}
in which case the minimal such $N$ is called the maximal trajectory length.
\label{def:measurable_pointed_graph}
\end{definition}

\explainer The reference transition kernel $\kappa$ provides a notion of ``structure'' of the state space. The support of $\kappa(s, -)$ (resp.\ $\kappa^b(s, -)$) can be thought of as the child set (resp.\ parent set) of the state $s$. For example, in a discrete graph, $\kappa(s,-)$ could be uniform over the children of $s$. The reference kernel is not a policy to be sampled, but an object needed to define probability densities of policies. The measure $\nu$, the reference with respect to which flows and rewards are defined, is typically a simple measure, such as the counting measure on a discrete set or the standard Lebesgue measure on a Euclidean space.

In practice, if the structure is only defined by the reference kernel $\kappa$, then $\nu$ and $\kappa^b$ satisfying the conditions of \cref{def:measurable_pointed_graph} can be defined from $\kappa$ under some mild assumptions, as we discuss in \cref{prop:existence-of-backward-kernel} in \cref{appendix:backward-kernels}.

From now on, we fix a finitely absorbing measurable pointed graph $G=(\bar{\gS}, \gT, \Sigma, s_0, \bot, \kappa, \kappa^b, \nu)$ with maximal trajectory length $N$.

\begin{definition}[Terminating states]
   The set of \emph{terminating states} $\gX$ is defined by:
    \begin{equation}
        \gX = \{ s \in \gS \ : \ \kappa(s, \{\bot\}) > 0 \}.
        \label{eq:terminating-states}
    \end{equation}
    \vspace{-1.5em}
    \label{def:terminating-states}
\end{definition}

\explainer Terminating states are ones from which one can transition to $\bot$ with positive probability. Any transition kernel can be sampled for $n$ steps, yielding a measure over $n$-step trajectories and a marginal measure over states reached after $n$ steps. As described in the Appendix (\cref{sec:terminating-state-measure}), this can be used to define the marginal terminating measure $P_\top$ of a transition kernel $P_F$, used in the next section.

\subsection{Flows}
\begin{definition}[Flows and flow-matching conditions]
    \label{def:flow-and-flow-matching}
    Given a $\sigma$-finite measure $\mu$ over $(\bar{\gS}, \Sigma)$ that is absolutely continuous w.r.t.\ $\nu$ (we write $\mu \ll \nu$), and a $\sigma$-finite Markov kernel $P_F$ on $(\bar{\gS}, \Sigma)$ (i.e. a transition kernel such that each $P_F(s, -)$ is a probability measure) satisfying:
    \begin{enumerate}[label=(\arabic*),left=0pt,nosep,]
    \item $P_F(s,-) \ll \kappa(s,-)$ for every $s \in \bar\gS$,
    \item $s \mapsto P_F(s, B)$ is continuous for every $B \in \Sigma$,
    \end{enumerate}
    $P_F$ is said to be a \textbf{forward kernel} over $G$. We say that the tuple $F=(\mu, P_F)$ satisfies the \textbf{flow-matching (FM) conditions} if for any bounded measurable function $f: \bar{\gS} \rightarrow \sR$ satisfying $f(s_0)=0$, we have
        \begin{equation}
            \int_{\bar{\gS}} f(s')\mu(ds') = \iint_{\gS \times \bar{\gS}} f(s')\mu(ds)P_{F}(s, ds').
            \label{eq:flow-matching-conditions}
        \end{equation}
    In which case, we say that $F$ is a \textbf{flow} over $G$.
\end{definition}

\explainer The condition of absolute continuity w.r.t.\ the reference kernel $\kappa$ indicates that the flow $F$ must follow the ``structure'' of the measurable pointed graph, by assigning positive measure only to parts of the space where the measure induced by $\kappa$ is also positive. The kernel $P_F$ can be represented through a density function with respect to $\kappa$, which represents a probability \emph{mass} (if the action space is discrete) or a probability \emph{density} (if it is continuous). This  allows to write conditions such as \cref{eq:flow-matching-conditions} using densities (Radon-Nikodym derivatives), thus providing practical loss functions to train GFlowNets. We expand on this point in \cref{sec:training-losses}.

\begin{definition}[Reward-matching conditions]
\label{def:reward-matching-conditions}
Let $F=(\mu, P_F)$ be a flow over $G$. Given a positive and finite measure $R$ over $\gX$, called the reward measure, 
 satisfying $R \ll \nu$, the flow $F$ is said to satisfy the \emph{reward-matching condition} w.r.t.\ $R$ if we have:%
    \vspace*{-1mm}\begin{equation}\vspace*{-1mm}
        R(dx)=\mu(dx)P_F(x,\{\bot\}).
        \label{eq:boundary-condition}
    \end{equation}
\end{definition}

The following theorem, proved in \cref{appendix:proofs}, ascertains that, similar to discrete GFlowNets, when the flow and reward matching conditions are satisfied, then recursively sampling from the Markov kernel $P_F$ starting from $s_0$ (until reaching $\bot$) yields samples from the normalized reward.
\begin{theorem}
    If $F=(\mu, P_F)$ is a flow over $G$, that satisfies the reward matching conditions \cref{eq:boundary-condition} w.r.t.\ a measure $R$, then the corresponding terminating state measure $P_\top$ (\cref{def:terminating-states-measures}) is a probability measure and satisfies for all $B \in \Sigma_{| \gX}$:
    \vspace*{-1mm}\begin{equation}\vspace*{-1mm}
        \label{eq:terminating-state-distribution}
        P_\top(B) = \frac{1}{R(\gX)} R(B).
    \end{equation}
    \vspace*{-1em}
    \label{thm:fm-implies-correct-sampling}
\end{theorem}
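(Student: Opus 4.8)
The plan is to sidestep any explicit formula for the flow measure $\mu$ and instead feed a cleverly chosen test function — built from the terminating-state probabilities of $P_F$ itself — into the flow-matching identity \cref{eq:flow-matching-conditions}, so that the bulk of the integrals cancel and what survives is exactly $R(B) = \text{const}\cdot P_\top(B)$.

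First I would extract what I need from finite absorption. From $P_F(s,-)\ll\kappa(s,-)$ for every $s$, an easy induction gives $P_F^n(s_0,-)\ll\kappa^n(s_0,-)$; hence $P_F^N(s_0,-)$ is carried by $\{\bot\}$, i.e.\ every $P_F$-trajectory started at $s_0$ hits $\bot$ within $N$ steps almost surely. Consequently the terminating state (the last non-$\bot$ state of a trajectory) is almost surely well-defined and lies in $\gX$, so $P_\top(\gX)=1$ and the ``probability measure'' part of the claim is done. Also, since $\gX=\{s:\kappa(s,\{\bot\})>0\}$ and $P_F(s,-)\ll\kappa(s,-)$, we get $P_F(s,\{\bot\})=0$ for $s\notin\gX$, so all flow into $\bot$ originates from $\gX$.

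Now fix $B\in\Sigma_{|\gX}$ and let $\phi_B:\bar\gS\to[0,1]$ be the measurable, bounded function whose value at $s$ is the $P_F$-probability (started at $s$) that the trajectory terminates with its terminating state in $B$, with $\phi_B(\bot):=0$. Decomposing over the first step yields the recursion $\phi_B(s)=\indicator_B(s)\,P_F(s,\{\bot\})+\int_\gS\phi_B(s')\,P_F(s,ds')$ for all $s\in\gS$, and by construction $\phi_B(s_0)=P_\top(B)$. Apply \cref{eq:flow-matching-conditions} with the test function $f:=\phi_B\cdot\indicator_{\bar\gS\setminus\{s_0\}}$, which is bounded, measurable, and vanishes at $s_0$. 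Its left-hand side is $\int_{\gS^\circ}\phi_B\,d\mu$; on the right-hand side one has $\int_{\bar\gS}f\,dP_F(s,-)=\phi_B(s)-\indicator_B(s)P_F(s,\{\bot\})-\phi_B(s_0)P_F(s,\{s_0\})$ by the recursion, and then splitting $\int_\gS\phi_B\,d\mu=\phi_B(s_0)\mu(\{s_0\})+\int_{\gS^\circ}\phi_B\,d\mu$ the terms $\int_{\gS^\circ}\phi_B\,d\mu$ cancel, leaving
\[
\int_B P_F(x,\{\bot\})\,\mu(dx)\;=\;\phi_B(s_0)\Big(\mu(\{s_0\})-\int_\gS P_F(s,\{s_0\})\,\mu(ds)\Big).
\]
By the reward-matching condition \cref{eq:boundary-condition} the left side equals $R(B)$, and $\phi_B(s_0)=P_\top(B)$; writing $Z$ for the constant in parentheses (independent of $B$), we get $R(B)=Z\,P_\top(B)$ for every $B\in\Sigma_{|\gX}$. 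Taking $B=\gX$ and using $P_\top(\gX)=1$ forces $Z=R(\gX)$, which is positive and finite since $R$ is, and dividing gives $P_\top(B)=R(B)/R(\gX)$.

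The step I expect to be the crux is justifying the cancellation: $\mu$ is only assumed $\sigma$-finite, so a priori $\int_{\gS^\circ}\phi_B\,d\mu$ could be infinite and may not be subtracted from both sides. One therefore needs first that $\mu$ has finite mass on the part of the space that terminating trajectories actually visit, i.e.\ that $\int_{\gS^\circ}\phi_B\,d\mu<\infty$. This is exactly where finite absorption \cref{eq:finitely-absorbing} and accessibility \cref{eq:accessibility} enter: iterating the flow-matching identity $N$ times expresses $\mu$ restricted to $\gS^\circ$ as a sum of at most $N$ pushforwards of the finite measure $\mu(\{s_0\})\,P_F(s_0,-)$ plus a remainder obtained by propagating part of $\mu|_{\gS^\circ}$ forward $N-1$ steps while staying in $\gS^\circ$; finite absorption makes that remainder carry no flow into $\bot$ (no state reachable from $s_0$ can remain in $\gS^\circ$ for $N-1$ more steps), while accessibility rules out $\mu$ hiding mass on a component disconnected from $s_0$. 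Once this integrability is secured the rest is bookkeeping with the recursion for $\phi_B$; one also notes that the self-loop term $\int_\gS P_F(s,\{s_0\})\,\mu(ds)$, which finite absorption in fact forces to vanish, is in any case harmlessly absorbed into the normalizing constant $Z$.
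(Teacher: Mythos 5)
Your core computation is correct and takes a genuinely different route from the paper's. The paper proceeds by explicitly identifying the flow on $\gS$: it proves (\cref{prop:incomplete-trajectory-termination}) that $\mu(B)=\mu(\{s_0\})\sum_{n}P_F^n(s_0,B)$ for all $B\in\Sigma_{|\gS}$ via a strong induction over a stratification of $\gS$ into layers $\gS_0,\dots,\gS_{N-1}$ (using the continuity of $s\mapsto P_F(s,B)$ to show each layer only receives flow from earlier layers), then combines this with \cref{lemma:terminating-state-dist-PF} (which rewrites $P_\top^n$ in terms of $P_F^{n-1}(s_0,-)$) and \cref{lem:z-equals-total-reward} ($\mu(\{s_0\})=R(\gX)$). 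You instead avoid computing $\mu$ altogether: the termination-probability function $\phi_B$ satisfies a Poisson-type recursion, and pairing $\phi_B\indicator_{\bar\gS\setminus\{s_0\}}$ with the flow-matching identity telescopes everything into $R(B)=\mu(\{s_0\})P_\top(B)$ in one step (your constant $Z$ reduces to $\mu(\{s_0\})$ since $P_F(s,\{s_0\})=0$ by \cref{lemma:no-incoming-so-edge-appendix} and absolute continuity). This is an elegant ``harmonic test function'' argument, and your direct derivation of $P_\top(\gX)=1$ from $P_F^N(s_0,-)=\delta_\bot$ is also cleaner than the paper's, which obtains it only at the end by setting $f\equiv 1$.

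The one genuine thin spot is exactly the one you flag: the cancellation of $\int_{\gS^\circ}\phi_B\,d\mu$ requires this integral to be finite, and your justification of that is a sketch rather than a proof. The assertions that ``finite absorption makes the remainder carry no flow into $\bot$'' and that ``accessibility rules out $\mu$ hiding mass on a component disconnected from $s_0$'' are precisely the nontrivial claims; note that \cref{eq:accessibility,eq:finitely-absorbing} are conditions on $\kappa$, not on $P_F$ or on $\mu$, so transferring them to the support of $\mu$ requires an argument (absolute continuity of $P_F^n(s_0,-)$ w.r.t.\ $\kappa^n(s_0,-)$ handles the forward direction, but controlling where $\mu$ itself lives is harder). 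Making your iteration-of-FM sketch rigorous amounts to reproving the paper's \cref{prop:incomplete-trajectory-termination} --- the layered induction with the continuity hypotheses is where the real work is --- so while your main mechanism is a genuine simplification of the back end of the proof, the technical front end cannot be bypassed and should be written out at that level of care.
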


\explainer $R(\gX)$, the reward measure taken over the set of all terminating states $\gX$, corresponds to the total reward or partition function $Z$ of GFlowNets. Certain conditions  (Def. \ref{def:flow-and-flow-matching} and \ref{def:reward-matching-conditions}) on $\mu$, which represents a state flow, and $P_F$, which represents a policy, imply that the marginal terminating distribution of the policy is proportional to the reward. These conditions correspond to the ``flow in = flow out'' condition at vertices of a discrete DAG.

\subsection{Detailed balance and trajectory balance}
\label{sec:detailed-balance-trajectory-balance}
In finite GFlowNets, the detailed balance conditions~\citep{bengio2021foundations} and the trajectory balance conditions~\citep{malkin2022trajectory} were converted into training objectives in order to sample from a target unnormalized distribution. In this section, we present analogous conditions for general measurable pointed graphs.
\begin{definition}
    Let $\mu$ be a $\sigma$-finite measure over $(\bar{\gS}, \Sigma)$ such that $\mu \ll \nu$, $P_F$ a forward kernel over $G$, and $P_B$ a transition kernel on $(\bar{\gS}, \Sigma)$ such that:
    \begin{enumerate}[label=(\arabic*),left=0pt,nosep,]
        \item $P_B(s, -) \ll \kappa^b(s, -)$ for every $s \in \bar{\gS}$,
        \item $s \mapsto P_B(s, B)$ is continuous for every $B \in \Sigma$,
        \item $P_B(s, -)$ is a probability measure for every $s \neq s_0$,
    \end{enumerate}
     $P_B$ is then said to be a \textbf{backward kernel} over $G$. We say that $(\mu, P_{F}, P_{B})$ satisfy the \textbf{detailed balance (DB) conditions} if for any bounded measurable function $f: \gS \times \bar{\gS} \rightarrow \sR$ satisfying $f(s, s_0) =0$ for every $s \in \gS$, we have
    \begin{align}
        &\iint_{\gS \times \bar{\gS}} f(s, s') \mu(ds)P_{F}(s, ds') \label{eq:db-conditions}\\
        &\qquad \qquad = \iint_{\gS \times \bar{\gS}} f(s, s')\mu(ds')P_{B}(s', ds).\nonumber
    \end{align}
    \label{def:db-conditions}
        \vspace{-1em}
\end{definition}
The following proposition, proved in \cref{appendix:proofs}, shows an equivalence between the DB and FM conditions.
\begin{proposition}
    If $(\mu, P_{F}, P_{B})$ satisfy the detailed balance conditions in \cref{def:db-conditions}, then $F = (\mu, P_{F})$ satisfies the flow-matching conditions in \cref{def:flow-and-flow-matching} and is thus a flow.
    \label{prop:DB-implies-FM}
\end{proposition}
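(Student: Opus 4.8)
The plan is to derive the flow-matching identity \cref{eq:flow-matching-conditions} from the detailed-balance identity \cref{eq:db-conditions} by a single substitution of the test function. Given a bounded measurable $g : \bar\gS \to \sR$ with $g(s_0) = 0$, I would apply \cref{def:db-conditions} to $f(s, s') \defeq g(s')$ on $\gS \times \bar\gS$: this $f$ is bounded and measurable, and $f(s, s_0) = g(s_0) = 0$ for every $s \in \gS$, so it is admissible. It yields
\[
\iint_{\gS \times \bar\gS} g(s')\,\mu(ds)\,P_F(s, ds')
= \iint_{\gS \times \bar\gS} g(s')\,\mu(ds')\,P_B(s', ds),
\]
whose left-hand side is verbatim the right-hand side of \cref{eq:flow-matching-conditions}. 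Thus the claim reduces to showing that the right-hand side above equals $\int_{\bar\gS} g(s')\,\mu(ds')$.

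To that end I would integrate out the innermost variable: since $g$ is bounded and all measures are $\sigma$-finite (split $g$ into positive and negative parts and use Tonelli's theorem), the right-hand side becomes $\int_{\bar\gS} g(s')\,P_B(s', \gS)\,\mu(ds')$. It then suffices to prove that $P_B(s', \gS) = 1$ for $\mu$-almost every $s' \neq s_0$, because $\{s_0\}$ is measurable (it is clopen by \cref{def:measurable_pointed_graph}) and $g(s_0) = 0$, so the $s' = s_0$ part contributes nothing. As $P_B(s', -)$ is a probability measure on $\bar\gS$ whenever $s' \neq s_0$, this is equivalent to $P_B(s', \{\bot\}) = 0$ for $\mu$-a.e.\ $s'$. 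Here the structural axioms come in: evaluating \cref{eq:backward-kernel} on a measurable rectangle such as $\{\bot\} \times \gS$ — which contains neither $(s_0, s_0)$ nor $(\bot, \bot)$ — and using the absorption property \cref{eq:absorption}, namely $\kappa(\bot, -) = \delta_\bot$ so that $\kappa(\bot, \gS) = 0$, gives $\int_{\gS} \kappa^b(s', \{\bot\})\,\nu(ds') = 0$. Strict positivity of $\nu$ then forces $\kappa^b(s', \{\bot\}) = 0$ for $\nu$-a.e.\ $s'$, and absolute continuity $P_B(s', -) \ll \kappa^b(s', -)$ together with $\mu \ll \nu$ promotes this to $P_B(s', \{\bot\}) = 0$ for $\mu$-a.e.\ $s'$.

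Assembling the pieces, $\int_{\bar\gS} g(s')\,P_B(s', \gS)\,\mu(ds') = \int_{\bar\gS \setminus \{s_0\}} g(s')\,\mu(ds') = \int_{\bar\gS} g(s')\,\mu(ds')$, and combining with the displayed detailed-balance equation gives $\int_{\bar\gS} g(s')\,\mu(ds') = \iint_{\gS \times \bar\gS} g(s')\,\mu(ds)\,P_F(s, ds')$ for every admissible $g$ — precisely \cref{eq:flow-matching-conditions}. Hence $F = (\mu, P_F)$ is a flow.

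I expect the only genuinely delicate point to be the bookkeeping at the two distinguished states: confirming that the backward kernel assigns no mass to $\bot$ (so the $s$-integration may be restricted to $\gS$ without loss) and isolating the $s_0$-term via $g(s_0) = 0$ (equivalently, noting $P_B(s_0, -) = 0$, which follows from \cref{eq:trivial-measure}). Everything else is the substitution together with Fubini--Tonelli.
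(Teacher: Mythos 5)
Your proof is correct and follows essentially the same route as the paper's: substitute $f(s,s')=g(s')$ into the detailed-balance identity, integrate out the backward variable, and use that $P_B(s',-)$ is a probability measure assigning no mass to $\{\bot\}$ (and that $g(s_0)=0$ kills the $s_0$ term). Your derivation of $P_B(s',\{\bot\})=0$ from \cref{eq:backward-kernel} and \cref{eq:absorption} is in fact spelled out more carefully than in the paper, which asserts this step tersely; the only cosmetic remark is that the $\nu$-a.e.\ vanishing of $\kappa^b(\cdot,\{\bot\})$ already follows from nonnegativity of the integrand, without invoking strict positivity of $\nu$.
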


\begin{definition}
Let $P_F$ be a forward kernel over $G$,  $P_B$ a backward  kernel over $G$, and $Z \in \mathbb{R}_+$. Let $R$ be a positive finite measure on $\gX$.  The triple $(Z, P_F, P_B)$ satisfies the \textbf{trajectory balance (TB) conditions} w.r.t.\ $R$ if for any $n\geq0$ and any bounded measurable function $f: \bar{\gS}^{n+2} \rightarrow \sR$:
   \vspace*{-1mm} \begin{align}\vspace*{-1mm}
        &\hspace{-4mm}\int_{\bar{\gS}^{n+2}} Zf(s,\overrightarrow{s_{1:n+1}}) \indicator_{s_n \neq \bot, s_{n+1} =\bot} P_F^{\otimes{n+1}}(s_0, ds\,\overrightarrow{ds_{1:n+1}})\label{eq:tb-conditions} \\
        &= \int_{\bar{\gS}^{n+1}} \indicator_{s=s_0}f(s,\overrightarrow{s_{1:n}},\bot)R(ds_n)P_{B}^{\otimes{n}}(s_n, ds'\,\overrightarrow{ds_{n-1:1}}\,ds),\nonumber
    \end{align}
    where $\overrightarrow{s_{1:n}}$ denotes $(s_1, \dots, s_n)$ and $\overrightarrow{ds_{1:n}}$ denotes $ds_1\dots ds_n$.
    \label{def:tb-conditions}
\end{definition}

The following proposition, proved in \cref{appendix:proofs}, shows an equivalence between the TB and both the FM and reward matching conditions.
\begin{proposition}
    If $(Z, P_F, P_B)$ satisfy the TB conditions \cref{eq:tb-conditions} w.r.t.\ a measure $R$, then $F=(\mu, P_B)$, where $\mu$ is defined by:
    \begin{enumerate}[label=(\arabic*),left=0pt,nosep,]
    \item $\mu(\{\bot\})=\mu(\{s_0\}) = Z$
    \item $\forall B \in \Sigma_{| \gS}$:
    $\mu(B) = \mu(\{s_0\})\sum_{n=0}^{\infty} P_F^n(s_0, B) $
    \end{enumerate}
    satisfies both the flow-matching conditions \cref{eq:flow-matching-conditions} and the reward matching conditions \cref{eq:boundary-condition} w.r.t.\ $R$.
    \label{prop:TB-implies-FM-RM}
\end{proposition}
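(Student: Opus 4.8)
Here $F=(\mu,P_F)$. The idea is that $\mu$ is, up to the atom at $\bot$, exactly $Z$ times the expected occupancy measure of $P_F$ run from $s_0$: writing $\mu_n(B):=Z\,P_F^n(s_0,B\cap\gS)$, we have $\mu_0=Z\delta_{s_0}$, $\mu|_\gS=\sum_{n\ge 0}\mu_n$, and $\mu(\{\bot\})=Z$. I would first record the bookkeeping that makes this well-posed: (a) $P_F(\bot,-)=\delta_\bot$, since $P_F(\bot,-)\ll\kappa(\bot,-)=\delta_\bot$ by \cref{eq:absorption} and $P_F(\bot,-)$ is a probability measure; (b) iterating $P_F(s,-)\ll\kappa(s,-)$ gives $P_F^n(s_0,-)\ll\kappa^n(s_0,-)$, so finite absorption \cref{eq:finitely-absorbing} forces $P_F^n(s_0,-)=\delta_\bot$ for $n\ge N$, whence $\mu|_\gS$ is \emph{finite} ($\mu|_\gS(\gS)\le NZ$) and $\mu$ is finite, hence $\sigma$-finite; (c) $\mu\ll\nu$, checked by iterating $P_F(s,-)\ll\kappa(s,-)$ together with the compatibility of $\kappa,\kappa^b,\nu$ in \cref{eq:backward-kernel} (cf.\ \cref{appendix:backward-kernels}); and (d) no forward trajectory revisits $s_0$, so $P_F^n(s_0,\{s_0\})=\indicator_{n=0}$ and the two clauses defining $\mu$ agree on $\{s_0\}$ (this uses \cref{eq:trivial-measure} and \cref{eq:backward-kernel}).

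\textbf{Flow matching.} I claim $(\mu,P_F)$ satisfies \cref{eq:flow-matching-conditions} already from this occupancy structure, \emph{without} invoking the TB hypothesis. For bounded measurable $f$ with $f(s_0)=0$, push $\mu_n$ one step through $P_F$; by Chapman--Kolmogorov and peeling off the $\bot$-part using (a),
\[
\iint_{\gS\times\bar\gS}f(s')\,\mu_n(ds)\,P_F(s,ds')=\int f\,d\mu_{n+1}+Zf(\bot)\bigl(P_F^{n+1}(s_0,\{\bot\})-P_F^{n}(s_0,\{\bot\})\bigr).
\]
Summing over $n\ge 0$: the first terms telescope to $\sum_{m\ge 1}\int f\,d\mu_m=\int_{\bar\gS}f\,d\mu-Zf(\bot)$ (the subtracted term is the $\bot$-atom of $\mu$; the $m=0$ term vanishes since $f(s_0)=0$), and the second terms telescope to $Zf(\bot)\lim_{M\to\infty}P_F^{M+1}(s_0,\{\bot\})=Zf(\bot)$ by finite absorption. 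The two copies of $Zf(\bot)$ cancel, so the right-hand side of \cref{eq:flow-matching-conditions} equals $\int_{\bar\gS}f\,d\mu$, its left-hand side. Hence $F=(\mu,P_F)$ is a flow.

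\textbf{Reward matching.} By the construction of the terminating state measure (\cref{sec:terminating-state-measure}, \cref{def:terminating-states-measures}), for $B\in\Sigma_{|\gX}$,
\[
\int_B\mu(dx)\,P_F(x,\{\bot\})=Z\sum_{n\ge 0}\int_B P_F^n(s_0,dx)\,P_F(x,\{\bot\})=Z\,P_\top(B),
\]
where $P_\top$ is the terminating measure of $P_F$; so \cref{eq:boundary-condition} is equivalent to $Z\,P_\top=R$ on $\gX$. To obtain this from \cref{eq:tb-conditions}, fix $B\in\Sigma_{|\gX}$ and for each $n\ge 0$ apply the TB identity with $f(s,\overrightarrow{s_{1:n+1}}):=\indicator_{s=s_0}\indicator_{s_n\in B}$. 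On the left, $\indicator_{s_n\neq\bot,\,s_{n+1}=\bot}$ and the $\delta_{s_0}$-start collapse the integral to $Z\int_B P_F^n(s_0,ds_n)P_F(s_n,\{\bot\})$, which sums over $n$ to $Z\,P_\top(B)$. On the right, the integral becomes $\int_B R(ds_n)$ times the $P_B^{\otimes n}$-mass of length-$n$ backward paths from $s_n$ to $s_0$; summing over $n$, that backward mass totals $1$, because $P_B(s,-)$ is a probability measure for $s\neq s_0$ (and $P_B(s_0,-)\ll\kappa^b(s_0,-)=0$ by \cref{eq:trivial-measure}) and every $\kappa^b$-path out of a terminating state reaches $s_0$ within $N-1$ steps. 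So the right-hand side sums to $R(B)$, giving $Z\,P_\top(B)=R(B)$; as $B$ was arbitrary, \cref{eq:boundary-condition} follows, and with the flow-matching step this proves the proposition.

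\textbf{Main obstacle.} The telescoping in the flow-matching step is routine; the crux is the final claim of the reward-matching step --- that the total backward-trajectory mass leaving a terminating state is \emph{exactly} $1$. This needs more than ``$P_B(s,-)$ is a probability measure'': one must show $P_B$-chains started from $\gX$ reach $s_0$ almost surely \emph{and} within $N-1$ steps, so that the finite sum over $n$ in \cref{eq:tb-conditions} collects all of them. Both facts come from converting the forward finite-absorption property \cref{eq:finitely-absorbing} into a ``co-absorption'' property for $\kappa^b$, using the reversal identity \cref{eq:backward-kernel} and accessibility \cref{eq:accessibility} to exclude backward excursions into regions not co-accessible to $s_0$; making that transport rigorous --- together with the verification of $\mu\ll\nu$ in (c) --- is the technical heart of the argument, the rest being manipulation of $\sigma$-finite kernels.
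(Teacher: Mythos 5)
Your proof is correct and follows essentially the same route as the paper's: flow matching falls out of the occupancy-measure definition of $\mu$ alone (your telescoping is a more careful version of the paper's one-step shift $\sum_n P_F^{n+1}=\sum_n P_F^{n}$ under $f(s_0)=0$, with the $\bot$-atom handled explicitly), and reward matching is the TB identity applied to indicator test functions, which is the paper's \cref{lemma:integrating-PF-PB}. The one step you defer --- that the total backward mass $\sum_{n} P_B^n(s,\{s_0\})$ equals $1$ --- is exactly the paper's \cref{prop:p-b-total}, proved there by stratifying $\gS$ by backward distance to $s_0$ and strong induction, precisely as you sketch.
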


\explainer Analogues of the DB and TB conditions for discrete GFlowNets were stated and shown to imply the FM conditions. In the next section, they will be used to construct training objectives for parametric policies.

\subsection{Training losses for GFlowNets}
\label{sec:training-losses}
Above, we have presented three conditions under which a sampler based on a Markov kernel $P_F$ samples from the normalized version of a given reward measure. In practice, similar to discrete GFlowNets, the objects of interest ($\mu$, $P_F$, $P_B$, $Z$) are parametrized by a vector $\theta$, and the goal is to learn $\theta$ using gradient-based learning. In this section, we derive losses corresponding to the previous objectives.

We recall the Radon-Nikodym theorem that states that for any two given $\sigma$-finite measures $p$ and $q$ on a measurable space $(U, \gU)$ satisfying $p \ll q$, there exists a measurable function $f: U \ra \R_+$, which is unique up to a set of measure zero under $q$, called the density or the Radon-Nikodym derivative of $p$ w.r.t.\ $q$, such that:
\vspace{-2mm}\begin{align}\vspace{-2mm}
    \forall A \in \gU, \ p(A) = \int_A f(u) q(du).
\end{align}

This theorem is convenient as it allows to bypass the need to define the measures $\mu$, $P_F(s, -)$, $P_B(s, -)$ on every measurable set, and only requires parametrizing the corresponding densities (w.r.t.\ $\nu$, $\kappa(s, -)$, and $\kappa^b(s, -)$ respectively).

\begin{definition}[Losses]
    Let $u: \gS \ra \R_+$, $p_{F}: \gS \times \bar{\gS} \ra \sR_{+}$, and $p_{B}: \gS \times \gS \ra \sR_{+}$ be three functions, and $Z \in \R_+$ a scalar, all parametrized by a vector $\theta$, and satisfying for every $\theta$:
    \vspace{-2mm}\begin{align}\vspace{-2mm}
        &\forall s \in \bar{\gS}, \ \int_{\bar{\gS}} p_{F}(s, s'; \theta) \kappa(s, ds') = 1 \\ 
        &\forall s' \in \bar{\gS}, \ \int_{\bar{\gS}} p_{B}(s', s; \theta) \kappa^b(s', ds) = 1
    \end{align}
    The \textbf{flow-matching} (FM) loss is defined for every $s' \in \gS$ as:
    \vspace{-2mm}\begin{align*}\vspace{-2mm}
        L_{FM}(s'; \theta) = \left( \log \frac{\int_{\gS} u(s; \theta) p_{F}(s, s'; \theta) \kappa^b(s', ds)}{u(s'; \theta)  } \right)^2
    \end{align*}
    The \textbf{detailed balance} (DB) loss is defined for every $(s, s') \in \gS \times \gS$ as:
    \vspace{-2mm}\begin{align*}\vspace{-2mm}
        L_{DB}(s, s'; \theta) = \left(\log \frac{u(s; \theta) p_{F}(s, s'; \theta)}{u(s'; \theta) p_{B}(s', s; \theta)} \right)^2
    \end{align*}
    Denoting by $r$ the density of the reward measure $R$ w.r.t.\ the reference measure $\nu$, the \textbf{reward-matching} (RM) loss is defined for any $x \in \gX$ as:
    \vspace{-2mm}\begin{align*}\vspace{-2mm}
        L_{RM}(x;\theta) = \left(\log \frac{u(x;\theta)p_{F}(x, \bot; \theta)}{r(x)}\right)^{2}
    \end{align*}
    Finally, the \textbf{trajectory balance} (TB) loss is defined for every complete trajectory $\tau=(s_0, s_1, \dots, s_n, s_{n+1}) \in  \{s_0\} \times \gS^{n} \times \{\bot\}$ (also denoted $\overrightarrow{s_{0:n+1}}$) where $s_n \in \gX$ and $s_{n+1} = \bot$ as:
    \vspace{-2mm}\begin{align*}\vspace{-2mm}
        L_{TB}^n(\tau; \theta) = \left(\log \frac{Z(\theta) \prod_{t=0}^n p_{F}(s_{t}, s_{t+1}; \theta)}{r(s_n) \prod_{t=0}^{n-1} p_{B}(s_{t+1}, s_{t}; \theta)}  \right)^2.
    \end{align*}
\end{definition}
Note that one could derive in a similar fashion a subtrajectory balance loss, similar to the one used in discrete GFlowNets~\citep{madan2022learning}.

\explainer The above losses resemble discrete GFlowNet losses. When the action space is discrete, and the reference measures are the counting measures over vertices of a DAG, $p_F(s,s')$ is a transition probability $P_F(s'|s)$. When it is continuous, it represents a conditional probability density over $s'$, given $s$.

Conversely, from functions $u(-;\theta), p_F(-;\theta), p_B(-;\theta)$, we can define a measure $\mu(-;\theta)$ on $(\bar{\gS}, \Sigma)$ whose density w.r.t.\  $\nu$ is $u$ and forward and backward kernels $P_F(-; \theta), P_B(-;\theta)$ such that $p_F(s, -; \theta)$ and $p_B(s', -; \theta)$ are their densities of w.r.t.\ $\kappa(s, -)$ and $\kappa^b(s, -)$, respectively\footnote{The measures at $\bot$ are irrelevant.}.

\explainer The following theorem, proved in \cref{appendix:proofs}, ensures that, similar to the discrete case, minimizing the losses above leads to samplers of the right probability measure.
\begin{theorem}
    \begin{enumerate}[left=0pt,nosep,label=(\arabic*),wide]
    \item If $L_{FM}(-;\theta) = 0$ $\nu$-almost surely, then $F=(\mu, P_{F})$ is a flow (i.e. satisfies the flow-matching conditions in \cref{def:flow-and-flow-matching}).
    \item If $L_{DB}(-;\theta) = 0$ $\nu \otimes \kappa$-almost surely, then $(\mu, P_{F}, P_{B})$ satisfy the detailed balance conditions in \cref{def:db-conditions}.
    \item If $L_{RM}(-;\theta)=0 $ $\nu_{|\gX}$-almost surely, then $(\mu, P_F)$ satisfies the reward matching conditions in \cref{eq:boundary-condition}.
    \item If $L_{TB}^n(-; \theta)=0$ $((\nu \otimes \kappa^{\otimes n+1})_{|\{s_0\}\times \gS^n \times \{\bot\}}$)-almost surely for every $n \geq 0$, then $(Z\nu(\{s_0\}), P_F, P_B)$ satisfy the trajectory balance condition in \cref{def:tb-conditions}.
    \end{enumerate}
    \label{thm:zero-losses-implies-conditions}
\end{theorem}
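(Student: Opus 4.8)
The plan is to handle the four parts by a common recipe: a squared-logarithm loss vanishing means its argument equals $1$, i.e.\ a ratio of Radon--Nikodym densities equals $1$, so the associated densities coincide almost everywhere with respect to the measure named in each item; I would then integrate that pointwise identity against an arbitrary bounded measurable test function, unfold the kernels into their densities via Tonelli's theorem, and rewrite one side using the compatibility relation \cref{eq:backward-kernel} between $\nu\otimes\kappa$ and $\nu\otimes\kappa^b$. Throughout, $\mu$, $P_F$, $P_B$ are the measure and kernels canonically attached to $u$, $p_F$, $p_B$ as in the paragraph preceding the theorem, so that $\mu(ds)=u(s)\,\nu(ds)$, $P_F(s,ds')=p_F(s,s')\,\kappa(s,ds')$, $P_B(s',ds)=p_B(s',s)\,\kappa^b(s',ds)$; since each of these is absolutely continuous with respect to the measure against which the relevant loss vanishes, the a.e.\ identities transfer into the integrals. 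A preliminary observation I would record is that applying \cref{eq:backward-kernel} to $B=\{\bot\}\times\gS$ together with $\kappa(\bot,-)=\delta_\bot$ forces $\kappa^b(s',\{\bot\})=0$ for $\nu$-a.e.\ $s'$, and that finite absorption \cref{eq:finitely-absorbing} forces $\kappa(s_0,\{s_0\})=0$ (because $\kappa^N(s_0,\{s_0\})\ge\kappa(s_0,\{s_0\})^N$ while $\kappa^N(s_0,-)$ is supported on $\{\bot\}$); these two facts are what let me neglect the exceptional diagonal points of \cref{eq:backward-kernel} throughout.

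For (1): $L_{FM}=0$ $\nu$-a.s.\ gives $\int_\gS u(s)p_F(s,s')\,\kappa^b(s',ds)=u(s')$ for $\nu$-a.e.\ $s'$. Given $f$ with $f(s_0)=0$, I would write the right side of \cref{eq:flow-matching-conditions} as $\iint f(s')u(s)p_F(s,s')\,\kappa(s,ds')\,\nu(ds)$, replace $\nu\otimes\kappa$ by $\nu\otimes\kappa^b$ via \cref{eq:backward-kernel} (the atom $(s_0,s_0)$ contributing nothing because $f(s_0)=0$, and $(\bot,\bot)$ lying outside $\gS\times\bar\gS$), and then collapse the inner $s$-integral with the displayed identity to get $\int f\,d\mu$. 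Part (2) is the same, minus the inner integration: $L_{DB}=0$ $\nu\otimes\kappa$-a.s.\ gives $u(s)p_F(s,s')=u(s')p_B(s',s)$ $(\nu\otimes\kappa)$-a.e.; substituting this into the unfolded left side of \cref{eq:db-conditions} and converting $\nu\otimes\kappa$ to $\nu\otimes\kappa^b$ (the atom $(s_0,s_0)$ killed by $f(s,s_0)=0$) yields exactly the right side. Part (3) is essentially immediate: $L_{RM}=0$ $\nu_{|\gX}$-a.s.\ gives $u(x)p_F(x,\bot)=r(x)$ for $\nu$-a.e.\ $x\in\gX$, and since $P_F(x,\{\bot\})=p_F(x,\bot)\,\kappa(x,\{\bot\})$ with $\kappa(x,\{\bot\})=1$ on $\gX$ by \cref{eq:kappa-terminal-1}, integrating over $B\in\Sigma_{|\gX}$ gives $\mu(dx)P_F(x,\{\bot\})=r(x)\,\nu(dx)=R(dx)$.

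Part (4) is the crux and the main obstacle, since it requires iterating \cref{eq:backward-kernel} along a whole trajectory. From $L_{TB}^n=0$ a.s.\ I get $Z\prod_{t=0}^n p_F(s_t,s_{t+1})=r(s_n)\prod_{t=0}^{n-1}p_B(s_{t+1},s_t)$ for $(\nu\otimes\kappa^{\otimes n+1})_{|\{s_0\}\times\gS^n\times\{\bot\}}$-a.e.\ $(s_0,\dots,s_{n+1})$. In the left side of \cref{eq:tb-conditions} I would unfold $P_F^{\otimes n+1}$ into $\prod p_F\cdot\prod\kappa$; note that $\indicator_{s_n\neq\bot,\,s_{n+1}=\bot}$ and $\kappa(\bot,-)=\delta_\bot$ force $s_1,\dots,s_n\in\gS$; use \cref{eq:kappa-terminal-1} to replace the last factor $\kappa(s_n,\{\bot\})$ by $1$ (so that $\indicator_{s_n\in\gX}$ appears); substitute the density identity, which introduces a factor $Z^{-1}$; and then apply \cref{eq:backward-kernel} successively to the pairs $(s_0,s_1),(s_1,s_2),\dots,(s_{n-1},s_n)$ to rewrite $\delta_{s_0}(ds_0)\prod_{t=0}^{n-1}\kappa(s_t,ds_{t+1})$ as $\nu(\{s_0\})^{-1}\,\nu(ds_n)\prod_{t=0}^{n-1}\kappa^b(s_{t+1},ds_t)$, restricted to $s_0\in\{s_0\}$. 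Reassembling with $r(s_n)\,\nu(ds_n)=R(ds_n)$ and $p_B(s_{t+1},s_t)\,\kappa^b(s_{t+1},ds_t)=P_B(s_{t+1},ds_t)$ reproduces the right side of \cref{eq:tb-conditions} up to the scalar $Z^{-1}\nu(\{s_0\})^{-1}$, i.e.\ \cref{eq:tb-conditions} holds with $Z$ replaced by $Z\nu(\{s_0\})$, which is the claim. The delicate points, and where I expect the bulk of the work, are: checking $\sigma$-finiteness so Tonelli applies to the iterated kernels; verifying that at each application of \cref{eq:backward-kernel} the point $(\bot,\bot)$ is excluded by the indicator and $(s_0,s_0)$ carries no mass because $\kappa(s_0,\{s_0\})=0$; and carrying the $\nu(\{s_0\})$ factor cleanly, which is exactly why the theorem states the conclusion for $Z\nu(\{s_0\})$ rather than $Z$.
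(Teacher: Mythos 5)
Your proposal follows essentially the same route as the paper's proof: translate the vanishing squared-log losses into $\nu$- (resp.\ $\nu\otimes\kappa$-) almost-everywhere density identities, integrate against the admissible test functions, swap $\nu\otimes\kappa$ for $\nu\otimes\kappa^b$ via \cref{eq:backward-kernel} (iterated along the trajectory for TB, which is exactly the paper's induction $\nu\otimes\kappa^{\otimes n}=\nu\otimes\kappa^{b,\otimes n}$), and reassemble the measures, with the $\nu(\{s_0\})$ factor accounting for the stated rescaling of $Z$. The only additions are your explicit treatment of part (3) and of the exceptional points $(s_0,s_0)$, $(\bot,\bot)$, which the paper leaves implicit; both are correct.
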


An important consequence of \cref{thm:zero-losses-implies-conditions} is that if we can find density functions that achieve zero loss using any of the above objectives almost surely, in addition to the reward-matching loss, then we obtain a way to sample terminating states (i.e., elements of $\gX$) proportionally to the reward measure $R$, according to \cref{thm:fm-implies-correct-sampling}.

\paragraph{Training generalized GFlowNets.} 
The FM, DB, and TB losses can be minimized using states (resp.\ pairs of subsequent states, trajectories) obtained from trajectories sampled from a training policy $\pi$, which can be $P_F$ itself (\emph{on-policy}), or a modification of it to encourage exploration (\emph{off-policy}). Thm.~\ref{thm:zero-losses-implies-conditions} suggests that the parameters $\theta$ could be updated with stochastic gradients $\E_{\tau=\overrightarrow{s_{0:n+1}} \sim \pi}[\nabla_\theta\gL]$, where $\gL$ is  $\sum_{t=1}^n   L_{FM}(s_t; \theta) + \alpha  L_{RM}(s_n; \theta)$, or $\sum_{t=0}^n  L_{DB}(s_t, s_{t+1}; \theta) + \alpha  L_{RM}(s_n; \theta)$ or $ L_{TB}(\tau; \theta)$.

\section{Experiments}
\label{sec:experiments}

\begin{figure}[t]
    \centering
        \raisebox{1.3mm}{\includegraphics[width=0.36\linewidth]{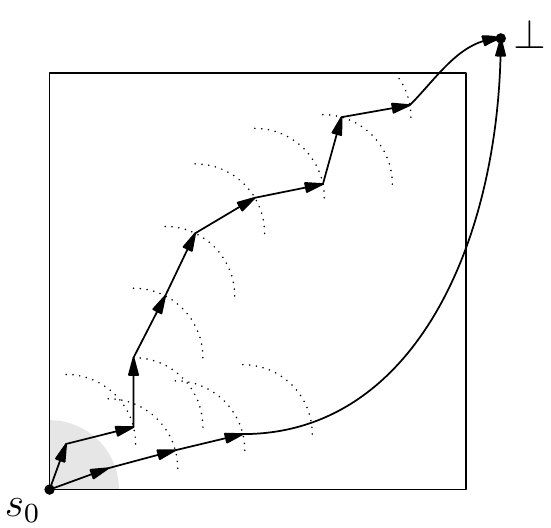}}
        \includegraphics[width=0.63\linewidth]{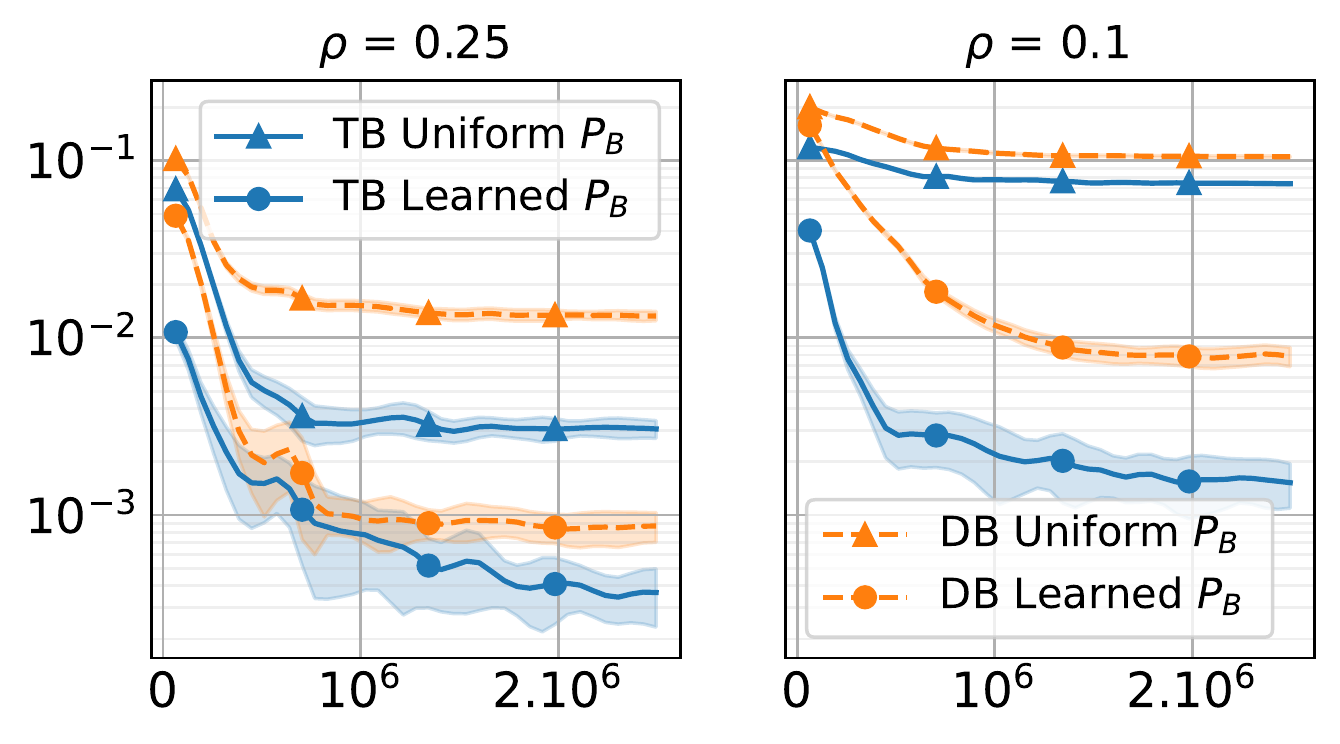}\\\vspace{-1em}
    \caption{\textbf{(a)} Measurable pointed graph structure of the environment in \cref{sec:continuous-grid}: starting at $s_0$, the first action makes a step within the grey quarter-disc, and subsequent actions make steps of a fixed size or terminate. \textbf{(b)} Evolution of the JSD during training of TB and DB, with both a uniform $P_B$ and a learned $P_B$, for $\rho=0.25$; \textbf{(c)} $\rho=0.1$. x-axis is the number of sampled trajectories. Shaded areas represent standard deviations across 6 runs.}
    \vspace*{-4mm}
    \label{fig:grid-structure-jsd}
\end{figure}

\subsection{A synthetic continuous environment}
\label{sec:continuous-grid}

In this section, we study a synthetic environment inspired by the hypergrid environment~\citep{bengio2021flow,malkin2022trajectory,malkin2022gfnhvi}, with varying trajectory lengths and a pointed graph structure imposing a mixed discrete and continuous probability measure for the policy $P_F$. Code for these experiments can be found at \url{https://github.com/saleml/continuous-gfn}.

\paragraph{Structure of the state space.} The measurable pointed graph is specified by $\gS = [0, 1]^2$, and $s_0 = (0, 0)$. A hyperparameter $\rho$, called the step size, controls the maximal trajectory length. $\kappa(s_0, -)$ is the Lebesgue measure on $D_0$, the northeastern quarter disk of radius $\rho$ centered at $s_0$. When $s \neq s_0$, and $||s||<1 - \rho$, $\kappa(s, -)$ is the sum of the one-dimensional Lebesgue arclength measure on $C_s^+$ (the intersection of the northeastern quarter circle of radius $\rho$ centered at $s$ and $\gS$)
and the Dirac measure $\delta_{\bot}$. Finally, when $||s||>1 - \rho$, $\kappa(s, -) = \delta_{\bot}$. The forward structure is depicted in \cref{fig:grid-structure-jsd}(a).
The backward reference kernel $\kappa^b$ is defined similarly.

The reference measure $\nu$ is the sum of the Lebesgue measure on $\gS$, $\delta_{s_0}$, and $\delta_{\bot}$. All states besides $s_0$ are terminating. 

The reward measure $R$ on $\gX$ is specified by a density function $r$  depicted in \cref{fig:reward_kde_hypergrid}(a). The densities $p_F$ and $p_B$ are parametrized with mixtures of Beta distributions for the continuous components. 

\begin{figure}[t]
    \centering
    \includegraphics[width=\linewidth]{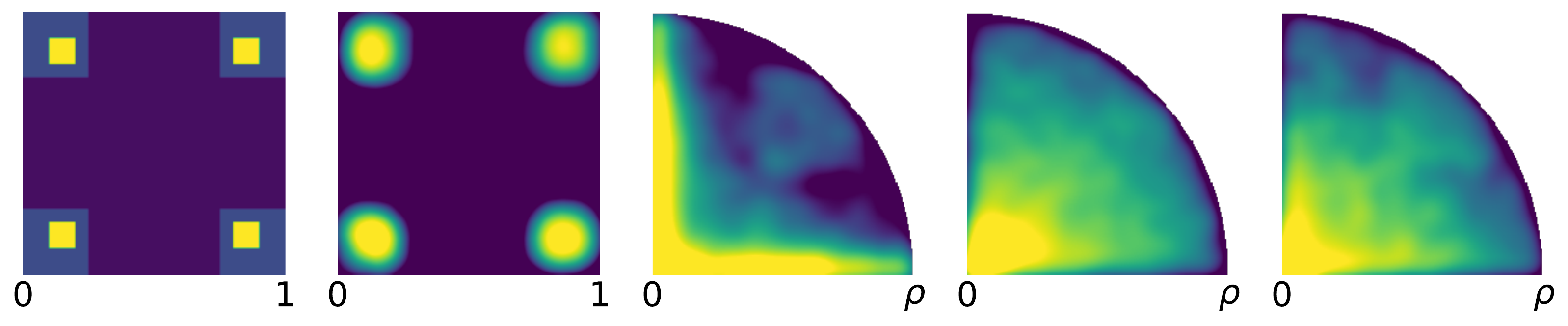}
    \vspace{-0.8cm}
    \caption{\textbf{(a)} Reward density in $[0, 1]^2$. \textbf{(b)} KDE fit on terminating states of the models trained with TB, $\rho=0.25$. \textbf{(c)} KDE fit on samples from the reward, brought back to $D_0$ using a \textbf{uniform} $P_B$, corresponding to what $P_F(s_0, -)$ needs to be in order to satisfy DB or TB. A richer search space for the densities $p_F(s, -)$ is required to fit this distribution. \textbf{(d)} $P_F(s_0, -)$ for a trained model with learnable $P_B$. \textbf{(e)} The measure induced by a trained $P_B$ on $D_0$, which matches the learned $P_F(s_0, -)$ in (d).}
    \label{fig:reward_kde_hypergrid}
\end{figure}

In \cref{fig:grid-structure-jsd}(b,c), we compare DB and TB on two versions of the environment ($\rho \in \{0.1, 0.25\}$), with both a uniform and a learned $P_B$, using the Jensen-Shannon divergence (JSD, \cref{sec:how-to-approximate-JSD}) between the learned terminating state distribution and the target distribution as an evaluation metric. The results confirm the findings of~\citet{malkin2022trajectory} on the discrete grid domain: the TB loss is more efficient in terms of credit assignment, as it learns to model the target distribution faster and more precisely than DB, and the environment with longer trajectory lengths is harder to model. Additionally, learning a backward policy significantly improves the learning curves of both methods. A justification of the importance of learning in a backward policy is provided in \cref{fig:reward_kde_hypergrid}(c,d,e). \Cref{fig:reward_kde_hypergrid}(b) shows a KDE plot fit on terminating states sampled from the model trained with TB on the $\rho=0.25$ domain. We provide more details in \cref{app:synthetic-continuous}.

\subsection{Low-dimensional stochastic control}
\label{sec:pis}

\begin{figure}[t]
\centering
\includegraphics[width=\linewidth]{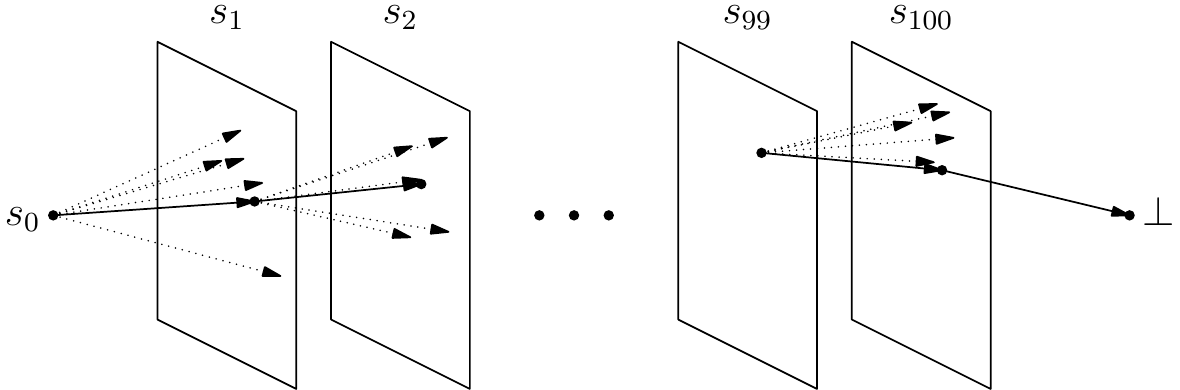}
\caption{The GFlowNet state space for stochastic control tasks. The solid arrows show a possible sampling trajectory and the dashed arrows show other possible actions, i.e., point to other states in the support of the reference kernel $\kappa$.}
\label{fig:sde_action_space}
\end{figure}

In this section, we show how generalized GFlowNets with a state space of a particular form can be used to learn (discretizations of) stochastic differential equations so as to sample from a black-box target density. We bridge two recent works: \citet{zhang2022path}, from which we borrow the datasets and many parts of the experimental setup, and \citet{malkin2022gfnhvi}, where various algorithms for training stochastic samplers in discrete spaces were considered and whose claims we validate in the continuous case.

We restate the problem considered by \citet{zhang2022path} in GFlowNet terms. A reward density is given on a Euclidean space $\R^n$ (e.g., the plane in \cref{fig:sde_action_space}). The state space is $\gS=\{s_0\}\cup(\R^n\times\{1,2,\dots,T\})$, where $T$ is the number of moves an agent will make before terminating (here, $T=100$). Thus the noninitial states are pairs $(\mathbf{x}_t,t)$ where $\mathbf{x}_t\in\R^n$ and $1\leq t\leq T$; we identify $s_0$ with $(\boldsymbol{0},0)$. Trajectories begin at $s_0$ and make successive steps through the copies of $\R^n$ until reaching the sink state.\footnote{To be precise, the reference measure $\nu$ is the Lebesgue measure on each copy of $\R^n$ and the counting measure on $\{s_0\}$. If $s_i$ is a state in the $i$-th copy of $\R^n$ (if $i>0$) or the initial state $s_0$ (if $i=0$), the reference kernel $\kappa(s_i,-)$ is Lebesgue on the $(i+1)$-st copy of $\R^n$ if $i<T$ and $\delta_\bot$ if $i=T$; $\kappa^b$ is defined similarly.} Learning a forward policy amounts to learning a conditional probability density $p(\mathbf{x}_{t+1}|\mathbf{x}_t,t)$ over $\R^n$. In particular, if this density is Gaussian, then the policy is the $T$-step Euler-Maruyama discretization of an It\^o stochastic differential equation (SDE).

\citet{zhang2022path} studied this problem in the case where the backward policy is fixed to be the discretization of a Brownian motion with fixed variance $\frac\sigma T$ pinned at $(\boldsymbol{0},0)$, and the forward policy is constrained to be Gaussian with the same variance $\frac\sigma T$ but with learned mean. (The theory of SDEs implies that in the $T\to\infty$ limit, the forward policy $P_F$ that minimizes the GFlowNet loss is indeed Gaussian with the same variance as the fixed $P_B$.) We thus aim to learn a function $\mu(\mathbf{x}_t,t)$, the mean of the forward policy, so as to make the policy sample from the target reward density.

The path integral sampler (PIS) training objective proposed by \citet{zhang2022path} minimizes the reverse KL divergence between two measures over trajectories: that defined by $P_F$ and that defined by $R$ and $P_B$. By Theorem 1 of \citet{malkin2022gfnhvi} (the proof of which trivially generalizes to the continuous case), the gradient of this objective with respect to the parameters of $P_F$ is proportional, in expectation, to that of the TB gradient when trained on-policy. A key difference between PIS and the on-policy TB objective is that the latter does not require access to the gradient of the reward distribution, but treats it as a black box.

\begin{table}[t]
\caption{Log-partition function estimation bias using importance-weighted bound $B_{\rm RW}$ (mean and standard deviation over 10 runs). The \textbf{bold} value in each column shows the best result and all those statistically equivalent to it ($p>0.1$ under a Welch's $t$-test). Algorithms assuming access to the gradient of the reward (non-black-box) are shown for comparison. Rows marked with $^*$ require importance weighting for gradient estimation. Cells with -- were unstable to optimize. Last three rows from \citet{zhang2022path}.}
\centering
\resizebox{1\linewidth}{!}{
\begin{tabular}{@{}c@{}llcc}
\toprule
Black box?&&&Gaussian mixture&Funnel\\\midrule
$\checkmark$ & Off-policy
& GFlowNet TB & ${\bf -0.003}\pm0.011$ & ${\bf -0.026}\pm0.020$ \\
$\checkmark$ & Off-policy & Reverse KL$^*$ & $-1.609\pm0.546$  & -- \\
$\checkmark$ & Off-policy & Forward KL$^*$  & ${\bf -0.001}\pm0.013$  & $-0.087\pm0.081$ \\\midrule
$\checkmark$ & On-policy
& GFlowNet TB & $-1.301\pm0.434$ & ${\bf -0.012}\pm0.108$ \\
$\checkmark$ & On-policy & Reverse KL & $-1.237\pm0.413$  & $-0.040\pm0.023$ \\
$\checkmark$ & On-policy & Forward KL$^*$  & $-0.007\pm0.023$  & $-0.034\pm0.143$ \\ \midrule
$\checkmark$ & Non-SDE & SMC
& $-0.362\pm0.293$ & $-0.216\pm0.157$
\\\midrule\midrule
 $\times$ & On-policy
 & PIS-NN & $-1.192\pm0.482$  & $-0.018\pm0.020$ \\\midrule
$\times$ & Non-SDE & HMC
 & $-1.876\pm0.527$ & $-0.835\pm0.257$ \\
\bottomrule
\end{tabular}
}
\label{tab:continuous_control}
\vspace*{-1em}
\end{table}  

\textbf{Datasets, algorithms, and baselines.} We evaluate GFlowNets and baselines on two synthetic densities: a 2-dimensional mixture of 9 Gaussians and the 10-dimensional funnel from MCMC literature \citep{nuts}. 

In addition to GFlowNet TB, we evaluate the two algorithms for minimizing divergences between trajectory measures studied by \citet{malkin2022gfnhvi}: the reverse KL optimized via policy gradient -- equivalent in expectation to TB -- and the forward KL, for which gradient estimation requires importance weighting. 
We also evaluate the algorithms in an off-policy setting, where the training trajectories are sampled with additional variance injected into the policy to encourage exploration (see \cref{app:experimental-details} for details). We include baselines from \citet{zhang2022path} as well.

All algorithms use the same model architecture as the PIS baseline for $\mu(\mathbf{x}_t,t)$ and are evaluated using two metrics as defined in \citet{zhang2022path}: the log-partition function estimation bias using simple and importance-weighted variational bounds, as defined in \cref{app:pis}.

\textbf{Results and discussion.} From the results in \cref{tab:continuous_control}, and the extended results in \cref{tab:continuous_control_app}, we conclude that the two main observations of \citet{malkin2022gfnhvi} continue to hold in this continuous setting. First, as expected, on-policy TB and reverse KL perform similarly when both can be stably optimized. Second, in settings where off-policy exploration is important, TB is more stable and achieves a better fit to the target than the other objectives, which require importance weighting for gradient estimation. \cref{fig:gaussian_modes} shows that exploration is necessary to discover modes. Finally, we note that TB is competitive with the PIS objective despite not having access to gradients of the reward density.

\iftoggle{arxiv}{\subsection{Stochastic control on a torus}
\begin{figure}[t]
    \centering
    \includegraphics[width=0.48\linewidth]{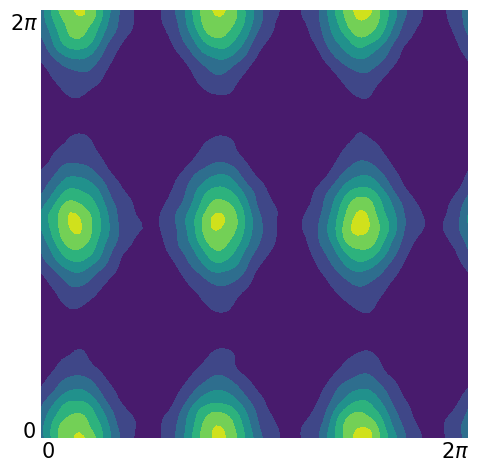}
    \includegraphics[width=0.48\linewidth]{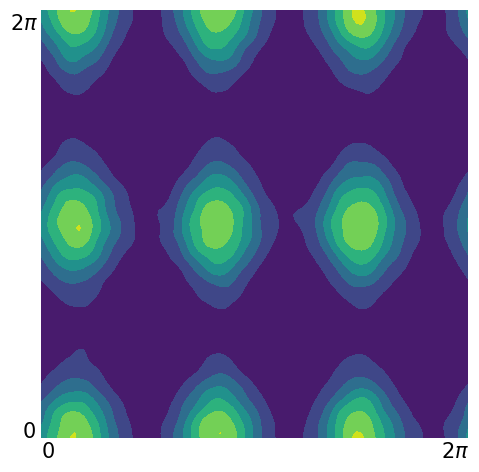}\\
    \includegraphics[width=0.48\linewidth]{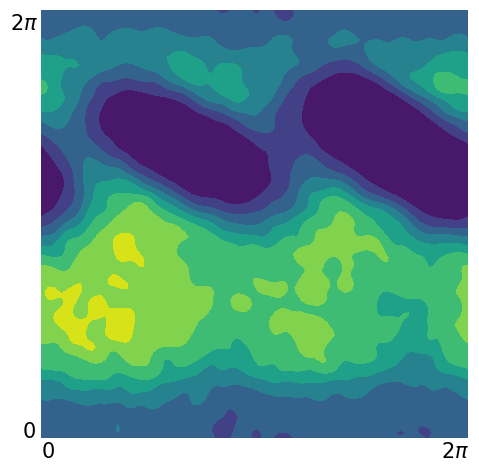}
    \includegraphics[width=0.48\linewidth]{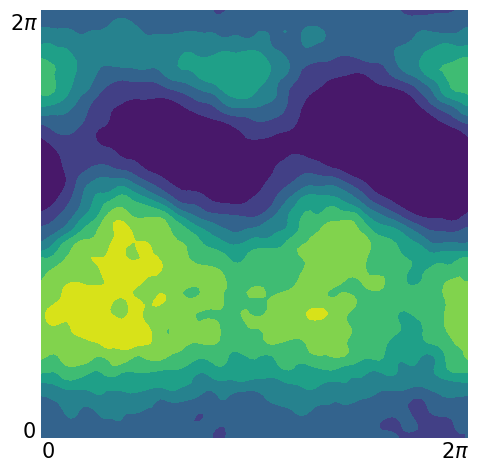}
    \caption{KDEs fit on samples from the reward functions (\textbf{a}: synthetic multimodal reward function, \textbf{c}: Boltzmann distribution of principal torsion angles of the alanine dipeptide molecule -- details in \cref{sec:continuous-torus}) and on samples from the corresponding trained GFlowNets (\textbf{b, d}). The topology of the torus imposes periodic boundary conditions on $[0, 2\pi)^2$.} 
    \vspace*{-4mm}
\label{fig:torus_results_arxiv}
\end{figure}

We consider a variant of the samplers discussed in \cref{sec:pis} to model reward densities on the surface of a 2D torus. Distributions over tori are useful to model torsion angles in molecular conformations, as we illustrate in \cref{sec:continuous-torus} and \cref{fig:molecule} with the alanine dipeptide molecule.

To model the surface of a torus, the measurable pointed graph is defined by $\gS = \{s_0\} \cup [0, 2\pi)^2 \times \{t \in \mathbbm{N}, 1 \leq t \leq T\}$, where $t$ denotes the step number and $T$ the trajectory length, and $s_0 = ((0, 0), 0)$. Note that here $[0,2\pi)$ has the topology of the circle, not that induced from the real line.

We consider two reward densities: a synthetic multimodal density, and a density based on the energy $\mathcal{E}$ of the alanine dipeptide molecule as a function of two of the angles defining the conformation of the molecule. More details are provided in \cref{sec:continuous-torus}.
 We provide a visual representation of learned and reward distributions in \cref{fig:torus_results_arxiv}.

}{\textbf{Stochastic control on a torus.} In  \cref{sec:continuous-torus} we use a variant of the samplers considered in this section to model reward densities on the surface of a 2D torus, such as those over pairs of torsion angles in molecular conformations. The policies are parametrized as mixtures of von Mises distributions, illustrating that GFlowNet algorithms flexibly transfer to arbitrary spaces over which probability densities can be parametrically described. We show density plots in \cref{fig:torus_results}.}

\subsection{Posterior over continuous parameters in Bayesian structure learning}
\label{sec:daggfn}
To show the capacity of GFlowNets to model a distribution over a mixed space of discrete and continuous quantities, we study here the problem of learning the structure of a Bayesian network and its parameters, from a Bayesian perspective. Extending the work of \citet{deleu2022bayesian}, our goal here is to approximate the (joint) posterior distribution $P(G, \theta \mid \gD)$ over the directed acyclic graph (DAG) structure $G$ of the Bayesian Network (discrete component) and the parameters $\theta$ of its conditional probability distributions (continuous component), given a dataset of observations $\gD$.

We use a GFlowNet that is structured as follows: starting from the empty graph, the DAG $G$ is first generated one edge at a time, following the structure of DAG-GFlowNet \citep{deleu2022bayesian}. Once the graph $G$ has been completely generated, we then sample the parameters $\theta$ associated to it, in order to reach a valid terminating state $(G, \theta)$. Details about the state space, and the forward transition probability are given in \cref{app:bayesian-structure-learning}. 
We use the subtrajectory balance loss \citep{madan2022learning} to train the GFlowNet with $R(G, \theta) = P(\gD\mid  \theta, G)P(\theta, G)$ as a reward function.

\begin{table}[t]
    \centering
    \caption{Comparison between GFlowNet and other methods based on variational inference on the Bayesian structure learning task. (Graphs) RMSE between the estimated edge marginals and the exact edge marginals. (Params.) Average negative log-probability of the parameter samples under the exact posterior $P(\theta \mid G, \gD)$.}
    \label{tab:dag-gfn-comparison}
    \resizebox{1\linewidth}{!}{
    \begin{tabular}{llccc}
        \toprule
         & & \multicolumn{3}{c}{Number of variables ($d$)}\\\cmidrule(lr){3-5}
         & & 3 & 4 & 5\\
        \midrule
        \multirow{3}{*}{\rotatebox{90}{Graphs}} & BCD Nets & -- & $2.13 \times 10^{-1}$ & $2.61 \times 10^{-1}$ \\
        & DiBS & $3.28 \times 10^{-1}$ & $2.95 \times 10^{-1}$ & $3.15 \times 10^{-1}$ \\
        & GFlowNet & $\mathbf{1.50 \times 10^{-2}}$ & $\mathbf{1.61 \times 10^{-2}}$ & $\mathbf{1.80 \times 10^{-2}}$ \\
        \midrule
        \multirow{3}{*}{\rotatebox{90}{Params.}} & BCD Nets & -- & $\phantom{-}2.17 \times 10^{2}$ & $\phantom{-}2.63 \times 10^{2}$ \\
        & DiBS & $\phantom{-}5.87 \times 10^{2}$ & $\phantom{-}1.12 \times 10^{3}$ & $\phantom{-}2.12 \times 10^{3}$ \\
        & GFlowNet & $\mathbf{-1.75 \times 10^{0}}$ & $\mathbf{-3.06 \times 10^{0}}$ & $\mathbf{-5.17 \times 10^{0}}$ \\
        \bottomrule
    \end{tabular}
    }
\end{table}

In order to evaluate our approximation against the target distribution, we consider problems where the true posterior $P(G, \theta \mid \gD)$ may be computed in closed form. More precisely, we assume that the Bayesian network follows a linear-Gaussian model and that the number of random variables $d \leq 5$. Additional details about the experimental settings and metrics are available in \cref{app:bayesian-structure-learning}. In \cref{tab:dag-gfn-comparison}, we compare the performance of the GFlowNet with two baseline methods based on variational inference: DiBS \citep{lorch2021dibs} and BCD Nets \citep{cundy2021bcdnets}. In \cref{tab:dag-gfn-comparison} (top), we report the root mean-square error (RMSE) between the edge marginals computed with the approximation and the exact posterior $P(G\mid \gD)$; we observe that the model learned by the GFlowNet is significantly more accurate on the discrete component, supporting the observation made in \citet{malkin2022gfnhvi}. Moreover, in \cref{tab:dag-gfn-comparison} (bottom), we observe that the sampled $\theta$ from the GFlowNet are significantly more likely under the exact posterior $P(\theta \mid G, \gD)$, suggesting that the GFlowNet's approximation of the continuous component is also more accurate.

\subsection{Connections with diffusion models}
\label{sec:diffusion}

\begin{wraptable}{r}{0.25\textwidth} 
\centering
\vspace{-0.8cm}
\caption{ImageNet-$32$ results.}
\label{tab:ddpm_result}
\begin{small}
    \begin{tabular}{l|cc}
    \toprule
      Method  & FID$\downarrow$ & NLL$\downarrow$ \\
    \midrule
       Baseline & $17.65$ & $4.57$ \\
       MLE-GFN & $16.36$ & $4.47$ \\
    \bottomrule
    \end{tabular}
\end{small}
\end{wraptable}

We show how the generalized GFlowNet framework can be applied \textit{beyond} the setting of fitting a sampler to a target reward function. As shown in \citet{zhang2023unifying}, GFlowNets can also be trained to \emph{maximize likelihood} of a given set of terminating states with an algorithm called MLE-GFN. Here we apply MLE-GFN to generalized GFlowNets to improve denoising diffusion probabilistic models \citep[DDPMs;][]{ho2020ddpm}.

\paragraph{Sampling process.} The generative process in a DDPM can be seen as a special case of the sampling process in a generalized GFlowNet of the same form as in \cref{sec:pis} and \cref{fig:sde_action_space}. A fixed number of steps $T$ is made through a sequence of copies of a high-dimensional space $\R^n$ (with the $i$-th state in the trajectory representing, e.g., an image at noise level $T-i$). The policy at the first step, from $s_0$ to $(\mathbf{x}_1,1)$, is constrained to be unit Gaussian, while subsequent steps are conditional Gaussians with a known variance.

More specifically, recall that diffusion models begin with a sample $\mathbf{x}_1$ from a noise distribution and transform it through a sequence of conditional Gaussian steps $x_1\rightarrow x_2\rightarrow\dots\rightarrow x_T$ (note the unconventional reversed and one-based indexing). Viewing the intermediate samples $\mathbf{x}_t$ as states $(\mathbf{x}_t,t)$, we can cast sampling from the diffusion model as sampling from a GFlowNet, where the first action samples $\mathbf{x}_1$ by transitioning from the abstract initial state $s_0$ and subsequent actions follow a forward kernel $P_F(-\mid(\mathbf{x}_t,t))$ whose support is $\{(\mathbf{x},t+1):\mathbf{x}\in\mathbb{R}^n\}$ and whose density is a Gaussian conditioned on $\mathbf{x}_t$ and $t$.

\paragraph{Noising process.} While DDPMs typically fix the noising process -- corresponding to the backward policy $P_B$ in the GFlowNet -- and learn only the denoiser (forward process), MLE-GFN allows learning both $P_F$ and $P_B$ as Gaussian policies. The description and proof of soundness of MLE-GFN, as well as details of the parametrization of means and variances, can be found in \citet{zhang2023unifying}.

\paragraph{Experimental result.} We train a GFlowNet as described above on the ImageNet-$32$ dataset (treated as a set of terminating states) with $T=100$ steps. %
Table~\ref{tab:ddpm_result}
demonstrates the efficacy of our method compared to the DDPM baseline in terms of both the sample quality (FID) and density modeling (NLL).
We defer other details and example images to \cref{appendix:ddpm}.

\section{Conclusion}

We have developed a theory for generalized GFlowNets and illustrated it through experiments. Future work will exploit this theory and scale the experiments up to more complex and high-dimensional spaces where generation includes both discrete and continuous choices. Possible application areas include estimation of Bayesian neural network posteriors, molecular conformer generation (discussed in \cref{sec:continuous-torus}), and simulation-based inference for inverse problems in the natural sciences.

\section*{Acknowledgments}

The authors acknowledge funding from CIFAR, IVADO, Genentech, Samsung, and IBM.

N.M. thanks Alexander Tong and Yatin Dandi for useful discussions.

A.V. thanks Luca Thiede and Santiago Miret for their help with the experiments with the alanine dipeptide molecule.

T.D. thanks Mizu Nishikawa-Toomey and Jithendaraa Subramanian for their help and useful discussions about the Bayesian structure learning experiments.

\section*{Author contributions}
S.L., T.D., N.M. developed the theory and L.N.E. contributed some proofs. Experiments in \S\ref{sec:experiments} were done by S.L. (synthetic grid), N.M. (stochastic control), A.V. and A.H. (torus), T.D. (Bayesian structure learning), D.Z. (diffusion). Experiments on the importance of modeling assumptions were also done by P.L. Y.B. conceived and guided the project. All authors contributed to designing the experiments and writing the paper.

\bibliography{ref}
\bibliographystyle{style/icml2023}

\newpage 
\onecolumn
\appendix

\counterwithin{figure}{section}
\counterwithin{table}{section}

\section{Transition kernels: Additional notations and definitions}
\label{appendix:transition-kernels}
We first recall the definition of transition kernels and Markov kernels
\begin{definition}[Transition kernel]
    Let $(\bar{\gS}, \Sigma)$ be a measurable (state) space. A function $\kappa: \bar{\gS} \times \Sigma \rightarrow [0, +\infty)$ is called a positive $\sigma$-finite \emph{transition kernel} if
    \begin{enumerate}
        \item For any $B\in\Sigma$, the mapping $s \mapsto \kappa(s, B)$ is measurable, where the space $[0, +\infty)$ is associated with the Borel $\sigma$-algebra $\gB([0, +\infty))$;
        \item For any $s\in\gS$, the mapping $B\mapsto \kappa(s, B)$ is a positive $\sigma$-finite measure on $(\bar{\gS}, \Sigma)$.
    \end{enumerate}
    A transition kernel such that the mappings $B \mapsto \kappa(s, B)$ are probability measures is called a \emph{Markov kernel}.
    \label{def:transition-kernel}
\end{definition}

\paragraph{Notations.}
Given a measurable space $(\bar{\gS}, \Sigma)$, we denote by $\Sigma_{| \gU}$ the restriction of $\Sigma$ to any subset $\gU$ of $\bar{\gS}$.

\subsection{Products of kernels.} Given a measurable space $(\bar{\gS}, \Sigma)$, a positive measure $\nu$ on $(\bar{\gS},\Sigma)$, and a transition kernel $\kappa$ on $(\bar{\gS}, \Sigma)$, we denote by $\nu\kappa$ (resp.\ $\nu \otimes \kappa$) the measure on $(\bar{\gS}, \Sigma)$ (resp.\ $(\bar{\gS} \times \bar{\gS}, \Sigma \otimes \Sigma)$) defined for $B \in \Sigma$ (resp.\ $B \in \Sigma \otimes \Sigma$) as:
\begin{align}
    &\nu\kappa(B) = \int_{\bar{\gS}}\nu(ds)\kappa(s, B).
    \label{eq:product-measure-kernel} \\
    &\nu \otimes \kappa (B) = \iint_{\bar{\gS}^2} \indicator_B(s, s') \nu(ds) \kappa(s, ds') \label{eq:product-of-measures}
\end{align}
In particular, for any state $s \in \bar{\gS}$, the \emph{$n$-step measure} $\kappa^{n}(s, -)$ is recursively defined by $\kappa^0(s, -) = \delta_s$, the Dirac at $s$, and:
\begin{equation}
    \kappa^{n+1}(s, - ) = \kappa^n(s, -)\kappa.\label{eq:n-step-measure}
\end{equation}

The following lemma, proved in \cref{appendix:proofs-of-appendices} ensures that absolute continuity between transition kernels transfers to $n$-step measures
\begin{lemma}
Let $P_F$ be a transition kernel on $(\bar{\gS}, \Sigma)$ such that $P_F(s, -) \ll \kappa(s, -)$ for every $s \in \bar{\gS}$.
 Then for every $n \geq 1$, $P_F^n(s, -)$ and $s \in \gS$ is absolutely continuous wrt. $\kappa^n(s, -)$.
\label{lemma:absolute-continuity-powers}
\end{lemma}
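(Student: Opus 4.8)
The plan is to induct on $n$. The base case $n=1$ is exactly the hypothesis $P_F(s,-)\ll\kappa(s,-)$. For the inductive step, assume $P_F^n(s,-)\ll\kappa^n(s,-)$ for every $s\in\gS$ (indeed for every $s\in\bar\gS$), fix $s\in\gS$ and $B\in\Sigma$ with $\kappa^{n+1}(s,B)=0$, and aim to show $P_F^{n+1}(s,B)=0$; since $B$ was arbitrary this gives $P_F^{n+1}(s,-)\ll\kappa^{n+1}(s,-)$.

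First I would unfold the recursive definition \cref{eq:n-step-measure} via \cref{eq:product-measure-kernel} to write
$\kappa^{n+1}(s,B)=\int_{\bar\gS}\kappa^n(s,ds')\,\kappa(s',B)$ and likewise $P_F^{n+1}(s,B)=\int_{\bar\gS}P_F^n(s,ds')\,P_F(s',B)$; the integrands are measurable in $s'$ by the transition-kernel axioms (\cref{def:transition-kernel}), so both integrals are well defined, and no Fubini-type interchange is needed because the recursion already presents the $(n{+}1)$-step measures as single integrals against the $n$-step ones. Since $s'\mapsto\kappa(s',B)$ is nonnegative and integrates to $0$ against $\kappa^n(s,-)$, it vanishes $\kappa^n(s,-)$-almost everywhere; equivalently, the measurable set $N=\{s'\in\bar\gS:\kappa(s',B)>0\}$ satisfies $\kappa^n(s,N)=0$.

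Next I would chain this null set through the two absolute-continuity assumptions. For $s'\notin N$ we have $\kappa(s',B)=0$, hence $P_F(s',B)=0$ by the fiberwise absolute continuity $P_F(s',-)\ll\kappa(s',-)$; therefore $\{s':P_F(s',B)>0\}\subseteq N$, so this set has $\kappa^n(s,-)$-measure zero. By the inductive hypothesis $P_F^n(s,-)\ll\kappa^n(s,-)$, it also has $P_F^n(s,-)$-measure zero, i.e.\ $s'\mapsto P_F(s',B)$ vanishes $P_F^n(s,-)$-almost everywhere, whence $P_F^{n+1}(s,B)=\int_{\bar\gS}P_F^n(s,ds')\,P_F(s',B)=0$. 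This closes the induction.

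The argument is essentially routine; the only point demanding care is that the two ``almost everywhere'' statements are taken with respect to two different measures, $\kappa^n(s,-)$ and $P_F^n(s,-)$, and the passage from the former to the latter is exactly where the inductive hypothesis is used. (The statement is phrased for $s\in\gS$ only because that is all that is needed downstream; the same proof applies verbatim to every $s\in\bar\gS$.)
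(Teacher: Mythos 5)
Your proof is correct, and while it shares the paper's overall skeleton (induction on $n$, unfolding $\kappa^{n+1}(s,B)=\int_{\bar\gS}\kappa^n(s,ds')\,\kappa(s',B)$ and the analogous identity for $P_F$), it executes the key step differently and, in fact, more robustly. The paper argues by contradiction: if $P_F^{n+1}(s,B)>0$ it extracts an \emph{open} set $B'$ on which $P_F(\cdot,B)$ is positive and with $P_F^n(s,B')>0$, then pushes this through the inductive hypothesis to contradict $\kappa^{n+1}(s,B)=0$. That extraction of an open set tacitly uses continuity of $s'\mapsto P_F(s',B)$, which is an axiom for \emph{forward} kernels in \cref{def:flow-and-flow-matching} but is not among the hypotheses of the lemma as stated (a general transition kernel only guarantees measurability of $s'\mapsto P_F(s',B)$). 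Your direct argument — $\kappa^{n+1}(s,B)=0$ forces $\kappa(\cdot,B)=0$ on a $\kappa^n(s,-)$-conull set, fiberwise absolute continuity transports this to $P_F(\cdot,B)$, and the inductive hypothesis converts the $\kappa^n(s,-)$-null exceptional set into a $P_F^n(s,-)$-null one — needs only measurability and is the standard measure-theoretic route. It proves the lemma at the stated level of generality and for all $s\in\bar\gS$, whereas the paper's proof as written only covers kernels with the extra continuity property. The one point you rightly flag, keeping track of which measure each ``almost everywhere'' refers to, is handled correctly.
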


\subsection{Equality between measures}
Given two measures $p$ and $q$ on $(\bar{\gS}, \Sigma)$, and a function $g: \bar{\gS} \ra \R$, we use the notation:
\begin{align*}
    p(ds) = g(s) q(ds)
\end{align*}
to say that for any measurable bounded function $f: \bar{\gS} \ra \R$:
\begin{align}
    \int_{\bar{\gS}} f(s) p(ds) = \int_{\bar{\gS}} f(s) g(s) q(ds).
\end{align}
Throughout the paper, we use the two notations interchangeable when the context allows it. Our proofs rely mostly on writing the equality with measurable bounded functions.

Equalities between product measures require a special care, especially when using both the kernel $\kappa$ and the backward reference kernel $\kappa^b$. For example \cref{eq:backward-kernel} means that for any measurable bounded function $f: \bar{\gS} \times \bar{\gS} \ra \R$ satisfying $f(s_0, s_0) = f(\bot, \bot) = 0$:
\begin{align*}
    \iint_{\bar{\gS} \times \bar{\gS}} f(s, s') \nu(ds) \kappa(s, ds') = \iint_{\bar{\gS} \times \bar{\gS}} f(s, s') \nu(ds') \kappa^b(s', ds).
\end{align*}
We choose not to write \cref{eq:backward-kernel} with the notation $\nu(ds)\kappa(s, ds') = \nu(ds') \kappa(s', ds)$ as it does not convey any information about when ``$ds$'' and ``$ds'$'' represent $\{s_0\}, \{s_0\}$ or $\{\bot\}, \{\bot\}$.

\subsection{Trajectory measures}
\begin{definition}
    Let $P_F$ be a transition kernel on $(\bar{\gS}, \Sigma)$. For any $n\geq 0$ and $s \in \bar{\gS}$, $P_F$ induces a measure $P_F^{\otimes{n}}(s, -)$ over the product space $(\bar{\gS}^{n+1}, \Sigma^{\otimes (n+1)})$. $P_F^{\otimes{n}}(s, -)$, called the \emph{$n$-step trajectory measure at $s$} recursively defined by
    \begin{align}
    P_F^{\otimes{0}}(s, -) &= \delta_s \label{eq:n-step-trajectory-measure-initial-condition},\\
    P_F^{\otimes{n+1}}(s, -) &= P_F^{\otimes{n}}(s, -) \otimes P_F.
        \label{eq:n-step-trajectory-measure-recursion}
    \end{align}
    \label{def:trajectory-measures}
\end{definition}

\paragraph{Notation.}
We use $\overrightarrow{s_{1:n}}$ to denote $(s_1, \dots, s_n)$ and $\overrightarrow{ds_{1:n}}$ to denote $ds_1\dots ds_n$.

We can write for example:
$P_F^{\otimes 1}(s, ds'\,ds_1) = \delta_{s}(ds') P_F(s', ds_1)$ and
$P_F^{\otimes 2}(s, ds'\,ds_1\,ds_2) = \delta_{s}(ds') P_F(s', ds_1) P_F(s_1, ds_2)$, and more generally:
\begin{equation*}
    P_F^{\otimes n}(s, ds'\,\overrightarrow{ds_{1:n}}) = P_F^{\otimes n-1}(s, ds'\,\overrightarrow{ds_{1:n-1}}) P_F(s_{n-1}, ds_n)
\end{equation*}

\subsection{Terminating state measure}
\label{sec:terminating-state-measure}
Given a measurable pointed DAG $G=(\bar{\gS}, \gT, \Sigma, s_0, \bot, \kappa, \kappa^b, \nu)$, any transition kernel $P_F$ on $(\bar{\gS}, \Sigma)$ induces a terminating state measure $P_{\top}$, which is the sum of the $n$-step terminating state measures defined as follows:
\begin{definition}
    Let $P_F$ be a transition kernel on $(\bar{\gS}, \Sigma)$. For any $n\geq 0$ we define the \emph{$n$-step terminating state measure} $P_{\top}^n$ over $(\gX, \Sigma_{| \gX})$, for any $B \in \Sigma_{| \gX}$ as:
    \begin{equation}
        \label{eq:n-step-terminating-state-measure}
        \hspace{-1em}P_{\top}^n(B) = \int_{\bar{\gS}^{n+1}} P_F^{\otimes{n}}(s_0, ds_1 \dots ds_{n+1}) \indicator_B(s_n) \indicator_{s_{n+1} = \bot}.
    \end{equation}
    The terminating state measure is defined as:
    \begin{equation}
        \label{eq:terminating-state-measure}
        P_\top: B \in \Sigma_{| \gX} \mapsto \sum_{n=1}^\infty P_\top^n(B)
    \end{equation}
    \label{def:terminating-states-measures}
    \vspace*{-1em}
\end{definition}
    
The following lemma, proved in \cref{appendix:proofs-of-appendices}, relates the $n$-step terminating measures to the $n$-step measures $P_F^n(s_0, -)$:
\begin{lemma}
    Let $P_F$ be a transition kernel on $(\bar{\gS}, \Sigma)$. For every $n \geq 1$, we have:
    \begin{equation}
        \label{eq:terminating-state-dist-PF}
        P_{\top}^n (dx) = P_F(x, \{\bot\}) P_F^{n-1}(s_0, dx)
    \end{equation}
    \label{lemma:terminating-state-dist-PF}
    \vspace*{-2em}
\end{lemma}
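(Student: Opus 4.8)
The plan is to prove \cref{lemma:terminating-state-dist-PF} by unfolding the definition of the $n$-step terminating state measure $P_\top^n$ in \cref{eq:n-step-terminating-state-measure} and integrating out all the intermediate variables except $x = s_n$. Concretely, I would test the claimed equality $P_\top^n(dx) = P_F(x,\{\bot\}) P_F^{n-1}(s_0,dx)$ against an arbitrary bounded measurable function $g : \gX \ra \sR$ (extended by zero to $\bar\gS$, say), i.e.\ show
\[
\int_{\gX} g(x)\, P_\top^n(dx) = \int_{\gX} g(x)\, P_F(x,\{\bot\})\, P_F^{n-1}(s_0,dx).
\]
Starting from \cref{eq:n-step-terminating-state-measure}, the left-hand side is $\int_{\bar\gS^{n+1}} g(s_n)\,\indicator_{s_{n+1}=\bot}\, P_F^{\otimes n}(s_0, ds_1\dots ds_{n+1})$.

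First I would use the recursive definition \cref{eq:n-step-trajectory-measure-recursion} of the trajectory measure, which gives $P_F^{\otimes n}(s_0, ds_1\dots ds_{n+1}) = P_F^{\otimes n-1}(s_0, ds_1\dots ds_n)\, P_F(s_n, ds_{n+1})$ (more precisely, keeping track of the leading Dirac factor $\delta_{s_0}(ds_1)$ as in the notation subsection; I will write it informally with $s_0$ identified as the first coordinate). Then I would apply Fubini/Tonelli to perform the innermost integral over $s_{n+1}$ first: $\int_{\bar\gS} \indicator_{s_{n+1}=\bot}\, P_F(s_n, ds_{n+1}) = P_F(s_n, \{\bot\})$. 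This collapses the $(n+1)$-fold integral to an $n$-fold integral $\int_{\bar\gS^n} g(s_n)\, P_F(s_n,\{\bot\})\, P_F^{\otimes n-1}(s_0, ds_1\dots ds_n)$.

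The next step is to recognize that integrating the function $(s_1,\dots,s_n)\mapsto g(s_n)P_F(s_n,\{\bot\})$ — which depends only on the last coordinate $s_n$ — against the trajectory measure $P_F^{\otimes n-1}(s_0,-)$ is, by the standard marginalization identity for products of kernels (iterated application of \cref{eq:product-of-measures} / \cref{eq:n-step-measure}), the same as integrating $x \mapsto g(x)P_F(x,\{\bot\})$ against the marginal $n$-step measure $P_F^{n-1}(s_0,-)$ obtained by pushing forward onto the last coordinate. This is exactly the statement that $\kappa^{n-1}(s_0,-)$ is the $s_n$-marginal of $P_F^{\otimes n-1}(s_0,-)$, which follows by an easy induction on $n$ from \cref{eq:n-step-trajectory-measure-recursion} and \cref{eq:n-step-measure}. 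Combining, the left-hand side equals $\int_{\gX} g(x)P_F(x,\{\bot\})\,P_F^{n-1}(s_0,dx)$, which is the right-hand side, and since $g$ was arbitrary bounded measurable the two measures coincide on $\Sigma_{|\gX}$.

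I expect the only genuine (if minor) obstacle to be bookkeeping of the index shifts and the leading Dirac factor in the definition of $P_F^{\otimes n}$, together with a clean statement of the "last-coordinate marginal of the trajectory measure is the $n$-step measure" fact — this is elementary but deserves an explicit one-line induction so the Fubini step is rigorous. A secondary point to be careful about is measurability: $s\mapsto P_F(s,\{\bot\})$ is measurable since $P_F$ is a transition kernel (\cref{def:transition-kernel}), so the integrand $g(s_n)P_F(s_n,\{\bot\})$ is a legitimate bounded measurable function and Tonelli applies. No issue of finiteness arises because $P_F$ is Markov, so all the intermediate measures are finite (probability) measures.
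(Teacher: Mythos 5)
Your proposal is correct and follows essentially the same route as the paper: test against a bounded measurable function, peel off the last kernel via \cref{eq:n-step-trajectory-measure-recursion}, integrate out $s_{n+1}$ to produce $P_F(s_n,\{\bot\})$, and then invoke the fact that the last-coordinate marginal of $P_F^{\otimes n-1}(s_0,-)$ is $P_F^{n-1}(s_0,-)$. The paper isolates that last marginalization step as \cref{lemma:P_F^n}, proved by exactly the induction you describe.
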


\subsection{Backward reference kernels}
\label{appendix:backward-kernels}
Given a reference kernel $\kappa$, designing a backward reference kernel $\kappa^b$ can be done using reverse kernels~\citep{cappe2009inferencehmm}, which we redefine below:
\begin{definition}[Reverse kernel]
    Let $(\bar{\gS}, \Sigma)$ be a measurable space, $\kappa$ be a transition kernel on $(\bar{\gS}, \Sigma)$, and $\nu$ be a positive measure on $(\bar{\gS}, \Sigma)$. A \emph{reverse kernel} $\kappa^{r}_{\nu}$ associated to $\nu$ and $\kappa$ is a transition kernel over $(\bar{\gS}, \Sigma)$ such that:
    \begin{align}
        \nu \otimes \kappa = (\nu \kappa) \otimes \kappa_\nu^r
        \label{eq:reverse-kernel}
    \end{align}
    \label{def:reverse-kernel}
\end{definition}

Note how \cref{eq:reverse-kernel} is different from the condition of the backward reference kernel \cref{eq:backward-kernel}. Conveniently, there is a reference measure $\nu$ for which the two conditions are equivalent, meaning that the backward reference kernel can be defined as the reverse kernel associated to $\nu$ and $\kappa$. While there is no guarantee that the reverse kernel exists or is unique in general, existence is guaranteed if $(\bar{\gS}, \gT)$ is a Polish space (e.g. a discrete space, the Euclidian space $\R^n$, hyperrectangles or balls in $\R^n$, or products or disjoint unions of countable families thereof) \citep{cappe2009inferencehmm}. The following proposition, proved in \cref{appendix:proofs-of-appendices} shows that.

\begin{proposition}
\label{prop:existence-of-backward-kernel}
Given a Polish space $(\bar{\gS}, \gT)$, with source and sink states $s_0, \bot \in \bar{\gS}$ such that $\{s_0\}$ and $\{\bot\}$ are both open and closed sets, and a transition kernel $\kappa$ on $(\bar{\gS}, \Sigma)$ satisfying \cref{eq:accessibility,eq:absorption,eq:finitely-absorbing}. Let $\nu$ be the measure defined by:
\begin{align}
    \nu = \sum_{n=0}^N \kappa^n(s_0, -),
    \label{eq:reference-measure-nu-prop}
\end{align}
and let $\kappa_\nu^r$ be any reverse kernel associated to $\kappa$ and $\nu$ that satisfies the following two conditions:
\begin{align}
&\kappa^r_\nu(s_0, - ) = 0 \quad \text{i.e. it's the trivial measure} \label{eq:reverse-absroption} \\
&\forall s \in \gS, \  \kappa^b(s, \{\bot\}) = 0 \label{eq:no-reverse-edge-to-bot}.
\end{align}
Let $\kappa^b$ be a transition kernel on $(\bar{\gS}, \Sigma)$ defined by:
\begin{align*}
    &\forall s \neq \bot, \ \kappa^b(s, -) = \kappa^r_\nu(s, -), \\
    &\forall B \in \Sigma_{|\gS}, \ \kappa^b(\bot, B) = \frac{1}{\nu(\{\bot\})} \nu \otimes \kappa(B \times \{\bot\}),\\
    & \kappa^b(\bot, \{\bot\}) = 0.
\end{align*}
$\nu$ is strictly positive and $\kappa^b$ satisfies \cref{eq:backward-kernel}. Note that the existence of a reverse kernel satisfying \cref{eq:reverse-absroption,eq:no-reverse-edge-to-bot} is guaranteed by \cref{lemma:existence-of-special-reverse-kernel} below, proved in \cref{appendix:proofs-of-appendices}.
\end{proposition}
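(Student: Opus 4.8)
The plan is to establish the two assertions — strict positivity of $\nu$ and the identity \cref{eq:backward-kernel} for $\kappa^b$ — after first pinning down the structure of the large powers of $\kappa$. First I would show that \cref{eq:finitely-absorbing} forces $\kappa^N(s_0,-)=c\,\delta_\bot$ with $c:=\kappa^N(s_0,\{\bot\})>0$: a measure whose support is the single point $\{\bot\}$ is a multiple of $\delta_\bot$, and the multiplier is positive since the zero measure has empty support. Combined with \cref{eq:absorption} (so $\kappa(\bot,-)=\delta_\bot$), an induction gives $\kappa^n(s_0,-)=c\,\delta_\bot$ for every $n\ge N$. Two consequences I will use: $\nu(\{\bot\})\ge c>0$, which makes the definition of $\kappa^b(\bot,-)$ well posed, and $\nu\kappa=\sum_{m=1}^{N+1}\kappa^m(s_0,-)=\nu-\delta_{s_0}+c\,\delta_\bot$ — the only algebraic fact about $\nu$ needed below.

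For strict positivity, fix a nonempty open set $B$. By \cref{eq:accessibility} there is $n\ge 0$ with $\kappa^n(s_0,B)>0$; if $n>N$ then $c\,\delta_\bot(B)>0$, so $\bot\in B$ and then already $\kappa^N(s_0,B)=c>0$, so we may take $n\le N$. Hence $\nu(B)=\sum_{m=0}^N\kappa^m(s_0,B)>0$. Along with this I would dispatch the routine facts that $\nu$ is $\sigma$-finite (a finite sum of the $\sigma$-finite measures $\kappa^m(s_0,-)$) and that $\kappa^b$ is a genuine $\sigma$-finite transition kernel: measurability of $s\mapsto\kappa^b(s,B)$ holds because $\bar\gS$ carries the disjoint-union topology of $\{s_0\},\{\bot\},\gS^\circ$, so it suffices to check it on $\gS$, where $\kappa^b=\kappa^r_\nu$ is a kernel, and at the single point $\bot$; and $\kappa^b(\bot,-)\le\frac{1}{\nu(\{\bot\})}\nu$ is $\sigma$-finite. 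This leaves only \cref{eq:backward-kernel}.

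For \cref{eq:backward-kernel} I would use its test-function form: for bounded measurable $f$ with $f(s_0,s_0)=f(\bot,\bot)=0$, prove $\iint f(s,s')\,\nu(ds)\,\kappa(s,ds')=\iint f(s,s')\,\nu(ds')\,\kappa^b(s',ds)$. Split each side according to whether the ``$\bot$-coordinate'' is $\bot$. On the left, the contribution of $s=\bot$ vanishes since $\kappa(\bot,-)=\delta_\bot$ and $f(\bot,\bot)=0$; splitting the remaining inner integral over $s'\in\gS$ versus $s'=\bot$ leaves $\iint_{\gS\times\gS}f(s,s')\,\nu(ds)\,\kappa(s,ds')+\int_{\gS}f(s,\bot)\,\kappa(s,\{\bot\})\,\nu(ds)$. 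On the right, the contribution of $s'=\bot$ is $\nu(\{\bot\})\int_{\gS}f(s,\bot)\,\kappa^b(\bot,ds)$, which by the definition of $\kappa^b(\bot,-)$ equals $\int_{\gS}f(s,\bot)\,\kappa(s,\{\bot\})\,\nu(ds)$; and the contribution of $s'\in\gS$ is $\iint_{\gS\times\gS}f(s,s')\,\nu(ds')\,\kappa^r_\nu(s',ds)$, using \cref{eq:no-reverse-edge-to-bot} (i.e.\ $\kappa^r_\nu(s',\{\bot\})=0$ for $s'\in\gS$) to integrate $s$ over $\gS$ rather than $\bar\gS$. So the two ``$\bot$-target'' terms already agree, and it remains to show $\iint_{\gS\times\gS}f(s,s')\,\nu(ds)\,\kappa(s,ds')=\iint_{\gS\times\gS}f(s,s')\,\nu(ds')\,\kappa^r_\nu(s',ds)$. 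This is where the reverse-kernel identity \cref{eq:reverse-kernel} enters: apply it to the test function $f\cdot\indicator_{\gS\times\gS}$ and substitute $\nu\kappa=\nu-\delta_{s_0}+c\,\delta_\bot$; the $c\,\delta_\bot$ summand is annihilated by the factor $\indicator_\gS$ on the $\bot$-coordinate, and the $-\delta_{s_0}$ summand contributes $-\int_{\gS}f(s,s_0)\,\kappa^r_\nu(s_0,ds)=0$ by \cref{eq:reverse-absroption}. What survives is exactly the desired equality.

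I expect the only real difficulty to be the bookkeeping in the last paragraph: keeping straight which differential is integrated against $\nu$ and which against a kernel, respecting the coordinate swap implicit in the reverse/backward-kernel conventions (as unpacked in the paper's paragraph on equality between measures), and checking that the special terms at $s_0$ and $\bot$ are precisely the pairs $(s_0,s_0)$ and $(\bot,\bot)$ excluded from \cref{eq:backward-kernel}. Existence of a reverse kernel satisfying \cref{eq:reverse-absroption,eq:no-reverse-edge-to-bot} is invoked from \cref{lemma:existence-of-special-reverse-kernel}; everything else is formal manipulation.
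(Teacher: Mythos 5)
Your proposal is correct and follows essentially the same route as the paper's proof: strict positivity of $\nu$ via accessibility plus the concentration of $\kappa^n(s_0,-)$ at $\bot$ for $n\geq N$, the identity $\nu\kappa=\nu$ away from $s_0$ and $\bot$, and a case split of \cref{eq:backward-kernel} into the $\gS\times\gS$ part (handled by the reverse-kernel identity together with \cref{eq:reverse-absroption}) and the boundary terms at $\bot$ (handled by the explicit definition of $\kappa^b(\bot,-)$ and $\kappa^b(s,\{\bot\})=0$). Your treatment is in fact slightly more careful than the paper's in keeping the constant $c=\kappa^N(s_0,\{\bot\})$ rather than tacitly taking $\kappa^N(s_0,-)=\delta_\bot$, and in checking that $\kappa^b$ is a genuine transition kernel, but these are refinements of the same argument rather than a different one.
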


\begin{lemma}
    \label{lemma:existence-of-special-reverse-kernel}
    Given a Polish space $(\bar{\gS}, \gT)$, with source and sink states $s_0, \bot \in \bar{\gS}$ such that $\{s_0\}$ and $\{\bot\}$ are both open and closed sets, and a transition kernel $\kappa$ on $(\bar{\gS}, \Sigma)$ satisfying \cref{eq:accessibility,eq:absorption,eq:finitely-absorbing}. Then the measure $\nu$ defined by:
    \begin{align}
        \nu = \sum_{n=0}^N \kappa^n(s_0, -),
        \label{eq:reference-measure-nu-lemma}
    \end{align}
    If $\kappa^b$ is a reverse kernel associated to $\nu$ and $\kappa$, then the kernel $\kappa'$ defined by:
    \begin{align}
        &\kappa'(s_0, -) = 0,\\
        &\kappa'(\bot, -) = \kappa^b(\bot, -), \\
        &\forall s' \in \gS\setminus \{s_0\}, \ \kappa'(s', \{\bot\}) = 0\\
        &\forall s' \in \gS\setminus \{s_0\}, \ \forall B \in \Sigma, \ \bot \notin B \Rightarrow \kappa'(s', B) = \kappa^b(s, B),
    \end{align}
    is also a reverse kernel associated to $\nu$ and $\kappa$.
\end{lemma}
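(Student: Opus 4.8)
The plan is to reduce the statement to the single identity $(\nu\kappa)\otimes\kappa'=(\nu\kappa)\otimes\kappa^b$. Since $\kappa^b$ is assumed to be a reverse kernel for $(\nu,\kappa)$ --- i.e., writing \cref{eq:reverse-kernel} with test functions in the style of \cref{appendix:transition-kernels}, $\iint f(s,s')\,\nu(ds)\,\kappa(s,ds')=\iint f(s,s')\,(\nu\kappa)(ds')\,\kappa^b(s',ds)$ for every bounded measurable $f:\bar\gS\times\bar\gS\to\sR$ --- it suffices to show that substituting $\kappa'$ for $\kappa^b$ on the right-hand side changes nothing. Before that I would check the (routine) fact that $\kappa'$ is a $\sigma$-finite transition kernel: $\{s_0\}$, $\{\bot\}$, and $\gS^\circ=\gS\setminus\{s_0\}$ are measurable and partition $\bar\gS$, on each piece $s'\mapsto\kappa'(s',B)$ is measurable (constant, $\kappa^b(\cdot,B)$, and $\kappa^b(\cdot,B\setminus\{\bot\})$ respectively), and each $\kappa'(s',-)$ is a measure with $\kappa'(s',\cdot)\le\kappa^b(s',\cdot)$, hence $\sigma$-finite.

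The kernel $\kappa'$ differs from $\kappa^b$ in exactly two ways: (i) $\kappa'(s_0,-)=0$ whereas $\kappa^b(s_0,-)$ need not be; and (ii) for $s'\in\gS^\circ$ the atom $\kappa^b(s',\{\bot\})\,\delta_\bot$ has been deleted. I would argue that each of these concerns a $(\nu\kappa)$-null set of values of $s'$. For (i): by \cref{eq:product-measure-kernel,eq:n-step-measure} and the definition of $\nu$ one has $\nu\kappa=\sum_{n=1}^{N+1}\kappa^n(s_0,-)$, so it is enough to show $\kappa^n(s_0,\{s_0\})=0$ for every $n\ge1$. If $\kappa^m(s_0,\{s_0\})>0$ for some $m\ge1$, then iterating \cref{eq:n-step-measure} through $s_0$ gives $\kappa^{km}(s_0,\{s_0\})\ge\kappa^m(s_0,\{s_0\})^{k}>0$ for all $k\ge1$; but \cref{eq:finitely-absorbing} forces $\kappa^N(s_0,-)$ to be concentrated on $\{\bot\}$, and then \cref{eq:absorption} ($\kappa(\bot,-)=\delta_\bot$) propagates this to $\kappa^\ell(s_0,-)$ for every $\ell\ge N$, so $\kappa^\ell(s_0,\{s_0\})=0$ --- a contradiction once $km\ge N$. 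For (ii): apply the reverse-kernel identity to $f(s,s')=\indicator_{s=\bot}\,\indicator_{s'\ne\bot}$; since $\kappa(\bot,-)=\delta_\bot$ the left-hand side is $0$, so $\int_{\bar\gS\setminus\{\bot\}}(\nu\kappa)(ds')\,\kappa^b(s',\{\bot\})=0$, and nonnegativity of the integrand gives $\kappa^b(s',\{\bot\})=0$ for $(\nu\kappa)$-almost every $s'\ne\bot$, in particular for $(\nu\kappa)$-a.e.\ $s'\in\gS^\circ$.

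Putting (i) and (ii) together, the set $\{s':\kappa'(s',-)\ne\kappa^b(s',-)\}$ is contained in $\{s_0\}\cup\{s'\in\gS^\circ:\kappa^b(s',\{\bot\})>0\}$, which is $(\nu\kappa)$-null. Since the value of $\iint f(s,s')\,(\nu\kappa)(ds')\,\kappa(s',ds)$ only depends on the kernel $\kappa$ up to $(\nu\kappa)$-a.e.\ equality in the outer variable $s'$, we get $\iint f(s,s')\,(\nu\kappa)(ds')\,\kappa'(s',ds)=\iint f(s,s')\,(\nu\kappa)(ds')\,\kappa^b(s',ds)=\iint f(s,s')\,\nu(ds)\,\kappa(s,ds')$ for every bounded measurable $f$; that is, $(\nu\kappa)\otimes\kappa'=\nu\otimes\kappa$, so $\kappa'$ is a reverse kernel associated to $\nu$ and $\kappa$.

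The only genuinely substantive step is (i) --- the ``no return to $s_0$'' consequence of finite absorption --- and that is where I expect the (short) real work to lie; the remainder is bookkeeping with the product-of-kernels notation of \cref{appendix:transition-kernels}, with care needed mainly to keep track of which coordinate the outer measure $\nu\kappa$ acts on. (Note that the Polish hypothesis in the lemma is not actually used in this argument; it is there only to guarantee, in \cref{prop:existence-of-backward-kernel}, that some reverse kernel $\kappa^b$ exists to begin with.)
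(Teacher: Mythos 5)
Your proof is correct, and it establishes the same key identity as the paper's proof --- that $(\nu\kappa)\otimes\kappa'$ coincides with $(\nu\kappa)\otimes\kappa^b$ --- but packages it differently. The paper expands $\iint f(s,s')\,\nu\kappa(ds')\,\kappa'(s',ds)$ into four terms according to the partition of $\bar\gS$ into $\{s_0\}$, $\{\bot\}$ and $\gS^\circ$ and kills the discrepant terms one by one; you instead observe that $\kappa'(s',-)$ and $\kappa^b(s',-)$ agree for $(\nu\kappa)$-almost every $s'$ and conclude in one stroke. The two supporting null-set facts are identical in both arguments, but your derivations of them are arguably more careful than the paper's. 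For $\nu\kappa(\{s_0\})=0$, the paper implicitly leans on \cref{lemma:no-incoming-so-edge-appendix}, whose proof uses continuity of $s\mapsto\kappa(s,B)$ --- a hypothesis \emph{not} assumed in this lemma --- whereas your iteration $\kappa^{km}(s_0,\{s_0\})\ge\kappa^m(s_0,\{s_0\})^{k}$ played against \cref{eq:finitely-absorbing} and \cref{eq:absorption} needs only the stated assumptions. For the vanishing of the $\{\bot\}$-atom, the paper's ``largest open set $B$ on which $\kappa^b(\cdot,\{\bot\})>0$'' is delicate (that set need not be open, so its interior can miss points), while your direct choice of test function $\indicator_{s=\bot}\indicator_{s'\ne\bot}$ yields the needed $(\nu\kappa)$-a.e.\ statement cleanly. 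The only blemishes are cosmetic: a typo in your final display (the ``$\kappa(s',ds)$'' should be the generic kernel being integrated against), and the usual caveat that the bounded-test-function formulation should be read with nonnegative $f$ when the kernels are merely $\sigma$-finite --- an issue the paper shares.
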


\section{Pointed DAGs as measurable pointed graphs}
\label{appendix:pointed-graphs}
The following example shows that pointed directed acyclic graphs~\citep{bengio2021foundations} are a special case of finitely absorbing measurable pointed graphs.
\begin{example}
    Finite state spaces are special cases of measurable pointed graphs. Let $G = (V, E, s_{0}, \bot)$ be a pointed directed acyclic graph, where $V$ is the finite set of vertices, $E \subset V \times V$ is the set of directed edges, $s_{0}\in V$ is the initial state, and $\bot \in V$ is the sink state. 

    The set of vertices $V$ with the discrete topology corresponds to the state space. We can define a transition kernel $\kappa$ such that for any vertex $s\in V$, and any $B \in \gP(V)$, with $\gP(V)$ the power set of $V$, containing all subsets of $V$:
    \begin{equation*}
        \kappa(s, B) = \sum_{s'\in B}\indicator_{s{\rightarrow}s' \in E} + \indicator_{s = \bot, \bot \in B}
    \end{equation*}
    Using this transition kernel, the measure $B\mapsto \kappa^{n}(s, B)$ over $(V, \gP(V))$ counts the number of trajectories of length $n$ starting at $s$ that ends at a vertex in $B$ in the pointed graph $G$.
    
    The reverse kernel can be defined for any vertex $s' \in V$ and any $B \in \gP(V)$ as:
    \begin{equation*}
        \kappa^b(s', B) = \sum_{s \in B}\indicator_{s \ra s' \in E},
    \end{equation*}
    and the reference measure $\nu$ can be defined as the counting measure (that counts the number of elements in any $B \in \gP(V)$).
    
    Since $(V, \gP(V))$ is a discrete space, the condition of accessibility in \cref{eq:accessibility} can be verified for only singletons $B = \{s\}$. This condition then corresponds to having a positive number of trajectories of any length $n > 0$ starting at $s_{0}$ and ending in $s$, which is exactly the notion of accessibility in $G$. The continuity condition is trivially satisfied because the topology is discrete, and \cref{eq:backward-kernel} is trivially satisfied. Finally, \cref{eq:finitely-absorbing} is satisfied given the acyclicity of $G$.

    \label{ex:pointed-graph}
\end{example}

\section{Experimental details}
\label{app:experimental-details}

\subsection{Approximating the Jensen-Shannon Divergence}
\label{sec:how-to-approximate-JSD}
Given an unnormalized target reward measure $r$ with respect to the Lebesgue measure on a bounded space $\gX$, and a GFlowNet sampler $P_\top$ of terminating states, we approximate the JSD between the learned sampler and the normalized distribution $R(dx) = \frac{1}{\int_{\gX} r(x') dx'} r(x) dx$ as follows:
\begin{enumerate}[label=(\arabic*),left=0pt,nosep,]
    \item We sample $N$ points from the the target distribution using rejection sampling, with a uniform distribution as a proposal,
    \item We fit a kernel density estimator (KDE) on the above samples,
    \item We fit a second KDE on $N$ samples from $P_\top$
    \item We use both KDEs to score a fixed set of points defining a discretization of the sample space $\gX$,
    \item We normalize both sets of scores in order to obtain valid probability mass functions on the grid,
    \item We evaluate the JSD between the two probability mass functions.
\end{enumerate}

\subsection{A synthetic continuous environment}
\label{app:synthetic-continuous}
The forward and backward kernels $P_F, P_B$ are defined by their densities $p_F$ and $p_B$ wrt. the reference kernels $\kappa, \kappa^b$.

$\kappa, \nu, \kappa^b$ satisfy the requirements of a finitely absorbing measurable pointed graph. More notably, all states can be reached from $s_0$ within $1 + \left\lceil{\frac{\sqrt{2}}{\rho}}\right\rceil$ steps.

The topology $\gT$ on $\bar{\gS} = \gS \cup \{\bot\}$ is the disjoint union topology on $\{s_0, \bot\}$ and $\gS$. 

We parametrized $p_F(s_0, -)$ using a mixture of four Beta distributions for both the radius $r \in (0, \rho)$ and the angle $\theta \in (0, \frac{\pi}{2})$. We used a mixture of two Beta distributions for the angle $\theta \in (\theta_{min}(s), \theta_{max}(s))$ when modeling $p_F(s, -)$ and $p_B(s, -)$.  The forward policy neural network has an extra output head corresponding to the probability of terminating the trajectory, i.e. $p_F(s_0, \bot)$. The learned probabilities were effectively multiplied by the right Jacobians to account for the support of the Beta distributions ($[0, 1]$) being different from that of $\theta$ or $r$.

\paragraph{Reward density.}

The reward measure $R$ was specified using a density $r$ wrt. the Lebesgue measure $\lambda$ on $\gX = (0, 1)^2$. Following \citet{bengio2021flow} and \citet{malkin2022trajectory}, the (unnormalized) density is defined for every $x = (x_1, x_2) \in \gX$ as:
\begin{align*}
    r(x) = 0.1 + 0.5 \indicator_{|x_1 - 0.5| \in (0.25, 5]}\indicator_{|x_2 - 0.5| \in (0.25, 5]} + 2 \indicator_{|x_1 - 0.5| \in (0.25, 5]}\indicator_{|x_2 - 0.5| \in (0.3, 4)}
\end{align*}

\paragraph{Hyperparameters.}

We learned the concentration parameters of the Beta distributions, which were restrained to the interval $[0.1, 5.1]$, using a three-layered neural network with 128 units per layer, and leaky ReLU activation for $s \neq s_0$. The parameters corresponding to $p_F(s_0, -)$ were learned separately.

Each iteration consisted of sampling 128 trajectories from the forward policy, and evaluating the TB or the DB loss (with $\alpha=1$), before taking a gradient step on the learned parameters. We trained the models for 20,000 iterations.

For both the DB and TB losses, we used a learning rate of $10^{-3}$ for the parameters of $p_F, p_B, p_F(s_0, -)$ (and $\log Z$ for TB, $u$ for DB). The learning rate was annealed using a discount factor of $0.5$ every $2500$ iterations.

In experiments with learned $p_B$, both $p_F$ and $p_B$ shared parameters except in the output layer. In DB, $p_F$ and $u$ shared parameters except in the output layer.

\paragraph{Evaluation metric.}
We approximated the JSD between the learned the terminating state distribution and the target distribution following the scheme described in \cref{sec:how-to-approximate-JSD}.

\subsection{Low-dimensional stochastic control}
\label{app:pis}

The neural network computing $\mu(\mathbf{x}_t,t)$ had the same architecture as in \citet{zhang2022path}: a pair of 2-layer MLP processing $\mathbf{x}_t$ and a 128-dimensional Fourier feature representation of $t$, followed by a 3-layer MLP on the concatenation of the features derived from $\mathbf{x}_t$ and from $t$. We set $\sigma=5$ for the Gaussians density and $\sigma=1$ for funnel density. Exploration algorithms added a constant $\frac{\epsilon^2}{T}$ to the sampling policy variance at each step; we used a value of $\epsilon=0.1$ linearly annealed to 0 over the course of training. All models are trained for 1500 batches of 300 samples with a learning rate of $10^{-2}$ for the policy and $10^{-1}$ for $\log Z$ (in the case of GFlowNet algorithms); we found that higher learning rates made optimization unstable. We also observed that the off-policy forward KL and TB algorithms continue to improve with longer training, unlike the on-policy algorithms, which experience mode collapse and cease to discover new areas of the density landscape. \Cref{fig:gaussian_modes} shows samples from models trained with various algorithms and highlights the importance of exploration.

\begin{figure}[t]
\centering
\begin{tabular}{@{}c@{}c@{}c@{}c@{}}
& GFlowNet TB & Reverse KL & Forward KL 
\\
\raisebox{0.5in}{No exploration}
& \includegraphics[width=0.2\textwidth]{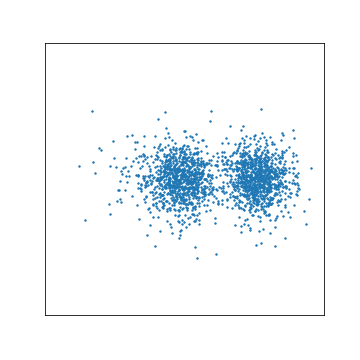}
& \includegraphics[width=0.2\textwidth]{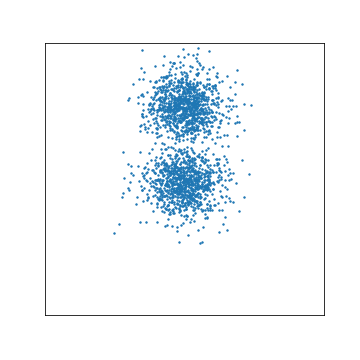}
& \includegraphics[width=0.2\textwidth]{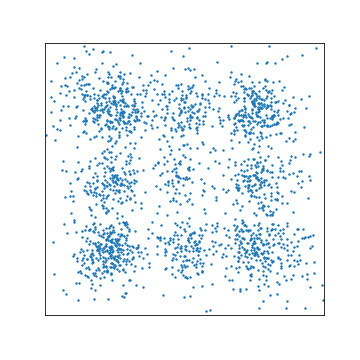}
\\
\raisebox{0.5in}{Fixed exploration $\epsilon=0.1$}
& \includegraphics[width=0.2\textwidth]{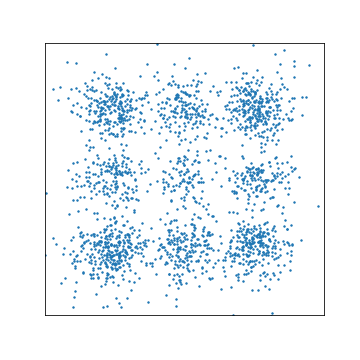}
& \includegraphics[width=0.2\textwidth]{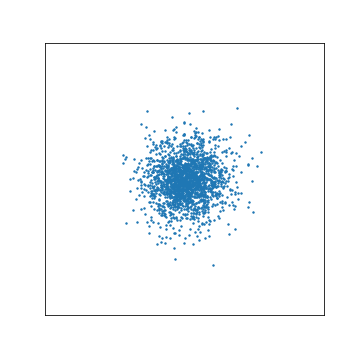}
& \includegraphics[width=0.2\textwidth]{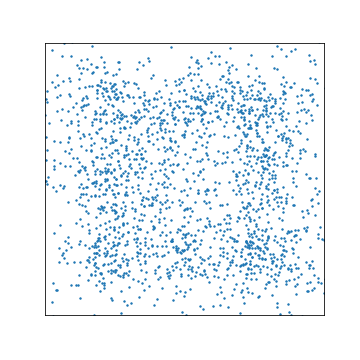}
\\
\raisebox{0.5in}{Annealed exploration $\epsilon=0.1\searrow0$}
& \includegraphics[width=0.2\textwidth]{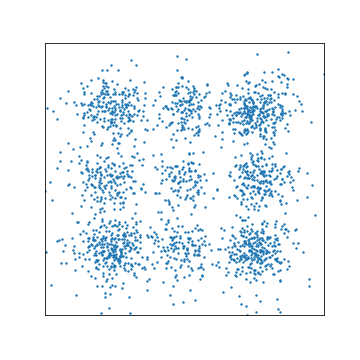}
& \includegraphics[width=0.2\textwidth]{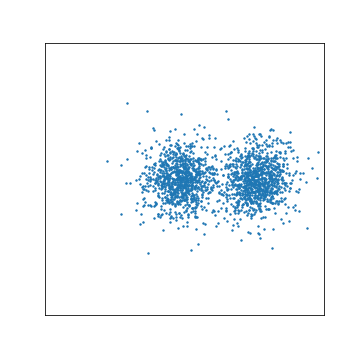}
& \includegraphics[width=0.2\textwidth]{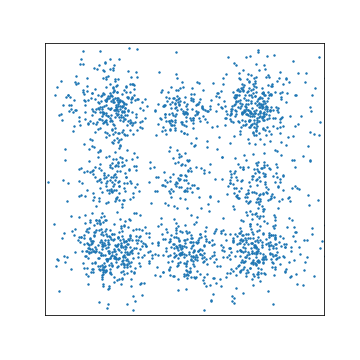}
\\
\raisebox{0.5in}{Target}
& \includegraphics[width=0.2\textwidth]{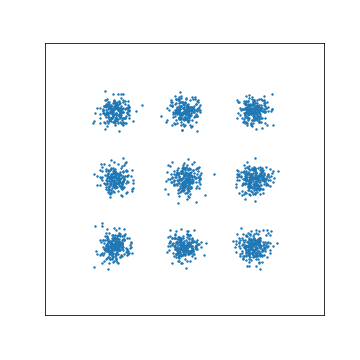}
\end{tabular}
\caption{The target density for 9 Gaussians and samples from models trained with various algorithms trained for 1500 batches. (When trained longer, GFlowNet TB policies with exploration learn to model the modes with higher precision.)}
\label{fig:gaussian_modes}
\end{figure}

The simple and importance-weighted estimates of the log-partition function from \citet{zhang2022path} are defined, in GFlowNet terms, as
\begin{align*}
B&=\frac1K\sum_{i=1}^K\log\frac{R(x_T^{(i)})p_B^{\otimes T}(\tau^{(i)}|x_T^{(i)})}{p_F^{\otimes T}(\tau^{(i)})},\\
B_{\rm RW}&=\log \frac1K\sum_{i=1}^K\frac{R(x_T^{(i)})p_B^{\otimes T}(\tau^{(i)}|x_T^{(i)})}{p_F^{\otimes T}(\tau^{(i)})},
\end{align*}
where the $\tau^{(i)}$ are $K$ trajectories sampled from $P_F$, the $x^{(i)}$ are their terminating states, and $p_F^{\otimes T}(\tau^{(i)}),p_B^{\otimes T}(\tau^{(i)}|x_T^{(i)})$ are the products of forward (resp.\ backward) Gaussian policy densities along the trajectories. 
Note that both estimates would equal the true integral of the reward density for a perfect sampler. Identically to \citet{zhang2022path}, we use $K=2000$ for the 2-dimensional Gaussian mixture dataset and $K=6000$ for the 10-dimensional funnel dataset.

We show extended results, including both simple and importance-weighted variational bounds, in \cref{tab:continuous_control_app}.

\begin{table*}[t]
\caption{Estimation bias of the log-partition function using simple ($B$) and importance-weighted ($B_{\rm RW}$) bounds (mean and standard deviation over 10 runs). The \textbf{bold} value in each column shows the best result and all those statistically equivalent to it ($p>0.1$ under a Welch's $t$-test). Algorithms assuming access to the gradient of the reward (non-black-box) are shown for comparison. Rows marked with $^*$ require importance weighting for gradient estimation. Cells with -- were unstable to optimize. Last three rows from \citet{zhang2022path}.}
\centering
\resizebox{1\linewidth}{!}{
\begin{tabular}{@{}c@{}llcccc}
\toprule
&&&\multicolumn{2}{c}{Gaussian mixture $(d=2)$}&\multicolumn{2}{c}{Funnel $(d=10)$}
\\\cmidrule(lr){4-5}\cmidrule(lr){6-7}
Black box?&&&$B$&$B_{\rm RW}$&$B$&$B_{\rm RW}$\\\midrule
$\checkmark$ & Off-policy
& GFlowNet TB & ${\bf -0.150}\pm0.019$ & ${\bf -0.003}\pm0.011$ & ${\bf -0.219}\pm0.020$ & ${\bf -0.026}\pm0.020$ \\
$\checkmark$ & Off-policy & Reverse KL$^*$ & $-1.706\pm0.537$ & $-1.609\pm0.546$ & -- & -- \\
$\checkmark$ & Off-policy & Forward KL$^*$ & $-0.306\pm0.036$ & ${\bf -0.001}\pm0.013$ & $-2.822\pm0.576$ & $-0.087\pm0.081$ \\\midrule
$\checkmark$ & On-policy
& GFlowNet TB & $-1.409\pm0.427$ & $-1.301\pm0.434$ & $-0.265\pm0.026$ & ${\bf -0.012}\pm0.108$ \\
$\checkmark$ & On-policy & Reverse KL & $-1.348\pm0.397$ & $-1.237\pm0.413$ & $-0.259\pm0.018$ & $-0.040\pm0.023$ \\
$\checkmark$ & On-policy & Forward KL$^*$ & $-0.254\pm0.032$ & $-0.007\pm0.023$ & $-1.384\pm0.284$ & $-0.034\pm0.143$ \\ \midrule
$\checkmark$ & Non-SDE & SMC
&\multicolumn{2}{c}{$-0.362\pm0.293$} & \multicolumn{2}{c}{$-0.216\pm0.157$}
\\\midrule\midrule
 $\times$ & On-policy
 & PIS-NN & $-1.691\pm0.370$ & $-1.192\pm0.482$ & $-0.098\pm0.005$ & $-0.018\pm0.020$ \\\midrule
$\times$ & Non-SDE & HMC
 & \multicolumn{2}{c}{$-1.876\pm0.527$} & \multicolumn{2}{c}{$-0.835\pm0.257$}
\\\bottomrule
\end{tabular}
}
\label{tab:continuous_control_app}
\end{table*}  

\subsection{Stochastic control on a torus environment}
\label{sec:continuous-torus}

\iftoggle{arxiv}{}{To model the surface of a torus, the measurable pointed graph is defined by $\gS = \{s_0\} \cup [0, 2\pi)^2 \times \{t \in \mathbbm{N}, 1 \leq t \leq T\}$, where $t$ denotes the step number and $T$ the trajectory length, and $s_0 = ((0, 0), 0)$. Note that here $[0,2\pi)$ has the topology of the circle, not that induced from the real line.}
The transition kernel $\kappa(s, -)$ for any $s = (\tilde{s}, t)$ when $t < T$ is the product of the Lebesgue measure on $[0, 2\pi)^2$ with the Dirac measure at $t + 1$, and $\kappa((\tilde{s}, T), -) = \delta_{\bot}$ for every $\tilde{s} \in [0, 2\pi)^2$. Similar to \cref{sec:pis}, the reference measure is $\nu = \delta_{s_0} + \sum_{t=1}^T \lambda \otimes \delta_t$, where $\lambda$ is the Lebesgue measure on each copy of the torus $[0, 2\pi)^2$.

We parameterized the densities $p_F$ and $p_B$ with mixtures of independent von Mises distributions defined by a measure of location $\mu$ and a measure of concentration $\kappa$. We considered two tasks in the torus environment defined by different reward functions.

\paragraph{Synthetic multimodal task.}  For this task, we designed a reward density with six modes on the torus surface:
\[
R_6(\psi, \varphi) = (\sin(3\psi) + \cos(2\varphi) + 2)^3.
\]

\paragraph{Molecule conformation task.}  
In this task, we define the reward function using the energy $\mathcal{E}$ of an alanine dipeptide molecule, which depends on the conformation of the molecule $\mathcal{C}$ (spatial arrangement of its atoms). This conformation can be efficiently parametrizeded using internal coordinates: bond lengths, bond angles, and torsion angles \citep{torsionaldiffusion2022, conformersearch2022luca}. For alanine dipeptide, there are four torsion angles largely influencing the energy (see \cref{fig:molecule}). In our experiments, the GFlowNet generates values for the angles $\psi$ and $\varphi$ while keeping all other coordinates fixed. In this way, the support of the reward function remains a torus, and its values are proportional to the Boltzmann distribution with energy $\mathcal{E}$:
\[
R_{AD}(\psi, \varphi) = \exp(-\mathcal{E}(\mathcal{C(\psi, \varphi)})),
\]

\iftoggle{arxiv}{The plots in \cref{fig:torus_results_arxiv} show the results of training a GFlowNet on a toroidal space with a continuous synthetic multimodal reward function $R_6$ (see text) and a reward function defined by the Boltzmann distribution of the alanine dipeptide molecule $R_{AD}$. 
The images represent the density over a discretization of the space $[0, 2\pi)^2$, obtained after fitting KDE with 100,000 samples. The samples to fit the reward densities (\cref{fig:torus_results_arxiv}(\textbf{a-c})) were obtained via rejection sampling, and the GFlowNet densities (\cref{fig:torus_results_arxiv}\textbf{b-d}) use GFlowNet samples from the learned distribution over terminating states $P_\top$. }{
\begin{figure*}[htb]
    \centering
     \begin{subfigure}[b]{0.23\textwidth}
        \centering
         \includegraphics[width=\textwidth]{figures/kde_reward_synth.png}
         \caption{Synthetic reward}
         \label{fig:kde_reward_synth}
     \end{subfigure}
     \hfill
     \begin{subfigure}[b]{0.23\textwidth}
         \centering
         \includegraphics[width=\textwidth]{figures/kde_gfn_synth.png}
         \caption{GFlowNet learned $P_\top$}
         \label{fig:kde_gfn_synth}
     \end{subfigure}
    \hfill
     \begin{subfigure}[b]{0.23\textwidth}
        \centering
         \includegraphics[width=\textwidth]{figures/kde_reward_molecule.png}
         \caption{Molecular energy reward}
         \label{fig:kde_reward_molecule}
     \end{subfigure}
     \hfill
     \begin{subfigure}[b]{0.23\textwidth}
         \centering
         \includegraphics[width=\textwidth]{figures/kde_gfn_molecule.png}
         \caption{GFlowNet learned $P_\top$}
         \label{fig:kde_gfn_molecule}
     \end{subfigure}
\caption{Results of training a GFlowNet on a toroidal space with a continuous synthetic multimodal reward function $R_6$ (see text) and a reward function defined by the Boltzmann distribution of the alanine dipeptide molecule $R_{AD}$. 
The images represent the density over a discretization of the space $[0, 2\pi)^2$, obtained after fitting KDE with 100,000 samples. The samples to fit the reward densities (\cref{fig:kde_reward_synth} and \cref{fig:kde_reward_molecule}) were obtained via rejection sampling, and the GFlowNet densities (\cref{fig:kde_gfn_synth} and \cref{fig:kde_gfn_molecule}) use GFlowNet samples from the learned distribution over terminating states $P_\top$. Note that the topology of the torus imposes periodic boundary conditions on $[0, 2\pi)^2$.} 
\label{fig:torus_results}
\end{figure*}
}

\begin{figure*}[t]
\begin{minipage}[t]{0.48\textwidth}
    \centering
    \vspace{0pt}
    \includegraphics[width=\textwidth]{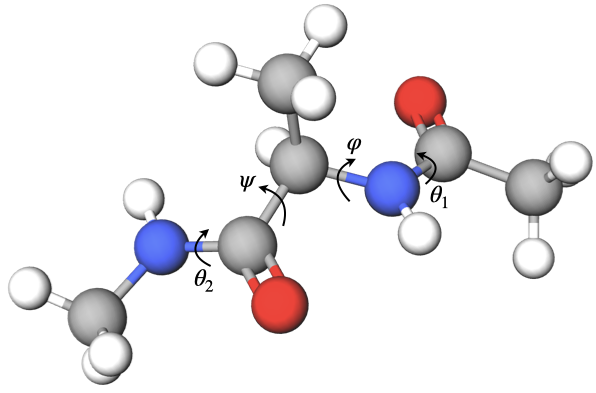}
\end{minipage}\hfill
\begin{minipage}[t]{0.48\textwidth}
    \centering
    \caption{Alanine dipeptide 3D structure. Torsion angles $\psi$, $\varphi$, $\theta_1$, $\theta_2$ have the biggest impact on the energy of the molecule. A pair of torsion angles $\varphi$ and $\psi$ can take any values $\in[0, 2\pi]$, while $\theta_1$ and $\theta_2$ can be either close to $0$ or $\pi$ due to energy barriers \citep{alaninedipeptidestudy}. The image is rendered using MolView \cite{molview}}
    \label{fig:molecule}
\end{minipage}
\end{figure*}

\paragraph{Results.} To evaluate the performance of the GFlowNet trained with the TB loss, we calculated the Jensen-Shannon divergence (see \cref{sec:how-to-approximate-JSD} for details about its estimation) between the learned terminating state distribution $P_\top$ and the normalized reward distribution. We provide a visual representation of learned and reward distributions in \iftoggle{arxiv}{\cref{fig:torus_results_arxiv}}{\cref{fig:torus_results}}. Quantitatively, the GFlowNet achieved a JSD of $0.063$ for the synthetic multimodal task and $0.009$ for the molecule conformation task. These results show that the generalized GFlowNet can model probability densities over non-Euclidean spaces.

\paragraph{Hyperparameters.} We modeled both $p_F$ and $p_B$ with 5-layer perceptrons with 512 hidden units per layer, training the full set of parameters of each model separately. These models output, for each angle $\psi$ and $\varphi$, the location $\mu_i$ and concentration $\kappa_i$ of 5 independent von Mises distributions, mixed with learned weights $w_i$. To take into account the topology of the torus, we encoded input angles with trigonometric transformations ($\sin(k\psi)$, $\cos(k\psi)$, $k=1, \dots, 5$, for both angles $\psi$ and $\varphi$). We used a learning rate of $10^{-5}$ for the model parameters and $10^{-2}$ for $\log Z$, updating the parameters with batches of 100 trajectories of length $T=10$. With the synthetic reward, the model converged in about 5,000 iterations; in the molecular conformation task, we trained for 40k iterations.

\subsection{Posterior over continuous parameters in Bayesian structure learning}
\label{app:bayesian-structure-learning}

\paragraph{Bayesian Networks.} Recall that a Bayesian Network is a probabilistic model, where the joint distribution over $d$ random variables $\{X_{1}, \ldots, X_{d}\}$ factorizes according to a directed acyclic graph (DAG) $G$ as:
\begin{equation*}
    P(X_{1}, \ldots, X_{d}; \theta, G) = \prod_{i=1}^{d} P(X_{i}\mid \mathrm{Pa}_{G}(X_{i}); \theta_{i}),
\end{equation*}
where $\mathrm{Pa}_{G}(X_{i})$ is the set of parents of $X_{i}$ in $G$, and $\theta_{i}$ is the set of parameters for the conditional probability distribution of $X_{i}$. We denote by $\theta = \{\theta_{1}, \ldots, \theta_{d}\}$ the set of all the parameters of this model.

We assume that $\theta \in \Theta_{G}$, where $\Theta_{G}$ is the space of all parameters for the Bayesian Network. Note that this space of parameters depends on the structure $G$ of the Bayesian Network. For example, the Bayesian Network where all the random variables are mutually independent (corresponding to $G$ being empty) has fewer parameters than another Bayesian Network that encodes dependencies between those random variables. We will also denote by $\gG$ the space of DAG over $d$ nodes; the number of elements in this space grows super-exponentially with $d$.

\paragraph{Linear Gaussian model.} In order to compute the exact posterior distribution $P(G, \theta\mid \gD)$ in closed-form, we consider here a linear-Gaussian model for the parametrization of the conditional probability distributions appearing in the Bayesian Network. More precisely, the conditional probability distribution is given by
\begin{align*}
P(X_{i}\mid \mathrm{Pa}_{G}(X_{i}); \theta_{i}) &= \mathcal{N}(X_{i}\mid \mu_{i}, \sigma^{2}) & & \mathrm{where} & \mu_{i} &= \sum_{X_{j}\in\mathrm{Pa}_{G}(X_{i})}\theta_{ij}X_{j}.
\end{align*}
In other words, $X_{i}$ follows a Normal distribution, whose mean $\mu_{i}$ is given by a linear combination of its parents, and with fixed variance $\sigma^{2}$. For this class of models,
\begin{equation*}
    \Theta_{G} \simeq \varprod_{i=1}^{d} \sR^{|\mathrm{Pa}_{G}(X_{i})|}
\end{equation*}

\paragraph{GFlowNet over a mixed state space.} We are using the GFlowNet in order to approximate the joint posterior distribution $P(G, \theta \mid \gD)$, and therefore its terminating states have the form $(G, \theta)$, where $G\in \gG$ is a DAG, and $\theta \in \Theta_{G}$ are the associated parameters. Unlike \citet{nishikawa2022bayesian}, which uses Variational Bayes to update the distribution over parameters $\theta$, we model the distribution over both the graphs and parameters using a single GFlowNet.

The generation of a terminating state follows 2 phases: during the first phase, the DAG $G$ is constructed by adding one edge at a time, starting from the empty graph, following the structure of DAG-GFlowNet \citep{deleu2022bayesian}. To reach a graph $G$ with $k$ edges, we therefore are taking $k$ steps in the GFlowNet. The states traversed during this first phase have no parameters associated to them; we denote by $(G, \sharp) \in \gS$ such an (intermediate) state, where $\sharp \notin \Theta_{G'}$ for any $G'\in \gG$ is a symbol indicating that $G$ has no corresponding parameters.

Then once we have finished adding edges (in practice, this decision is made by selecting a special ``stop'' action), we sample the parameters $\theta \in \Theta_{G}$ associated to $G$ by taking a final step in the GFlowNet to reach the terminating state $(G, \theta) \in \gX$. Since the space of parameters depends on the graph $G$, we define the state space of the GFlowNet as
\begin{align*}
    \gS &= \bigcup_{G\in \gG} \{G\} \times \bar{\Theta}_{G} && \textrm{and} & \gX &= \bigcup_{G\in \gG} \{G\} \times \Theta_{G},
\end{align*}
where $\bar{\Theta}_{G} = \Theta_{G} \cup \{\sharp\}$ indicates the space of parameters, augmented with the special symbol $\sharp$. All the states in this state space are guaranteed to be accessible from the initial state $(G_{0}, \sharp)$, where $G_{0}$ is the empty graph.

\paragraph{Reference kernel.} Given a DAG $G$, the measure $\kappa((G, \sharp), -)$ is the sum of a discrete measure (to transition to another intermediate state $(G', \sharp)$) and a continuous measure (to transition to a terminating state $(G, \theta)$). We can write this measure as
\begin{equation*}
    \kappa((G, \sharp), -) = \sum_{G' \in \mathrm{Ch}(G)}\delta_{(G', \sharp)} + (\delta_{G} \otimes \lambda_{\Theta_{G}}),
\end{equation*}
where $\mathrm{Ch}(G)$ represents the children of $G$ in DAG-GFlowNet \citep{deleu2022bayesian}, i.e. the graphs $G'$ obtained by adding an edge to $G$, and $\lambda_{\Theta_{G}}$ is the Lebesgue measure over $\Theta_{G}$. Moreover, we also have $\kappa((G, \theta), -) = \delta_{\bot}$ for all terminating state $(G, \theta) \in \gX$; in other words, there is no transition from a terminating state other than to the sink state.

The backward reference kernel $\kappa^{b}$ on the other hand is simpler: it is always a discrete transition kernel, regardless of the state. We have
\begin{align*}
    \kappa^{b}((G, \theta), -) &= \delta_{(G, \sharp)} && \textrm{and} & \kappa^{b}((G', \sharp), -) &= \sum_{G \in \mathrm{Pa}(G')}\delta_{(G, \sharp)},
\end{align*}
where $\mathrm{Pa}(G')$ are the parents of $G'$ in DAG-GFlowNet, i.e. they are the graphs obtained by removing a single edge from~$G'$.

\paragraph{Forward transition probability.} In order to define the $P_{F}$, we consider 2 cases: either we have a distribution of the form $P_{F}(G'\mid G)$, where $G'$ is the result of adding an edge to $G$, or a distribution of the form $P_{F}(\theta \mid G)$. Note that here we are using a slight abuse of notation, where $P_{F}(G'\mid G)$ (resp. $P_{F}(\theta\mid G)$) represents $P_{F}((G', \sharp) \mid (G, \sharp))$ (resp. $P_{F}((G, \theta)\mid (G, \sharp))$). The distribution $P_{F}(\theta \mid G)$ is parametrized by a Normal distribution, whose mean and (diagonal) covariance are returned by a neural network. Similar to \citep{deleu2022bayesian}, $P_{B}$ is fixed to the uniform distribution.

\paragraph{Data generation.} We sampled a dataset $\gD$ as follows: (1) we first generated a DAG $G^{*}$ from an Erd\"os-Renyi model, then (2) we sampled the parameters $\theta^{*}$ of the conditional probability distributions from a Normal distribution, each edge having a weight $\theta_{ij}^{*} \sim \gN(0, 1)$, and finally (3) we sampled $N=100$ datapoints from the Bayesian Network described above with $(G^{*}, \theta^{*})$, using ancestral sampling. Note that the ground-truths $G^{*}$ and $\theta^{*}$ are unknown to the GFlowNet, and it only uses the observations from $\gD$.

\paragraph{Evaluations.} To evaluate the quality of the approximation learned by the GFlowNet, and to compare it against the baseline methods based on variational inference \citep{lorch2021dibs,cundy2021bcdnets}, we study the distribution over graphs (discrete component) and the distribution over graphs (continuous component) separately. Recall that since we assume that our model is linear-Gaussian over small graphs ($d \leq 5$), we can compute the exact posterior distribution $P(G, \theta\mid \gD)$ in closed form. 

In \cref{tab:dag-gfn-comparison}, we compared the edge marginals estimated using the posterior approximations to the exact edge marginals. For any pair of random variables $(X_{i}, X_{j})$, this means evaluating the following marginals
\begin{align*}
    P(X_{i}\rightarrow X_{j} \mid \gD) &= \sum_{G\mid X_{i} \in \mathrm{Pa}_{G}(X_{j})}P(G\mid \gD).
\end{align*}
To estimate this marginal using samples $\{(G_{k}, \theta_{k})\}_{k=1}^{K}$ from the GFlowNet (or from the variational inference methods), we can simply use the sample graphs $G_{k}$ in order to get an empirical approximation of the maginal posterior $P(G\mid \gD)$. In other words,
\begin{equation*}
    \hat{P}(X_{i}\rightarrow X_{j}) = \frac{1}{K} \sum_{k=1}^{K} \mathbbm{1}(X_{i} \rightarrow X_{j} \in G_{k})
\end{equation*}
We then report the root mean-square error (RMSE) between the edge marginals estimated using the posterior approximations, and those computed using the exact posterior:
\begin{equation*}
    \mathrm{RMSE}(\hat{P}, P) = \left(\frac{1}{d(d-1)} \sum_{i\neq j} \big(\hat{P}(X_{i} \rightarrow X_{j}) - P(X_{i}\rightarrow X_{j} \mid \gD)\big)^{2}\right)^{1/2}.
\end{equation*}
In \cref{tab:dag-gfn-comparison-graphs}, we also report the RMSE for other marginals: the marginal of having a directed path between two nodes $P(X_{i} \rightsquigarrow X_{j}\mid \gD)$, as well as the marginal of node $X_{i}$ being in the Markov blanket of $X_{j}$ $P(X_{i}\in \mathrm{MarkovBlanket}(X_{j})\mid \gD)$. In addition to the RMSE between those marginals, we also report the Pearson correlation coefficient, as in \citep{deleu2022bayesian}. Note that no metric is reported on graphs over $d=3$ nodes for BCD Nets \citep{cundy2021bcdnets} due to technical reasons (the method is not applicable for graphs smaller than $4$ nodes).

\begin{table}[t]
    \centering
    \caption{Comparison between GFlowNet and other methods based on variational inference on the Bayesian structure learning task, for different marginals of interest of the distribution over graphs $P(G\mid \gD)$.}
    \vspace{0.5ex}
    \label{tab:dag-gfn-comparison-graphs}
    \begin{tabular}{llcccccc}
        \toprule
        & & \multicolumn{3}{c}{RMSE} & \multicolumn{3}{c}{Pearson's r}\\\cmidrule(lr){3-5}\cmidrule(lr){6-8}
        \multicolumn{2}{c}{Number of variables ($d$)} & 3 & 4 & 5 & 3 & 4 & 5 \\
        \midrule
        \multirow{3}{*}{Edges} & BCD Nets & -- & $2.13 \times 10^{-1}$ & $2.61 \times 10^{-1}$ & -- & $0.8578$ & $0.7886$ \\
        & DiBS & $3.28 \times 10^{-1}$ & $2.95 \times 10^{-1}$ & $3.15 \times 10^{-1}$ & $0.6903$ & $0.7085$ & $0.7170$ \\
        & GFlowNet & $\mathbf{1.50 \times 10^{-2}}$ & $\mathbf{1.61 \times 10^{-2}}$ & $\mathbf{1.80 \times 10^{-2}}$ & $\mathbf{0.9993}$ & $\mathbf{0.9990}$ & $\mathbf{0.9990}$ \\
        \midrule
        \multirow{3}{*}{Paths} & BCD Nets & -- & $2.59 \times 10^{-1}$ & $3.08 \times 10^{-1}$ & -- & $0.8378$ & $0.7500$ \\
        & DiBS & $3.50 \times 10^{-1}$ & $3.35 \times 10^{-1}$ & $3.48 \times 10^{-1}$ & $0.6951$ & $0.7080$ & $0.7020$ \\
        & GFlowNet & $\mathbf{3.39 \times 10^{-3}}$ & $\mathbf{1.07 \times 10^{-2}}$ & $\mathbf{1.99 \times 10^{-2}}$ & $\mathbf{1.0000}$ & $\mathbf{0.9996}$ & $\mathbf{0.9989}$ \\
        \midrule
        \multirow{3}{*}{\shortstack[l]{Markov\\blanket}} & BCD Nets & -- & $3.02 \times 10^{-1}$ & $3.49 \times 10^{-1}$ & -- & $0.8831$ & $0.7864$ \\
        & DiBS & $3.88 \times 10^{-1}$ & $3.80 \times 10^{-1}$ & $4.45 \times 10^{-1}$ & $0.7840$ & $0.7892$ & $0.6888$ \\
        & GFlowNet & $\mathbf{2.14 \times 10^{-2}}$ & $\mathbf{2.38 \times 10^{-2}}$ & $\mathbf{2.83 \times 10^{-2}}$ & $\mathbf{0.9986}$ & $\mathbf{0.9982}$ & $\mathbf{0.9980}$ \\
        \bottomrule
    \end{tabular}
\end{table}

To evaluate the accuracy of the approximation on the continuous part of the distribution, we report in \cref{tab:dag-gfn-comparison} the (average) negative log-probability of the sampled parameters $\theta_{k}$ from the different approximations (GFlowNet, DiBS \citep{lorch2021dibs}, and BCD Nets \citep{cundy2021bcdnets}) against the exact posterior distribution $P(\theta \mid G, \gD)$. More precisely, we compute
\begin{equation*}
    \mathrm{Measure}_{\theta}(\hat{P}, P) = -\frac{1}{K|\gD|}\sum_{k=1}^{K}\log P(\theta_{k}\mid G_{k}, \gD)
\end{equation*}
In other words, the lower this metric is, the more likely the samples $\theta_{k}$ from those approximations are under the exact posterior distribution over parameters.

\subsection{Connections with diffusion models}
\label{appendix:ddpm}

\begin{figure}
\centering
\includegraphics[width=.48\linewidth]{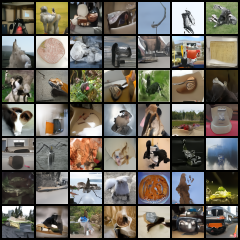}
\includegraphics[width=.48\linewidth]{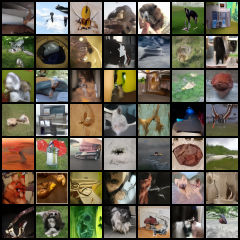}
\caption{Generated samples from MLE-GFN on ImageNet-$32$ dataset.}
\label{fig:ddpm}
\end{figure}
We train a diffusion model-specified GFlowNet with $T=100$ for $200,000$ steps. This is much shorter than other state-of-the-art work (such as \citet{Lipman2022FlowMF}) and takes less than $3$ days on a single V100 GPU. All NLL results are computed in bits per dimension (BPD). We use $50,000$ generated samples to compute the FID score for evaluating the sample quality. The Adam learning rate is $2\times 10^{-4}$ for the forward policy and $2\times 10^{-5}$ for the backward policy. The parameter of backward policy is $\{\phi_i\}_{i=1}^T$, where the variance coefficient $\beta_i$ satisfies $\beta_i = \bar\beta_i\cdot\exp{(\phi_i)}$ and $\bar\beta_i$ is the original variance coefficient used in \citet{ho2020ddpm}. For details about the MLE-GFN algorithm, we refer to \citet{zhang2023unifying}.
\cref{fig:ddpm} shows examples of images generated by the algorithm.

\section{Additional lemmas and propositions}
In this section, we will write and prove lemmas and propositions that would help explain and shorten the proofs of the main results presented in the main text and in \cref{appendix:transition-kernels}, which we provide in \cref{appendix:proofs}.

The following lemma ensures that in a measurable pointed graph, $\{s_0\}$ is not accessible.
\begin{lemma}
$\forall s \in \gS, \ \kappa(s, \{s_0\})=0$
\label{lemma:no-incoming-so-edge-appendix}
\end{lemma}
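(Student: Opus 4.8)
The plan is to establish the identity separately for states $s\in\gS^\circ=\gS\setminus\{s_0\}$ and for the source $s=s_0$, since the compatibility condition \cref{eq:backward-kernel} between $\kappa$ and $\kappa^b$ deliberately excludes the diagonal point $(s_0,s_0)$ and therefore carries no information about $\kappa(s_0,\{s_0\})$ directly.

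For $s\in\gS^\circ$, I would apply \cref{eq:backward-kernel} to the set $B=\gS^\circ\times\{s_0\}$, which is admissible because its first coordinate avoids $s_0$, so $(s_0,s_0)\notin B$, and also avoids $\bot$, so $(\bot,\bot)\notin B$. By Tonelli's theorem the left-hand side equals $\int_{\gS^\circ}\kappa(s,\{s_0\})\,\nu(ds)$, while the right-hand side equals $\nu(\{s_0\})\,\kappa^b(s_0,\gS^\circ)$, which is $0$ by \cref{eq:trivial-measure}. Hence the non-negative function $g(s)\defeq\kappa(s,\{s_0\})$ integrates to zero over $\gS^\circ$ against $\nu$, so it vanishes $\nu$-almost everywhere there. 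To upgrade this to a pointwise statement I would use the continuity hypothesis of \cref{def:measurable_pointed_graph}: $g$ is continuous on $\bar\gS$, so $\{g>0\}$ is open; if it intersected $\gS^\circ$, then $\{g>0\}\cap\gS^\circ$ would be a nonempty open subset of $\bar\gS$ (recall $\gS^\circ$ is clopen in the disjoint union topology), hence of positive $\nu$-measure by strict positivity of $\nu$, contradicting $g=0$ $\nu$-a.e.\ on $\gS^\circ$. Thus $\kappa(s,\{s_0\})=0$ for all $s\in\gS^\circ$.

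For $s=s_0$, set $c\defeq\kappa(s_0,\{s_0\})$ and iterate. Decomposing the integral $\kappa^{n+1}(s_0,\{s_0\})=\int_{\bar\gS}\kappa^n(s_0,ds)\,\kappa(s,\{s_0\})$ over the partition $\{s_0\}\sqcup\gS^\circ\sqcup\{\bot\}$, the $\gS^\circ$ term vanishes by the previous step and the $\{\bot\}$ term by \cref{eq:absorption} (so $\kappa(\bot,\{s_0\})=\delta_\bot(\{s_0\})=0$), leaving $\kappa^{n+1}(s_0,\{s_0\})=c\,\kappa^n(s_0,\{s_0\})$; since $\kappa^0(s_0,\{s_0\})=1$ this gives $\kappa^n(s_0,\{s_0\})=c^n$ for all $n\ge0$. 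Now \cref{eq:finitely-absorbing} provides $N>0$ with ${\rm supp}(\kappa^N(s_0,-))=\{\bot\}$; if $c^N=\kappa^N(s_0,\{s_0\})$ were positive then, as $\{s_0\}$ is open, $s_0$ would lie in that support, which is impossible since $s_0\ne\bot$. Hence $c^N=0$, so $c=0$, completing the argument.

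The main obstacle is exactly the $s=s_0$ case: because \cref{eq:backward-kernel} is structurally blind to the cell $(s_0,s_0)$, ruling out a ``self-loop'' at the source genuinely requires the finitely absorbing hypothesis, and the argument would indeed fail for a graph with a cycle through $s_0$. The other delicate point is the promotion of the $\nu$-almost-everywhere identity to an everywhere identity on $\gS^\circ$, which is precisely what makes both the continuity of $\kappa$ and the strict positivity of $\nu$ indispensable.
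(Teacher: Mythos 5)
Your proof is correct, but it takes a genuinely different route from the paper's. The paper argues by contradiction using the \emph{accessibility} axiom \cref{eq:accessibility}: if the open set $\gN=\{s:\kappa(s,\{s_0\})>0\}$ were nonempty, accessibility would give $\kappa^n(s_0,\gN)>0$ for some $n$, hence $\kappa^{n+1}(s_0,\{s_0\})>0$, and iterating this cycle through $s_0$ yields $\kappa^{m(n+1)}(s_0,\{s_0\})>0$ for all $m$, contradicting \cref{eq:finitely-absorbing}. You instead never invoke accessibility: on $\gS^\circ$ you extract the conclusion from the forward/backward compatibility \cref{eq:backward-kernel} together with the triviality of $\kappa^b(s_0,-)$ from \cref{eq:trivial-measure}, using strict positivity of $\nu$ and continuity to promote the $\nu$-a.e.\ statement to a pointwise one; you then isolate the possible self-loop at $s_0$ and kill it with \cref{eq:absorption} and \cref{eq:finitely-absorbing} via the clean identity $\kappa^n(s_0,\{s_0\})=c^n$. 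Both arguments are sound and both ultimately lean on finite absorption to exclude a cycle at $s_0$; the paper's version is shorter and shows the lemma already follows from the purely forward axioms (accessibility, continuity, finite absorption), whereas yours trades accessibility for the backward-kernel axioms and strict positivity of $\nu$, and in exchange makes explicit exactly which hypothesis rules out each kind of incoming mass at $s_0$ (the duality for $\gS^\circ$, absorption for $\bot$, finite absorption for the self-loop). Your closing remarks about why the $(s_0,s_0)$ cell and the a.e.-to-everywhere upgrade are the delicate points are accurate.
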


\begin{proof}
We will present a proof by contradiction.

Let $\gN = \{ s \in \bar{\gS}, \ \kappa(s, \{s_0\}) > 0\}$, and assume that $\gN \neq \emptyset$. $(0, \infty)$ being an open set, and $s \mapsto \kappa(s, \{s_0\})$ continuous, this means that $\gN \in \bar{\gT}$ (i.e. it is open). From \cref{eq:accessibility}, it follows that there is some $n \geq 0$ such that $\kappa^n(s_0, \gN) >0$.

Applying \cref{eq:n-step-measure}, we obtain:
\begin{align*}
    \kappa^{n+1}(s_0, \{s_0\}) = \int_{\bar{\gS}} \kappa^n(s_0, ds') \kappa(s', \{s_0\}) 
    \geq \int_{\gN} \kappa^n(s_0, ds') \kappa(s', \{s_0\})
    >0.
\end{align*}
Writing, for all $m > 1$:
\begin{align*}
    \kappa^{m(n+1)}(s_0, \{s_0\}) &= \int_{\bar{\gS}} \kappa^{(m-1)(n+1)}(s_0, ds') \kappa^{n+1}(s', \{s_0\}) 
    \geq \int_{\{s_0\}} \kappa^{(m-1)(n+1)}(s_0, ds') \kappa^{n+1}(s', \{s_0\}) \\
    &= \kappa^{(m-1)(n+1)}(s_0, \{s_0\}) \kappa^{n+1}(s_0, \{s_0\}),
\end{align*}
it follows from a simple induction that $\forall m \geq 1$, $\kappa^{m(n+1)}(s_0, \{s_0\}) > 0$, which contradicts \cref{eq:finitely-absorbing}.

$\gN$ is thus necessarily empty.
\end{proof}

The following lemma ensures a compatibility between the definition of the terminating states $\gX$ and $\kappa^b(\bot, -)$:
\begin{lemma}
The support of $\kappa^b(\bot, -)$ is the closure of $\gX$.
\end{lemma}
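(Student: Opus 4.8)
The plan is to first determine the measure $\kappa^b(\bot,-)$ on all of $\Sigma$ from the compatibility relation \cref{eq:backward-kernel} and then read off its support. I would test \cref{eq:backward-kernel} against $f(s,s')=\indicator_{A}(s)\,\indicator_{\{\bot\}}(s')$ for an arbitrary $A\in\Sigma$ with $\bot\notin A$; this is admissible since then $f(s_0,s_0)=0$ (as $s_0\neq\bot$) and $f(\bot,\bot)=0$ (as $\bot\notin A$). The left-hand side is $\int_A\kappa(s,\{\bot\})\,\nu(ds)$, and because $A\subseteq\gS$, the definition of $\gX$ (\cref{def:terminating-states}) together with \cref{eq:kappa-terminal-1} give $\kappa(s,\{\bot\})=\indicator_{\gX}(s)$ for every $s\in\gS$, so the left-hand side equals $\nu(A\cap\gX)$. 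The right-hand side is $\nu(\{\bot\})\,\kappa^b(\bot,A)$, and since $\{\bot\}$ is open and $\nu$ is strictly positive and $\sigma$-finite we have $0<\nu(\{\bot\})<\infty$; hence $\kappa^b(\bot,A)=\nu(A\cap\gX)/\nu(\{\bot\})$ for all $A\in\Sigma$ with $\bot\notin A$.

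The relation \cref{eq:backward-kernel} says nothing about the remaining value $\kappa^b(\bot,\{\bot\})$ --- it excludes the diagonal point $(\bot,\bot)$ --- so I would invoke $\kappa^b(\bot,\{\bot\})=0$, which holds for the canonical backward reference kernel of \cref{prop:existence-of-backward-kernel}; this is exactly what is needed, because $\gX\subseteq\gS=\bar{\gS}\setminus\{\bot\}$ and $\gS$ is closed, so $\bot\notin\overline{\gX}$ and a nonzero atom at $\bot$ would spuriously enlarge the support. With $\kappa^b(\bot,-)=\nu(-\cap\gX)/\nu(\{\bot\})$ established as measures on $(\bar{\gS},\Sigma)$, the support computation needs only the fact that $\gX$ is open: by the continuity hypothesis on $s\mapsto\kappa(s,\{\bot\})$ in \cref{def:measurable_pointed_graph}, the set $\{\kappa(\cdot,\{\bot\})>0\}=\gX\cup\{\bot\}$ is open, and intersecting with the clopen set $\gS$ shows $\gX$ is open. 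Then for every open $U\subseteq\bar{\gS}$, $\kappa^b(\bot,U)=\nu(U\cap\gX)/\nu(\{\bot\})$ with $U\cap\gX$ open, so strict positivity of $\nu$ gives $\kappa^b(\bot,U)>0$ iff $U\cap\gX\neq\emptyset$, which for open $U$ is equivalent to $U\cap\overline{\gX}\neq\emptyset$. Hence $x\in\mathrm{supp}(\kappa^b(\bot,-))$ iff every open neighborhood of $x$ meets $\overline{\gX}$, i.e.\ iff $x\in\overline{\gX}$, which is the claim.

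The computation is short once the test function $f=\indicator_{A\times\{\bot\}}$ is chosen, so I expect the main subtleties to be the two topological points: first, establishing that $\gX$ is open, so that strict positivity of $\nu$ can be applied to the sets $U\cap\gX$ --- this relies on continuity of $\kappa(\cdot,\{\bot\})$ and the disjoint-union topology; and second, the behaviour at the sink, where \cref{eq:backward-kernel} is silent and one must use $\kappa^b(\bot,\{\bot\})=0$ (as enforced in \cref{prop:existence-of-backward-kernel}) to obtain equality with $\overline{\gX}$ rather than only the inclusion $\overline{\gX}\subseteq\mathrm{supp}(\kappa^b(\bot,-))$.
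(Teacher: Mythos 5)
Your proof is correct and rests on the same idea as the paper's: applying \cref{eq:backward-kernel} to functions supported on $A\times\{\bot\}$ to identify $\kappa^b(\bot,A)$ with $\nu(A\cap\gX)/\nu(\{\bot\})$, then combining strict positivity of $\nu$ with openness of $\gX$. You are in fact more careful than the paper on two points: the paper's converse direction implicitly needs $B\cap\gX$ to be a nonempty \emph{open} set before strict positivity of $\nu$ applies, which you justify via continuity of $\kappa(\cdot,\{\bot\})$; and you correctly observe that \cref{eq:backward-kernel} leaves the atom $\kappa^b(\bot,\{\bot\})$ unconstrained, so equality of the support with $\overline{\gX}$ (rather than $\overline{\gX}\cup\{\bot\}$) genuinely requires the normalization $\kappa^b(\bot,\{\bot\})=0$ from \cref{prop:existence-of-backward-kernel} --- a hypothesis the paper's proof uses only tacitly.
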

\begin{proof}
Let $s$ be an element in the support of $\kappa^b(\bot, -)$. By definition of the support, it means that for any $B \in \gT$ containing $s$, there is some $B' \subseteq B$ such that $B' \subseteq \gX$. In particular, $B \cap \gX \neq \emptyset$. This means that $s$ is a point of closure of $\gX$.

Conversely, let $s$ be a point of closure of $\gX$. Given any open set $B \in \gT$ containing $s$, $s$ being a closure point means that $B \cap \gX$ (which is measurable) is non-empty. Following \cref{eq:backward-kernel}, we get:
\begin{align*}
    \nu(\{\bot\}) \kappa^b(\bot, B\cap\gX) = \int_{\bar{\gS}} \indicator_{B \cap \gX}(s') \nu(ds') \kappa(s', \{\bot\})
\end{align*}
The RHS of the previous equality is positive because $\nu$ is a strictly positive measure and $\kappa(s', \{\bot\}) > 0$ for every $s' \in \gX$, following \cref{def:terminating-states}. Hence $\kappa^b(\bot, B\cap\gX) > 0$. It follows that $\kappa^b(\bot, B) \geq \kappa^b(\bot, B\cap\gX) > 0$. Meaning that $s$ is indeed within the support of $\kappa^b(\bot, -)$.
\end{proof}

The proof of \cref{lemma:terminating-state-dist-PF} relies on an the following intermediary lemma, which relates the $n$-step trajectory measures $P_F^{\otimes{n}}(s, -)$ to the $n$-step measures $P_F^n(s, -)$ defined by \cref{eq:n-step-measure}.
\begin{lemma}
    Let $P_F$ be a transition kernel on $(\bar{\gS}, \Sigma)$. For every $s \in \bar{\gS}, \ n \geq 0$, and for any bounded measurable function $f: \bar{\gS} \to \R$, we have:
    \begin{equation}
        \label{eq:P_F^n}
        \int_{\bar{\gS}^{n+1}} f(s') P_F^{\otimes{n}}(s, ds_1\dots ds_n ds') = \int_{\bar{\gS}} f(s') P_F^n(s, ds')
    \end{equation}
    \label{lemma:P_F^n}
\end{lemma}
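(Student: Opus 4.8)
The plan is to argue by induction on $n$, leaning on the recursion \eqref{eq:n-step-trajectory-measure-recursion} for the trajectory measures and on the recursion \eqref{eq:n-step-measure} for the $n$-step measures. The base case $n=0$ is immediate: $P_F^{\otimes 0}(s,-)=\delta_s=P_F^0(s,-)$ by \eqref{eq:n-step-trajectory-measure-initial-condition} and \eqref{eq:n-step-measure}, so both sides of \eqref{eq:P_F^n} evaluate $f$ at $s$.

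For the inductive step, I assume \eqref{eq:P_F^n} holds at rank $n$ for every bounded measurable function and fix a bounded measurable $f:\bar{\gS}\to\R$. Set $h(t)=\int_{\bar{\gS}}f(t')P_F(t,dt')$; this is measurable because $t\mapsto P_F(t,B)$ is measurable for every $B$ (the extension from indicators to bounded measurable $f$ being the usual simple-function/monotone-convergence argument), and it is bounded in the situations of interest, where each $P_F(t,-)$ is a probability measure. Using $P_F^{\otimes n+1}(s,-)=P_F^{\otimes n}(s,-)\otimes P_F$, in which the appended kernel $P_F$ acts on the current endpoint coordinate of the length-$(n{+}1)$ trajectory, and writing the integral over $\bar{\gS}^{n+2}$ as an iterated integral (valid for bounded measurable integrands, cf.\ \eqref{eq:product-of-measures}), the integral against $P_F^{\otimes n+1}(s,-)$ of $f$ evaluated at the last coordinate becomes the integral against $P_F^{\otimes n}(s,-)$ of $h$ evaluated at its (new) last coordinate. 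The induction hypothesis, applied to $h$, turns this into $\int_{\bar{\gS}}h(t)P_F^n(s,dt)$. Finally, by the recursion $P_F^{n+1}(s,-)=P_F^n(s,-)P_F$ and the definition \eqref{eq:product-measure-kernel} of a measure--kernel product,
\[
\int_{\bar{\gS}}h(t)P_F^n(s,dt)=\int_{\bar{\gS}}\!\Big(\int_{\bar{\gS}}f(t')P_F(t,dt')\Big)P_F^n(s,dt)=\int_{\bar{\gS}}f(t')P_F^{n+1}(s,dt'),
\]
which is the right-hand side of \eqref{eq:P_F^n} at rank $n+1$, completing the induction.

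The only step requiring real care is the bookkeeping of coordinates: one must verify that the kernel appended in passing from $P_F^{\otimes n}(s,-)$ to $P_F^{\otimes n+1}(s,-)$ is applied to the trajectory's current endpoint, so that marginalizing it out leaves a function of that endpoint (namely $h$), which is precisely the shape to which the induction hypothesis applies. The supporting measure-theoretic facts — measurability of $t\mapsto h(t)$, and the identities $\int g\,d(\mu\otimes\kappa)=\iint g(s,s')\,\mu(ds)\,\kappa(s,ds')$ and $\int g\,d(\nu\kappa)=\iint g(s')\,\nu(ds)\,\kappa(s,ds')$ for bounded measurable $g$ — are routine consequences of the monotone class theorem and need not be spelled out in detail.
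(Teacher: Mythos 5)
Your proposal is correct and follows essentially the same route as the paper's proof: an induction on $n$ whose inductive step unfolds $P_F^{\otimes n+1}(s,-)=P_F^{\otimes n}(s,-)\otimes P_F$, integrates out the last coordinate to obtain the auxiliary function $h(t)=\int f(t')P_F(t,dt')$ (the paper's $g$), applies the induction hypothesis to it, and concludes via $P_F^{n+1}(s,-)=P_F^n(s,-)P_F$. The measurability of $h$ is justified the same way as in the paper's footnote, so nothing is missing.
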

\begin{proof}
    We prove the lemma by induction on $n$. First, for $n=0$, using \cref{eq:n-step-trajectory-measure-initial-condition}
    \begin{equation*}
        \int_{\bar{\gS}} f(s') P_F^{\otimes{0}}(s, ds') = f(s) = \int_{\bar{\gS}} f(s') P_F^0(s, ds').
    \end{equation*}
    Then, assuming that \cref{eq:P_F^n} is satisfied for some $n \geq 0$, we get:
    \begin{align*}
        \int_{\bar{\gS}^{n+2}} f(s') P_F^{\otimes{n+1}}(s, ds_1\dots ds_{n+1} ds')
        &=\int_{\bar{\gS}^{n+2}} f(s') P_F^{\otimes{n}}(s, ds_1\dots ds_{n+1} )P_F(s_{n+1}, ds') \\
        &=\int_{\bar{\gS}^{n+1}} \underbrace{\int_{\bar{\gS}} f(s') P_F(s_{n+1}, ds')}_{\triangleq g(s_{n+1})}P_F^{\otimes{n}}(s, ds_1\dots ds_{n+1} )\\
        &=\int_{\bar{\gS}} g(s_{n+1}) P_F^n(s, ds_{n+1})
        =\iint_{\bar{\gS}\times\bar{\gS}} f(s') P_F(s_{n+1}, ds')P_F^n(s, ds_{n+1})\\
        &=\int_{\bar{\gS}} f(s') P_F^{n+1}(s, ds_{n+1}),
    \end{align*}
    where we applied the inductive hypothesis to a new bounded and measurable\footnote{This can be seen by writing $f=f^+ - f^-$, where $f^+, f^-$ are non-negative, writing each of $f^+, f^-$ as a limit of step functions, and using the monotone convergence theorem.} function $g$, and applied the recursive definition of $P_F^{n+1}$.
\end{proof}

The next proposition is crucial in proving \cref{thm:fm-implies-correct-sampling}.
\begin{proposition}
    Let $F=(\mu, P_F)$ be a flow over $G$ (i.e. $F$ satisfies the flow-matching conditions \cref{eq:flow-matching-conditions}), then the measure $u$ defined by:
    \begin{equation}
        \label{eq:incomplete-trajectory-termination}
        u: B \in \Sigma_{| \gS} \mapsto \sum_{n=0}^{\infty} P_F^n(s_0, B),
    \end{equation}
    is finite, and satisfies for all $B \in \Sigma_{| \gS}$
    \begin{equation}
        \label{eq:incomplete-trajectory-termination-Z}
        \mu(\{s_0\}) u(B) = \mu(B)
    \end{equation}
    \label{prop:incomplete-trajectory-termination}
\end{proposition}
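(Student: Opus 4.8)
The plan is to prove the two assertions separately — finiteness of $u$, then the identity $\mu(\{s_0\})u(B)=\mu(B)$ — the second by reducing it to an inductive consequence of flow matching. Throughout I would use two elementary consequences of the axioms: $P_F(\bot,-)=\delta_\bot$ (forced by $P_F(\bot,-)\ll\kappa(\bot,-)=\delta_\bot$, \cref{eq:absorption}, together with $P_F(\bot,-)$ being a probability measure), and $P_F(s,\{s_0\})=0$ for every $s\in\bar\gS$ (from \cref{lemma:no-incoming-so-edge-appendix} and absolute continuity of $P_F$ w.r.t.\ $\kappa$). For finiteness: since $G$ is finitely absorbing with maximal trajectory length $N$, the support of $\kappa^N(s_0,-)$ is $\{\bot\}$ (\cref{eq:finitely-absorbing}); as $\gS$ is open and disjoint from that support, $\kappa^N(s_0,\gS)=0$, and iterating \cref{eq:n-step-measure} with $\kappa(\bot,-)=\delta_\bot$ gives $\kappa^n(s_0,\gS)=0$ for all $n\geq N$. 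By \cref{lemma:absolute-continuity-powers} the same holds for $P_F^n$, so the series defining $u(B)$ truncates at $n=N-1$ and $u$ is finite with $u(\gS)\leq N$.

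For the identity I would decompose $B\in\Sigma_{|\gS}$ as $(B\cap\{s_0\})\sqcup(B\cap\gS^\circ)$ and handle the pieces. On $\{s_0\}$: \cref{lemma:no-incoming-so-edge-appendix} gives $P_F^n(s_0,\{s_0\})=0$ for $n\geq1$, so $u(\{s_0\})=P_F^0(s_0,\{s_0\})=1$ and $\mu(\{s_0\})u(\{s_0\})=\mu(\{s_0\})$. For $B\subseteq\gS^\circ$, I would prove by induction on $k\geq1$ that
\[
\mu(B)=\mu(\{s_0\})\sum_{n=1}^{k}P_F^{n}(s_0,B)+\int_{\gS^\circ}P_F^{k}(s,B)\,\mu(ds).
\]
The base case is \cref{eq:flow-matching-conditions} applied to $f=\indicator_B$ (legitimate since $s_0\notin B$), splitting the outer integral over $\gS$ into its parts $\{s_0\}$ and $\gS^\circ$. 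The inductive step applies \cref{eq:flow-matching-conditions} to $f(\cdot)=P_F^{k}(\cdot,B)-P_F^{k}(s_0,B)\,\indicator_{\{s_0\}}(\cdot)$, which satisfies $f(s_0)=f(\bot)=0$ (the latter because $P_F^{k}(\bot,B)=\delta_\bot(B)=0$ for $B\subseteq\gS^\circ$); simplifying the two sides with $P_F(s,\{s_0\})=0$ and $P_FP_F^{k}=P_F^{k+1}$ turns the $k$-th identity into the $(k{+}1)$-st.

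Taking $k=N$, the first term equals $\mu(\{s_0\})\sum_{n=1}^{N-1}P_F^{n}(s_0,B)=\mu(\{s_0\})u(B)$ because $P_F^N(s_0,\gS)=0$, so it only remains to show the tail $\int_{\gS^\circ}P_F^{N}(s,B)\,\mu(ds)$ vanishes. For this I need the structural fact that \emph{every} state is absorbed within $N$ steps, i.e.\ $\kappa^{N}(s,\gS)=0$ for all $s\in\gS$ (then $P_F^{N}(s,B)\leq P_F^{N}(s,\gS)=0$ on $\gS^\circ$ by absolute continuity, so the integrand is identically zero). To prove it, note $s\mapsto\kappa^{N}(s,\gS)$ is lower semicontinuous — it is assembled from the continuous maps $s\mapsto\kappa(s,C)$ by iterated integration of nonnegative continuous integrands, with $\gS$ approximated from inside by sets of finite $\kappa$-measure and monotone convergence — so $V:=\{s:\kappa^{N}(s,\gS)>0\}$ is open, and $V\subseteq\gS^\circ$ since $\kappa^{N}(s_0,-)$ and $\kappa^{N}(\bot,-)=\delta_\bot$ are carried by $\{\bot\}$. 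If $V$ were nonempty, accessibility \cref{eq:accessibility} would supply $m$ (necessarily $m\geq1$, as $s_0\notin V$) with $\kappa^{m}(s_0,V)>0$, and then $\kappa^{m+N}(s_0,\gS)\geq\int_{V}\kappa^{m}(s_0,ds')\,\kappa^{N}(s',\gS)>0$, contradicting $\kappa^{n}(s_0,\gS)=0$ for $n\geq N$. Hence $V=\emptyset$, the tail vanishes, and reassembling the two pieces of $B$ gives $\mu(\{s_0\})u(B)=\mu(B)$ on all of $\Sigma_{|\gS}$.

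The main obstacle is that last structural fact — that accessibility from $s_0$ together with finite absorption forces every state (not just $s_0$) to reach $\bot$ within $N$ steps — and, within it, the lower semicontinuity of $s\mapsto\kappa^{N}(s,\gS)$, which is what lets one inflate "$\kappa^{N}(s^{*},\gS)>0$'' to an open set on which accessibility can be invoked; the $\sigma$-finiteness (rather than plain finiteness) of $\kappa$ is the technical nuisance to be careful about there. The rest — the truncation of $u$, the inductive identity, and the $\{s_0\}$ case — is routine manipulation of the flow-matching equation and the Chapman--Kolmogorov recursion.
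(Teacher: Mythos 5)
Your proof is correct, but it takes a genuinely different route from the paper's for the main identity. The paper stratifies $\gS$ into layers $\gS_0,\dots,\gS_{N-1}$ according to the largest $n$ with $s\in\mathrm{supp}(P_F^n(s_0,-))$ and runs a strong induction over layers: the key step shows that the ``parent set'' $B'=\{s':P_F(s',B)>0\}$ of a piece of layer $n{+}1$ is open (by continuity of $s\mapsto P_F(s,B)$) and contained in earlier layers, and then pushes the identity forward one layer via flow matching. You instead unroll flow matching \emph{in time}, proving $\mu(B)=\mu(\{s_0\})\sum_{n=1}^{k}P_F^{n}(s_0,B)+\int_{\gS^\circ}P_F^{k}(s,B)\,\mu(ds)$ by induction on $k$ (your choice of test function $f(\cdot)=P_F^{k}(\cdot,B)-P_F^{k}(s_0,B)\indicator_{\{s_0\}}(\cdot)$ is legitimate and the bookkeeping with $P_F(s,\{s_0\})=0$ and Chapman--Kolmogorov checks out), and then kill the tail at $k=N$ via the structural lemma that $\kappa^{N}(s,\gS)=0$ for \emph{every} $s\in\gS$. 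That lemma is true and your proof of it is sound: integrating a nonnegative measurable function against a setwise-continuous kernel is a supremum of continuous functions, hence lower semicontinuous, so $V=\{s:\kappa^N(s,\gS)>0\}$ is open and accessibility plus $\kappa^{n}(s_0,\gS)=0$ for $n\geq N$ forces $V=\emptyset$. Both arguments lean on the same three axioms (continuity of $\kappa$, accessibility, finite absorption) exactly where an open set must be manufactured for accessibility to bite, but in different places. What your route buys: it sidesteps the paper's layering --- in particular the unaddressed question of whether the sets $\gS_n$ actually cover all of $\gS$ when some states lie outside the support of every $P_F^n(s_0,-)$ --- and it isolates a reusable fact about the graph itself, independent of $P_F$. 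The price is precisely that extra lemma and its lower-semicontinuity technicalities, which the paper's layer-by-layer argument never needs in this uniform form.
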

\begin{proof}
    First, using a simple recursion, we show that $\forall n \geq N, \ P_F^n(s_0, - ) = \delta_\bot$. The base case ($n=N$) is satisfied as a consequence of \cref{lemma:absolute-continuity-powers}, and the fact that the measurable pointed graph is finitely absorbing. Assuming it holds for some $n \geq N$, going back to the definition of the $n$-step measure, we have for every $B \in \Sigma$:
    \begin{align*}
        P_F^{n+1}(s_0, B) &= \int_{\bar{\gS}} P_F^n(s_0, ds) P_F(s, B) 
        = \int_{\bar{\gS}} \delta_\bot(ds) P_F(s, B)
        = P_F(\bot, B)
        =\delta_\bot(B),
    \end{align*}
    where the last equality stems from the absolute continuity of $P_F(\bot, -)$ wrt. $\kappa(\bot, -)$ and \cref{eq:finitely-absorbing}.

    This shows that for every $B \in \Sigma_{| \gS}$:
  \begin{equation*}
        u(B) = \sum_{n=0}^{N-1} P_F^n(s_0, B).
    \end{equation*}
    Which shows the measure $u$ is finite.

    Next, we partition $\gS$ into $N$ disjoint sets $\gS_0, \dots, \gS_{N-1}$, where:
    \begin{equation*}
        s \in \gS_n \Leftrightarrow n = \max \{m \in \sN_0: \ \forall B \in \gT, s \in B \Rightarrow P_F^m(s_0, B) > 0\}
    \end{equation*}
    $\gS_n \in \Sigma$ given that $\gS_n = \gS'_n \setminus \bigcup_{k=1}^\infty \gS'_{n+k}$, where $\gS'_n$ is the support of $P_F^n(s_0, -)$, which is known to be a closed set, and hence measurable.
    
    Writing any $B \in \Sigma_{| \gS}$ as:
    \begin{equation*}
        B = \bigcup_{n=0}^{N-1} B \cap \gS_n,
    \end{equation*}
    and using the additivity property of the measures $u$ and $\mu$, then proving \cref{eq:incomplete-trajectory-termination-Z} for all $B \in \Sigma_{| \gS}$, amounts to proving it for all $B \in \Sigma_{| \gS_n}$ for all $n \in \{0, \dots, N-1\}$, given that the sets $\gS_i$ are themselves measurable. We prove this by strong induction on $n$.

    \emph{Base case:} For $n=0$, $\gS_0=\{s_0\}$, and $\Sigma_{| \gS_0} = \{\{s_0\}\}$. 

    \begin{equation*}
        u(\{s_0\}) = P_F^0(s_0, \{s_0\}) = \delta_{s_0}(\{s_0\}) = 1,
    \end{equation*}
    Hence \cref{eq:incomplete-trajectory-termination-Z} is satisfied for $B=\{s_0\}$.

    \emph{Induction step:} Assume that for some $n \geq 0$, \cref{eq:incomplete-trajectory-termination-Z} is satisfied for all $B \in \Sigma_{| \gS_m}$ for all $m \leq n$, and let $B \in \Sigma_{| \gS_{n+1}}$.

    Define $B' = \{ s' \in \gS \ : \ P_F(s', B) > 0 \}$. We first show that $B' \subseteq \bigcup_{m=0}^{n} \gS_{m}$. If there is $s' \in B'$ and $n' > n$ such that $s' \in \gS_{n'}$, then for any open set $\tilde{B}$ (i.e. $\tilde{B} \in \gT$) containing $s'$, $P_F^{n'}(s_0, \tilde{B}) > 0$. $\tilde{s} \mapsto P_F(\tilde{s}, B)$ being a continuous function, it follows that $B'$ itself is open. Hence $P_F^{n'}(s_0, B')>0$. And noting that:
    \begin{align*}
        P_F^{n'+1}(s_0, B) &= \int_{\bar{\gS}} P_F^{n'}(s_0, ds') P_F(s', B) 
         \geq \int_{B'} P_F^{n'}(s_0, ds') P_F(s', B)  > 0,
    \end{align*}
    which would contradict the fact that $B \subseteq \gS_{n+1}$, given that $n'+1 > n+1$. This shows that $B' \subseteq \bigcup_{m=0}^n \gS_m$. 

    Hence:
    \begin{align*}
        \mu(\{s_0\})u(B) &= \mu(\{s_0\})\sum_{m=0}^{n+1}P_F^m(s_0, B)
        = \mu(\{s_0\})\underbrace{\delta_{s_0}(B)}_{=0} + \mu(\{s_0\})\sum_{m=0}^n P_F^{m+1}(s_0, B)\\
        &= \mu(\{s_0\})\sum_{m=0}^n \int_{B'} P_F^m(s_0, ds') P_F(s', B) 
        = \int_{B'} \mu(\{s_0\})u(ds') P_F(s', B) \\
        &= \int_{B'} \mu(ds') P_F(s', B) 
        = \mu(B),
    \end{align*}
    where the last two equalities stem from the induction hypothesis and the flow matching conditions respectively. 
\end{proof}

The following lemma,  relates the flow measures at the source and sink states to the reward measure and shows that the source and sink flows correspond to the "total reward" $R(\gX)$ (called the partition function with discrete GFlowNets):
\begin{lemma}
    Let $F=(\mu, P_F)$ be a flow over $G$ satisfying reward-matching conditions in \cref{eq:boundary-condition} w.r.t.\ a measure $R$, then
    \begin{equation}
        \mu(\{s_0\})=\mu(\{\bot\})=R(\gX).
        \label{eq:z-equals-total-reward}
    \end{equation}
    \label{lem:z-equals-total-reward}
    \vspace{-1em}
\end{lemma}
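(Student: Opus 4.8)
The plan is to prove the two equalities separately, each time feeding a well-chosen indicator function into the flow-matching identity \cref{eq:flow-matching-conditions}. First I would establish $\mu(\{\bot\})=R(\gX)$. Apply \cref{eq:flow-matching-conditions} to $f=\indicator_{\{\bot\}}$, which is bounded, measurable, and satisfies $f(s_0)=0$ since $s_0\neq\bot$. The left-hand side is $\mu(\{\bot\})$, and the right-hand side equals $\int_{\gS}\mu(ds)\,P_F(s,\{\bot\})$. For $s\in\gS\setminus\gX$ we have $\kappa(s,\{\bot\})=0$ by \cref{eq:terminating-states}, hence $P_F(s,\{\bot\})=0$ by the absolute continuity $P_F(s,-)\ll\kappa(s,-)$; so the integral restricts to $\gX$, and the reward-matching condition \cref{eq:boundary-condition} with the test function $\equiv 1$ on $\gX$ gives $\int_{\gX}\mu(ds)\,P_F(s,\{\bot\})=R(\gX)$. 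This yields $\mu(\{\bot\})=R(\gX)$.

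Next I would show $\mu(\{s_0\})=\mu(\{\bot\})$ by a conservation-of-mass argument. Apply \cref{eq:flow-matching-conditions} to $f=\indicator_{\bar{\gS}\setminus\{s_0\}}$, again bounded, measurable, and zero at $s_0$. The left-hand side is $\mu(\bar{\gS}\setminus\{s_0\})$, and the right-hand side is $\int_{\gS}\mu(ds)\,P_F(s,\bar{\gS}\setminus\{s_0\})$. By \cref{lemma:no-incoming-so-edge-appendix}, $\kappa(s,\{s_0\})=0$ for every $s\in\gS$, so $P_F(s,\{s_0\})=0$; since each $P_F(s,-)$ is a probability measure, $P_F(s,\bar{\gS}\setminus\{s_0\})=1$, and the right-hand side equals $\mu(\gS)$. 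Decomposing $\bar{\gS}\setminus\{s_0\}=\gS^\circ\sqcup\{\bot\}$ and $\gS=\gS^\circ\sqcup\{s_0\}$ gives $\mu(\gS^\circ)+\mu(\{\bot\})=\mu(\gS^\circ)+\mu(\{s_0\})$.

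To cancel $\mu(\gS^\circ)$ I need it finite: by \cref{prop:incomplete-trajectory-termination}, $\mu(B)=\mu(\{s_0\})\,u(B)$ for all $B\in\Sigma_{|\gS}$ with $u$ a finite measure, and $\mu(\{s_0\})<\infty$ since $\mu$ is $\sigma$-finite (hence finite on singletons), so $\mu(\gS^\circ)=\mu(\{s_0\})\,u(\gS^\circ)<\infty$. Cancelling then gives $\mu(\{s_0\})=\mu(\{\bot\})$, and combining with the first part yields $\mu(\{s_0\})=\mu(\{\bot\})=R(\gX)$, which is \cref{eq:z-equals-total-reward}. The only mildly delicate points are the two absolute-continuity steps pinning $P_F(s,\{\bot\})$ and $P_F(s,\{s_0\})$ to zero on the relevant sets, and the finiteness of $\mu(\gS^\circ)$ required for the cancellation; both are immediate from the cited results, so no serious obstacle is expected. (An alternative route for the second equality avoids the cancellation entirely by substituting $\mu|_\gS=\mu(\{s_0\})u$ into $R(\gX)=\int_{\gX}\mu(dx)P_F(x,\{\bot\})$ and telescoping $\sum_{n=0}^{N-1}\big(P_F^{n+1}(s_0,\{\bot\})-P_F^{n}(s_0,\{\bot\})\big)=P_F^N(s_0,\{\bot\})=1$, using \cref{lemma:absolute-continuity-powers} and finite absorption \cref{eq:finitely-absorbing}.)
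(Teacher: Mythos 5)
Your proof is correct and follows essentially the same route as the paper's: the first equality comes from the reward-matching condition combined with the flow-matching identity applied to $\indicator_{\{\bot\}}$ (using $P_F(s,\{\bot\})=0$ off $\gX$ by absolute continuity), and the second from testing flow-matching against $\indicator_{\bar{\gS}\setminus\{s_0\}}$ together with $P_F(s,\{s_0\})=0$ and the fact that each $P_F(s,-)$ is a probability measure. Your explicit justification of the finiteness of $\mu(\gS^\circ)$ via \cref{prop:incomplete-trajectory-termination} before cancelling is a point the paper's own subtraction step leaves implicit, and is a welcome addition.
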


\begin{proof}
    Applying \cref{eq:boundary-condition} to the function $f:x \in \gX \mapsto 1$, we get:
    \begin{align}
        \int_\gX R(dx) = \int_\gX \mu(dx) P_F(x, \{\bot\}).
    \end{align}
    Additionally, from \cref{eq:terminating-states}, we get $\forall s \in \gS \setminus \gX, \ \kappa(s, \{\bot\}) = 0$. It follows from the absolute continuity requirements of $P_F$ that $\forall s \in \gS \setminus \gX, \ P_F(s, \{\bot\}) = 0$. Hence:
    \begin{align*}
        \int_\gX R(dx) &= \int_\gS \mu(ds) P_F(s, \{\bot\})
        = \iint_{\gS \times \bar{\gS}} \indicator_{s' = \bot} \mu(ds) P_F(s, ds')
        = \int_{\bar{\gS}} \indicator_{s' = \bot} \mu(ds') = \mu(\{\bot\}),
    \end{align*}
    where the last line follows from \cref{eq:flow-matching-conditions}. This shows that $\mu(\{\bot\})=R(\gX)$.

    Note that as a consequence of \cref{lemma:no-incoming-so-edge-appendix} and the absolute continuity requirement, a $P_F$ satisfies:
    \begin{equation}
        \forall s \in \bar{\gS}, P_F(s, \{s_0\}) = 0
        \label{eq:no-incoming-s0-edge-PF}
    \end{equation}

    Next, following \cref{eq:flow-matching-conditions} and \cref{eq:no-incoming-s0-edge-PF}, we have:
    \begin{align}
    \iint_{\gS\times\bar{\gS}}\mu(ds)P_F(s,ds')
    &=\iint_{\gS\times\bar{\gS}}\indicator_{s'\neq s_0}\mu(ds)P_F(s,ds')
    =\int_{\bar{\gS}}\indicator_{s'\neq s_0}\mu(ds')
    =\mu(\bar{\gS})-\mu(\{s_0\}).\label{eq:measure-full-left}
    \end{align}
    On the other hand, because each $P_F(s, -)$ is a probability measure on $\bar{\gS}$:
    \begin{align}
    \iint_{\gS\times\bar{\gS}}\mu(ds)P_F(s,ds')
    &=\int_{\gS}\left(\int_{\bar{\gS}}P_F(s,ds')\right)\mu(ds)=\int_{\gS}\mu(ds) = \mu(\gS)
    \label{eq:measure-full-right}
    \end{align}
    Subtracting \cref{eq:measure-full-left} from \cref{eq:measure-full-right}, we get:
    \begin{equation*}
    \mu(\{s_0\})=\mu(\bar{\gS}) - \mu(\gS) = \mu(\{\bot\}) = R(\gX)
    \end{equation*}
\end{proof}

The following lemma is crucial to prove \cref{prop:TB-implies-FM-RM}.
\begin{lemma}
    If $(P_F, P_B, Z)$ satisfy the trajectory balance conditions wrt. $R$, then for any  $n \in \{0,...,N\}$, and for any measurable bounded function $f: \gS \rightarrow \sR$:
    \begin{equation}
        Z\int_{\gS} f(s)P_F^n(s_0,ds)P_F(s,\{\bot\}) = \int_{\gS} f(s)P_B^{n}(s,\{s_0\})R(ds)
    \end{equation}
    \label{lemma:integrating-PF-PB}
\end{lemma}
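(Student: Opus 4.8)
The plan is to obtain both sides of the claimed identity by a single application of the trajectory balance conditions \cref{eq:tb-conditions} with a judiciously chosen test function, and then to collapse the iterated integrals over intermediate states using the trajectory-measure bookkeeping of \cref{lemma:P_F^n}. Fix $n$ (the argument works for every $n\geq 0$, in particular for $n\in\{0,\dots,N\}$) and a bounded measurable $f:\gS\to\sR$. Extend $f$ to $\bar f:\bar\gS\to\sR$ by $\bar f(\bot)=0$; since $\{\bot\}$ is clopen, $\bar f$ is bounded and measurable. I would then apply \cref{eq:tb-conditions} to the test function $\tilde f:\bar\gS^{n+2}\to\sR$ that reads off its $(n+1)$-st argument, $\tilde f(a_1,\dots,a_{n+2})=\bar f(a_{n+1})$. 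On the left-hand side of \cref{eq:tb-conditions} the $(n+1)$-st argument of $\tilde f$ is the state $s_n$, and on the right-hand side (evaluated at $(s,\overrightarrow{s_{1:n}},\bot)$) it is again $s_n$, so with this choice both sides become integrals of $\bar f(s_n)$ against appropriate measures.

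For the left-hand side of \cref{eq:tb-conditions}, the integrand $\bar f(s_n)\,\indicator_{s_n\neq\bot,\,s_{n+1}=\bot}$ depends only on the pair $(s_n,s_{n+1})$. Writing $P_F^{\otimes n+1}(s_0,-)=P_F^{\otimes n}(s_0,-)\otimes P_F$, the inner integral over $s_{n+1}$ gives the bounded measurable function $s_n\mapsto \bar f(s_n)\indicator_{s_n\neq\bot}P_F(s_n,\{\bot\})$, to which \cref{lemma:P_F^n} applies, collapsing $P_F^{\otimes n}(s_0,-)$ to the $n$-step measure $P_F^n(s_0,-)$. Thus the left side equals $Z\int_{\bar\gS}\bar f(s)\,\indicator_{s\neq\bot}\,P_F(s,\{\bot\})\,P_F^n(s_0,ds)=Z\int_{\gS}f(s)P_F(s,\{\bot\})P_F^n(s_0,ds)$, the left side of the claim. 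For the right-hand side of \cref{eq:tb-conditions}, the integrand $\indicator_{s=s_0}\bar f(s_n)$ depends only on $s_n$ (drawn from $R$) and on the last coordinate $s$ of $P_B^{\otimes n}(s_n,-)$. Applying \cref{lemma:P_F^n} to the kernel $P_B$ with the function $\indicator_{\{s_0\}}$ of that last coordinate collapses $P_B^{\otimes n}(s_n,-)$ to $P_B^n(s_n,-)$, so the right side becomes $\int_{\bar\gS}\bar f(s_n)\,P_B^n(s_n,\{s_0\})\,R(ds_n)=\int_{\gS}f(s)P_B^n(s,\{s_0\})R(ds)$, using that $R$ is carried by $\gX\subseteq\gS$. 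Equating the two sides of \cref{eq:tb-conditions} yields the lemma; the degenerate case $n=0$ is included automatically, since $P_F^0(s_0,-)=\delta_{s_0}$ and $P_B^0(s,\{s_0\})=\indicator_{s=s_0}$.

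The only delicate point — the main obstacle, such as it is — is the coordinate bookkeeping: verifying that the $(n+1)$-st slot of the $(n{+}2)$-ary test function is indeed $s_n$ on both sides of \cref{eq:tb-conditions}, that the relevant marginal of $P_F^{\otimes n+1}(s_0,-)$ is its $(s_n,s_{n+1})$-coordinate pair (peeled off one forward step at a time via $P_F^{\otimes n+1}=P_F^{\otimes n}\otimes P_F$ together with \cref{lemma:P_F^n}), and that the final backward coordinate in $P_B^{\otimes n}(s_n,ds'\,\overrightarrow{ds_{n-1:1}}\,ds)$ is the one constrained by $\indicator_{s=s_0}$. Since every measure in play is finite (a probability measure, or the finite reward measure $R$) and every integrand is bounded, all interchanges of integration are justified by Fubini--Tonelli, so no integrability subtlety arises.
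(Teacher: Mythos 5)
Your proof is correct and follows essentially the same route as the paper's: both apply the trajectory balance condition to a test function that depends only on the coordinate $s_n$ and then collapse the forward and backward trajectory measures to the $n$-step measures via \cref{lemma:P_F^n} (applied to $P_F$ on one side and to $P_B$ on the other). The only difference is presentational — the paper starts from the left side of the claimed identity and unfolds it into the trajectory-measure form before invoking TB, whereas you start from the TB identity and specialize — which is immaterial.
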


\begin{proof}
Using \cref{lemma:P_F^n}, we have:
\begin{align*}
    Z\int_{\gS} f(s)P_F^n(s_0,ds)P_F(s,\{\bot\}) &= Z \int_{\bar{\gS}^{n+1}} \indicator_{s \neq \bot}f(s) P_F(s, \{\bot\}) P_F^{\otimes n}(s_0, ds'ds_1\dots ds_{n-1}ds)\\
    &=Z \int_{\bar{\gS}^{n+2}} \indicator_{s \neq \bot, s_{n+1}=\bot} f(s) P_F^{\otimes n+1}(s_0, ds'ds_1\dots ds_{n-1}ds ds_{n+1}) \\
    &=\int_{\bar{\gS}^{n+2}} \indicator_{s \neq \bot}\indicator_{s''=s_0}f(s)R(ds)P_B^{\otimes n}(s, ds'ds_{n-1}\dots ds_1, ds'')\\
    &=\int_{\bar{\gS}} f(s)\indicator_{s \neq \bot} R(ds) \int_{\bar{\gS}^{n+1}} \indicator_{s''=s_0} P_B^{\otimes n}(s, ds'ds_{n-1}\dots ds_1, ds'')\\
    &=\int_{\bar{\gS}} f(s)\indicator_{s \neq \bot} R(ds) \int_{\bar{\gS}} \indicator_{s''=s_0} P_B^n(s, ds'')\\
    &=\int_{\gS} f(s) R(ds) P_B^n(s, \{s_0\})
\end{align*}
\end{proof}

The following proposition generalizes Lemma 5 of ~\citep{bengio2021foundations} to measurable pointed graphs, and is crucial in proving \cref{prop:TB-implies-FM-RM}
\begin{proposition}
        Let $P_B$ be a backward kernel over $G$. Let $P_{B, T}$ be the measure defined by:
    \begin{equation}
        P_{B, T}(s) = \sum_{n = 0}^{\infty}P_B^n(s, \{s_0\})
    \end{equation}
    We have $\forall s \in \gS$:
    \begin{equation}
        P_{B, T}(s) = 1
        \label{eq:p-b-total}
    \end{equation}
    \label{prop:p-b-total}
\end{proposition}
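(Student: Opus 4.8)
The plan is to carry over the discrete argument of \citet{bengio2021foundations} (Lemma~5): a backward step strictly decreases a notion of ``depth'', so that the backward walk reaches $s_0$ within finitely many steps almost surely. First I would reduce to a fixed-point identity. Since $\kappa^b(s_0,-)=0$ by \cref{eq:trivial-measure} and $P_B(s_0,-)\ll\kappa^b(s_0,-)$, we get $P_B(s_0,-)=0$, hence $P_B^n(s_0,-)=0$ for $n\geq1$ and $P_{B,T}(s_0)=1$; and for $s\neq s_0$, Chapman--Kolmogorov together with monotone convergence (all terms nonnegative) give
\begin{equation*}
P_{B,T}(s)=\sum_{n\geq1}\int_{\bar\gS}P_B(s,ds')\,P_B^{n-1}(s',\{s_0\})=\int_{\bar\gS}P_B(s,ds')\,P_{B,T}(s').
\end{equation*}
Since each $P_B(s,-)$ with $s\neq s_0$ is a probability measure, it then suffices to produce a finite measurable partition $\gS=\gS_0\sqcup\cdots\sqcup\gS_{N-1}$ with $\gS_0=\{s_0\}$ such that $P_B(s,-)$ is supported on $\gS_0\cup\cdots\cup\gS_{m-1}$ whenever $s\in\gS_m$ and $m\geq1$; then $P_{B,T}\equiv1$ follows by strong induction on $m$.

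For the levels, set $\gS'_n=\operatorname{supp}(\kappa^n(s_0,-))$ and $\gS_n=\gS'_n\setminus\bigcup_{j>n}\gS'_j$. Using \cref{eq:absorption} and finite absorption one checks $\kappa^n(s_0,-)=\kappa^N(s_0,\{\bot\})\,\delta_\bot$ for all $n\geq N$, and \cref{eq:accessibility} then forces every $s\in\gS$ into some $\gS'_n$ with $n\leq N-1$: otherwise a finite intersection of zero-measure neighbourhoods of $s$, intersected with the open set $\gS$, would be a nonempty open set carrying no mass under any $\kappa^n(s_0,-)$. Combined with \cref{lemma:no-incoming-so-edge-appendix} this yields the partition with $\gS_0=\{s_0\}$; write $h(s)=m$ for $s\in\gS_m$. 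The sets $\mathcal O_m:=\{h\leq m\}=\gS\setminus\bigcup_{j=m+1}^{N-1}\gS'_j$ are \emph{open} (complements of finite unions of closed supports, inside the open set $\gS$), and the continuity of $\kappa(\cdot,B)$ shows $s''\in\operatorname{supp}(\kappa(s,-))\Rightarrow h(s'')\geq h(s)+1$ (or $s''=\bot$); in particular $\kappa(s,\mathcal O_m)=0$ whenever $h(s)\geq m$.

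Now fix $m\geq1$ and set $W=\bigcup_{j\geq m}\gS_j$, so $s_0,\bot\notin W$. Applying \cref{eq:backward-kernel} to $f(s,s')=\indicator_{s'\in\mathcal O_m}\indicator_{s\in W}$ (which vanishes at $(s_0,s_0)$ and $(\bot,\bot)$), the left-hand side equals $\int_W\kappa(s,\mathcal O_m)\,\nu(ds)=0$, so $\int_{\mathcal O_m}\kappa^b(s',W)\,\nu(ds')=0$, whence $P_B(\cdot,W)=0$ $\nu$-a.e.\ on $\mathcal O_m$. Since $s'\mapsto P_B(s',W)$ is continuous and $\nu$ is strictly positive, the open set $\{P_B(\cdot,W)>0\}\cap\mathcal O_m$ has zero $\nu$-measure and is therefore empty, so $P_B(s,W)=0$ for every $s\in\gS_m\subseteq\mathcal O_m$. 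An analogous computation with $W$ replaced by $\{\bot\}$, using $\kappa(\bot,-)=\delta_\bot$, gives $P_B(s,\{\bot\})=0$ for all $s\in\gS$. As $\bar\gS=\{\bot\}\sqcup W\sqcup(\gS_0\cup\cdots\cup\gS_{m-1})$ and $P_B(s,-)$ is a probability measure, $P_B(s,-)$ is supported on $\gS_0\cup\cdots\cup\gS_{m-1}$, closing the induction.

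The step I expect to be the main obstacle is the passage from the ``$\nu$-almost everywhere'' conclusions that \cref{eq:backward-kernel} directly supplies (it is an equality of measures) to the \emph{pointwise} statements about $P_B(s,-)$ needed at every state $s$: this is exactly where continuity of backward kernels and strict positivity of $\nu$ enter, and why the vanishing statements must be phrased over the open sets $\mathcal O_m$ rather than over the (generally non-open) level sets $\gS_m$ themselves. A secondary but necessary point is verifying that accessibility together with finite absorption rule out ``infinitely deep'' states, so that the partition into finitely many levels exists at all.
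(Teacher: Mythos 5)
Your proof is correct, and while it shares the paper's overall skeleton (reduce to the one‑step recursion $P_{B,T}(s)=\int P_B(s,ds')P_{B,T}(s')$, partition $\gS$ into finitely many measurable levels, and close by strong induction on the level), the way you build and exploit the levels is genuinely different. The paper defines the level of $s$ intrinsically in terms of the backward kernel, as $\max\{m: P_B^m(s,\{s_0\})>0\}$, and proves that $\mathrm{supp}(P_B(s,-))$ sits in strictly lower levels by a contradiction argument using only the continuity of $s\mapsto P_B^{n}(s,\{s_0\})$; this is short but leans on the unproved premise that every state has a finite backward depth under $P_B$ (justified only by a one‑line appeal to the maximal trajectory length of $\kappa$, and to $P_B(s_0,-)$ being trivial). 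You instead define levels by forward accessibility depth, $\gS'_n=\mathrm{supp}(\kappa^n(s_0,-))$, use \cref{eq:accessibility} together with finite absorption to show the partition exhausts $\gS$, and then transfer the ``depth strictly increases along $\kappa$'' property to ``depth strictly decreases along $P_B$'' via the duality \cref{eq:backward-kernel}, upgrading the resulting $\nu$-a.e.\ statements to pointwise ones through continuity of the kernels and strict positivity of $\nu$. Your route costs more machinery (it invokes \cref{eq:accessibility}, strict positivity of $\nu$, and the openness of the sublevel sets $\mathcal O_m$, none of which appear in the paper's proof of this proposition), but it buys a derivation of the finite‑backward‑depth property from the axioms rather than assuming it, and it correctly isolates the a.e.-to-pointwise passage as the place where the topological hypotheses are doing work. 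Both arguments implicitly use that a Borel measure vanishes off its (closed) support, which is fine under the regularity the paper assumes throughout.
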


\begin{proof}
    First, using a simple recursion, we show that $\forall n \geq N, \ P_B^n(s, \{s_0\} ) = 0$. The base case ($n=N$) is trivially satisfied because the measurable pointed graph has maximal trajectory length $N$ and $P_B(s_0, -)$ is the trivial measure (i.e. it assigns zero to every measurable set), given that it is absolutely continuous wrt. $\kappa^b(s_0, -)$. Assuming it holds for some $n \geq N$, we have:
    \begin{align*}
        P_B^{n+1}(s, {s_0}) &= \int_{\bar{\gS}} P_B(s, ds') P_B^{n}(s', \{s_0\}) 
        = \int_{\bar{\gS}} P_B(s, ds').0 
        = 0
    \end{align*}

    This shows that for every $s \in \gS$:
  \begin{equation*}
        P_{B, T}(s) = \sum_{n=0}^{N-1} P_B^n(s, {ds_0}).
    \end{equation*}
    Which shows the measure $P_{B, T}$ is finite.

    Next, we partition $\gS$ into $N$ disjoint sets $\gS_0, \dots, \gS_{N-1}$, where:
    \begin{equation*}
        s \in \gS_n \Leftrightarrow n = \max \{m \in \sN_0: P_B^m(s, \{s_0\}) > 0\}
    \end{equation*}
    $\gS_n \in \Sigma$ given that $\gS_n = \gS'_n \setminus \bigcup_{k=0}^\infty \gS'_{n+k}$, where $\gS'_n$ is the support of $P_B^n(-, \{s_0\})$, which is known to be a closed set, and hence measurable.
    
    Writing :
    \begin{equation*}
        \gS = \bigcup_{n=0}^{N-1}  \gS_n,
    \end{equation*}
    Then proving \cref{eq:p-b-total} for all $s \in \gS$, amounts to proving it for all $s \in \gS_n$ for all $n \in \{0, \dots, N-1\}$. We prove this by strong induction on $n$.

    \emph{Base case:} For $n=0$, $\gS_0=\{s_0\}$, and 
    \begin{equation*}
        P_{B, T}(s_0) = P_B^0(s_0, \{s_0\}) = \delta_{s_0}(\{s_0\}) = 1,
    \end{equation*}
    Hence it is satisfied for n = 0.

    \emph{Induction step:} Assume that for some $n \geq 0$, \cref{eq:p-b-total} is satisfied for all $s \in  \gS_m$ for all $m \leq n$, and let $s \in \gS_{n+1}$.

    Define $B_s = \{ s' \in \gS   :  \forall B \in \gT, s' \in B => P_B(s, B) > 0 \}$. We first show by contradiction that $B_s \subseteq  \bigcup_{m=0}^{n} \gS_{m}$. If there is $s' \in B_s$ and $n' > n$ such that $s' \in \gS_{n'}$, then $P_B^{n'}(s', \{s_0\}) > 0$, and by continuity of $\tilde{s} \mapsto P_B^{n'}(\tilde{s}, \{s_0\})$, there exists an open set $\tilde{B}$ (i.e. $\tilde{B} \in \gT$) containing $s'$ such that $P_B^{n'}(\tilde{s}, \{s_0\}) > 0$ for all $\tilde{s} \in \tilde{B}$. Hence:
    \begin{align*}
        P_B^{n'+1}(s, \{s_0\}) = \int_{\bar{\gS}} P_B(s, ds') P_B^{n'}(s', \{s_0\}) 
         \geq \int_{\tilde{B}} P_B(s, ds') P_B^{n'}(s', \{s_0\})  > 0,
    \end{align*}
    which would contradict the fact that $s \in \gS_{n+1}$, given that $n'+1 > n+1$.

    Hence:
    \begin{align*}
        & P_{B, T}(s) = \sum_{m = 0}^{n+1}P_B^m(s, \{s_0\}) 
        = \sum_{m = 1}^{n+1}P_B^m(s, \{s_0\}) \quad \text{given that $s \neq s_0$} \\
        &= \sum_{m=1}^{n+1} \int_{s' \in B_s} P_B(s,ds')P_B^{m-1}(s', \{s_0\}) 
        = \int_{s' \in B_s} P_B(s,ds') \underbrace{\sum_{m=0}^{n}P_B^m(s',\{s_0\})}_{=1} 
        = \int_{s' \in B_s} P_B(s,ds') 
        = 1
    \end{align*}

\end{proof}

\section{Proofs of results in the main text}
\label{appendix:proofs}
We are now ready to prove the main theorem of the paper, which we restate here:
\begin{customthm}{\ref{thm:fm-implies-correct-sampling}}
    If $F=(\mu, P_F)$ is a flow over $G$, that satisfies the reward matching conditions \cref{eq:boundary-condition} wrt. a measure $R$, then the terminating state measure:
    \begin{equation}
        P_T: B \in \Sigma_{| \gX} \mapsto \sum_{n=1}^\infty P_T^n(B)
    \end{equation}
    is a probability measure and satisfies for all $B \in \Sigma_{| \gX}$:
    \begin{equation}
        P_T(B) = \frac{1}{R(\gX)} R(B)
    \end{equation}
\end{customthm}
\begin{proof}
    Using \cref{lemma:terminating-state-dist-PF}, the terminating state measure $P_T$ satisfies for any bounded measurable function $f: \gX \ra \R$:
    \begin{align*}
        \int_{\gX} f(x) P_T(dx) = \int_{\gX} f(x) P_F(x, {\bot}) \sum_{n=0}^\infty P_F^n(s_0, dx).
    \end{align*}
    It follows from \cref{prop:incomplete-trajectory-termination} that 
    \begin{align*}
        \mu(\{s_0\}) \int_{\gX} f(x) P_T(dx) = \int_{\gX} f(x) P_F(x, {\bot}) \mu(dx).
    \end{align*}
    Following \cref{lem:z-equals-total-reward}, and the positivity assumption on $R$ that:
    \begin{align*}
        \int_{\gX} f(x) P_T(dx) = \frac{1}{R(\gX)}\int_{\gX} f(x) P_F(x, {\bot}) \mu(dx).
    \end{align*}
    Finally, using \cref{eq:boundary-condition}, we obtain:
    \begin{align*}
        \int_{\gX} f(x) P_T(dx) = \frac{1}{R(\gX)}\int_{\gX} f(x) R(dx).
    \end{align*}
    $P_T$ being a probability measure follows by applying the last equality to the function $f: x \mapsto 1$. 
\end{proof}

Next, we will prove \cref{prop:DB-implies-FM}.
\begin{customproposition}{\ref{prop:DB-implies-FM}}
    If $(\mu, P_{F}, P_{B})$ satisfy the detailed balance conditions in \cref{def:db-conditions}, then $F = (\mu, P_{F})$ satisfies the flow-matching conditions in \cref{def:flow-and-flow-matching} and is thus a flow.
\end{customproposition}

\begin{proof}
    For any bounded measurable function $f: \bar{\gS} \rightarrow \sR$ satisfying $f(s_0)=0$, we can define a function $g: \gS \times \bar{\gS} \rightarrow \sR$ such that for all $(s, s') \in \gS \times \bar{\gS}$, $g(s, s') = f(s')$. Note that $g$ satisfies $g(s, s_0)=0$ for every $s \in \gS$. Applying the detailed balance conditions to the function $g$, we have
    \begin{equation*}
        \iint_{\gS \times \bar{\gS}} g(s, s')\mu(ds)P_{F}(s, ds') = \iint_{\gS \times \bar{\gS}} f(s')\mu(ds)P_{F}(s, ds')
    \end{equation*}
    On the other hand, using the RHS of \cref{eq:db-conditions} in the detailed balance conditions, we get
    \begin{align*}
        \iint_{\gS \times \bar{\gS}}& g(s, s')\mu(ds')P_{B}(s', ds)= \iint_{\gS \times \bar{\gS}} f(s')\mu(ds')P_{B}(s', ds)
        = \int_{\bar{\gS}} f(s')\mu(ds') \int_{\gS} P_B(s', ds),
    \end{align*}
    Following \cref{eq:absorption} and the the absolute continuity conditions of $P_B$ with respect to $\kappa^b$, we have:
    \begin{align*}
        \forall s' \in \gS, \ P_B(s', \{\bot\}) = 0,
    \end{align*}
    from which it follows that:
    \begin{align*}
        \iint_{\gS \times \bar{\gS}}& g(s, s')\mu(ds')P_{B}(s', ds)=\int_{\bar{\gS}} f(s')\mu(ds') \underbrace{\int_{\bar{\gS}} P_B(s', ds)}_{=1}
    \end{align*}
   This shows that $(\mu, P_F)$ satisfy the flow matching conditions.
\end{proof}

Next, we will prove \cref{prop:TB-implies-FM-RM}, which we restate here:

\begin{customproposition}{\ref{prop:TB-implies-FM-RM}}
    If $(Z, P_F, P_B)$ satisfy the trajectory balance condition \cref{eq:tb-conditions} wrt. a measure $R$, then $F=(\mu, P_B)$, where $\mu$ is defined by
    \begin{enumerate}[label=(\arabic*),left=0pt,nosep,]
    \item $\mu(\{\bot\})=\mu(\{s_0\}) = Z$
    \item $\forall B \in \Sigma_{| \gS}$:
    $\mu(B) = \mu(\{s_0\})\sum_{n=0}^{\infty} P_F^n(s_0, B) $
    \end{enumerate}
    satisfies both the flow-matching conditions \cref{eq:flow-matching-conditions} and the reward matching conditions \cref{eq:boundary-condition} wrt. $R$.
\end{customproposition}

\begin{proof}
First we show that $\mu$ satisfies the flow-matching condition.
for any bounded measurable function $f: \bar{\gS} \rightarrow \sR$ satisfying $f(s_0)=0$, we have : 
\begin{align*}
     \iint_{\gS \times \bar{\gS}} f(s')\mu(ds)P_{F}(s, ds')
    &= \iint_{\gS \times \bar{\gS}} f(s')\mu(s_0)\sum_{n=0}^{\infty} P_F^n(s_0, ds)P_{F}(s, ds')\\
    &= \int_{ \bar{\gS}} f(s')\mu(s_0)\sum_{n=0}^{\infty} P_F^{n+1}(s_0, ds')\\
    &= \int_{ \bar{\gS}} f(s')\mu(s_0)\sum_{n=0}^{\infty} P_F^{n}(s_0, ds')\quad \text{ (because $f(s_0) = 0$)}\\
    &= \int_{\bar{\gS}} f(s')\mu(ds')
\end{align*}

Now, we will show the reward matching condition.
For any bounded measurable function $f:  \gX \rightarrow \sR$
\begin{align*}
\int_{\gX}f(s)\mu(ds)P_F(s,\bot)
    &=\int_{\gS}\indicator_\gX(s)f(s) \underbrace{\mu(\{s_0\})}_{=Z}\sum_{n=0}^{\infty}P_F^{n}(s_0,ds)P_F(s,{\bot}) \\
    &=\sum_{n=0}^{\infty} \int_{\gS} \indicator_\gX(s)f(s)R(ds)P_B^n(s,\{s_0\})\\
    &= \int_{\gS} \indicator_\gX(s)f(s)R(ds) \sum_{n=0}^{\infty}P_B^n(s,\{s_0\})
    = \int_{\gX} f(s)R(ds)
\end{align*}

where we used \cref{prop:p-b-total} and \cref{lemma:integrating-PF-PB}.

\end{proof}

The following is the proof of \cref{thm:zero-losses-implies-conditions}, which we restate here:
\begin{customthm}{\ref{thm:zero-losses-implies-conditions}}
\begin{enumerate}[left=0pt,nosep,label=(\arabic*),wide]
    \item If $L_{FM}(-;\theta) = 0$ $\nu$-almost surely, then $F=(\mu, P_{F})$ is a flow (i.e. satisfies the flow-matching conditions in \cref{def:flow-and-flow-matching}).
    \item If $L_{DB}(-;\theta) = 0$ $\nu \otimes \kappa$-almost surely, then $(\mu, P_{F}, P_{B})$ satisfy the detailed balance conditions in \cref{def:db-conditions}.
    \item If $L_{RM}(-;\theta)=0 $ $\nu_{|\gX}$-almost surely, then $(\mu, P_F)$ satisfies the reward matching conditions in \cref{eq:boundary-condition}.
    \item If $L_{TB}^n(-; \theta)=0$ $((\nu \otimes \kappa^{\otimes n+1})_{|\{s_0\}\times \gS^n \times \{\bot\}}$)-almost surely for every $n \geq 0$, then $(Z\nu(\{s_0\}), P_F, P_B)$ satisfy the trajectory balance condition in \cref{def:tb-conditions}.
\end{enumerate}
\end{customthm}
\begin{proof}
    We will first show the result for the flow-matching loss. Let the function $v: \bar{\gS} \ra \sR_{+}$ be the function, depending on the parameter $\theta$, defined by $\forall s'\in\bar{\gS}$:
    \begin{equation*}
        v(s';\theta) := \int_{\gS} u(s;\theta)p_{F}(s, s';\theta)\kappa^{b}(s', ds).
    \end{equation*}
    If we assume that $L_{FM}(-;\theta) = 0$ $\nu$-almost surely, then we have equivalently $u(-;\theta) = v(-;\theta)$ $\nu$-almost surely. Let $f: \bar{\gS} \ra \sR$ be a bounded measurable function such that $f(s_{0}) = 0$, we then have
    \begin{align*}
        \int_{\bar{\gS}}& f(s')u(s';\theta) \nu(ds') = \int_{\bar{\gS}}f(s')v(s';\theta)\nu(ds')\\
        &= \int_{\bar{\gS}}f(s')\int_{\gS}u(s;\theta)p_{F}(s, s';\theta)\kappa^{b}(s', ds)\nu(ds')\\
        &= \iint_{\bar{\gS}\times \gS}f(s')u(s;\theta)p_{F}(s, s';\theta)\kappa(s, ds')\nu(ds),
    \end{align*}
    where we used the fact that $\nu \otimes \kappa = \nu \otimes \kappa^{b}$ (from \cref{eq:backward-kernel} in \cref{def:measurable_pointed_graph}) in the last equality. Replacing the densities (and reference measures) in the equality above with their corresponding measures $\mu$ and $P_{F}$, we get
    \begin{equation*}
        \int_{\bar{\gS}} f(s')\mu(ds') = \iint_{\gS\times\bar{\gS}} f(s')\mu(ds)P_{F}(s, ds').
    \end{equation*}
    Since this equality is valid for any bounded measurable function $f$ satisfying $f(s_{0}) = 0$, this is the definition of $F = (\mu, P_{F})$ satisfying the flow-matching conditions (\cref{def:flow-and-flow-matching}).

    The proof for the detailed balance loss is similar. Let the functions $g: \bar{\gS} \times \bar{\gS} \ra \sR_{+}$ and $h: \bar{\gS} \times \bar{\gS} \ra \sR_{+}$ defined as
    \begin{align*}
        g(s, s';\theta) &:= u(s;\theta)p_{F}(s,s';\theta)\\
        h(s, s';\theta) &:= u(s';\theta)p_{B}(s',s;\theta).
    \end{align*}
    If $L_{DB}(-;\theta) = 0$ $\nu\otimes \kappa$-almost surely, then we have equivalently $g(-;\theta) = h(-,\theta)$. Let $f: \gS \times \bar{\gS} \ra \sR$ be a bounded measurable function such that $f(s, s_{0}) = 0$ for all $s\in \gS$. We have
    \begin{align*}
        \iint_{\gS \times \bar{\gS}} & f(s, s')g(s, s';\theta) (\nu \otimes \kappa)(ds, ds') \\
        &= \iint_{\gS \times \bar{\gS}} f(s, s')u(s;\theta)p_{F}(s, s';\theta)(\nu \otimes\kappa)(ds\, ds')\\
        &= \iint_{\gS \times \bar{\gS}} f(s, s')h(s, s';\theta) (\nu \otimes \kappa)(ds\, ds')\\
        &= \iint_{\gS \times \bar{\gS}} f(s, s')h(s, s';\theta) (\nu \otimes \kappa^{b})(ds'\, ds)\\
        &= \iint_{\gS \times \bar{\gS}} f(s, s')u(s';\theta)p_{B}(s', s;\theta) (\nu \otimes \kappa^{b})(ds'\, ds),
    \end{align*}
    where we used $\nu \otimes \kappa = \nu \otimes \kappa^{b}$ in the 3rd inequality. Note that while the equalities between functions are valid $\nu \otimes \kappa$-almost surely over the whole space $\bar{\gS} \times \bar{\gS}$, we only used the equality restricted to $\gS \times \bar{\gS}$. Moreover, since $u$, $p_{F}$, and $p_{B}$ are the densities of the respective measures $\mu$, $P_{F}$, and $P_{B}$ (wrt. the appropriate reference measures), we know that for $B \in \bar{\Sigma} \otimes \bar{\Sigma}$
    \begin{align*}
        (\mu \otimes P_{F})(B) &= \iint_{B} u(s;\theta)p_{F}(s, ds')(\nu\otimes \kappa)(ds\, ds')\\
        (\mu \otimes P_{B})(B) &= \iint_{B} u(s';\theta)p_{B}(s', ds)(\nu \otimes \kappa^{b})(ds'\, ds).
    \end{align*}
    Replacing these measures in the equality above, we obtain
    \begin{equation*}
        \iint_{\gS \times \bar{\gS}} f(s, s')\mu(ds)P_{F}(s, ds') = \iint_{\gS \times \bar{\gS}} f(s, s')\mu(ds')P_{B}(s', ds).
    \end{equation*}
    Since this equality is valid for any bounded measurable function $f$ such that $f(s, s_{0}) = 0$ for all $s \in \gS$, this corresponds to $(\mu, P_{F}, P_{B})$ satisfying the detailed balance conditions (\cref{def:db-conditions}).

Now, for the trajectory balance loss: for a trajectory $(s, s_1, .., s_{n+1}) \in \gS^{n+2}$, we define:
\begin{align*}
    p_F^{\otimes n+1}(s,\overrightarrow{s_{1:n+1}}) &= p_F(s, s_1, \theta)\prod_{t = 1}^{n} p_F(s_t, s_{t+1}, \theta) \\
    p_B^{\otimes n}(\overrightarrow{s_{n:1}},s) &= p_B(s_1,s, \theta) \prod_{t = 1}^{n-1} p_B(s_{t+1}, s_{t}, \theta)
\end{align*}
For any bounded measurable function $f : \bar{\gS}^{n+2} \rightarrow \mathbb{R} $, assuming $L_{TB}^n = 0$ almost surely for every $n \geq 0$: 
\begin{align*}
    & \int_{\gS^{n+2}} Z(\theta)f(s, \overrightarrow{s_{1:n+1}})\indicator_{s=s_0, s_n \neq \bot, s_{n+1} = \bot} p_F^{\otimes n+1}(s,\overrightarrow{s_{1:n+1}})\nu \otimes \kappa^{\otimes n+1}(ds\, \overrightarrow{ds_{1:n+1}})\\
    &= \int_{\{s_0\} \times \gS^{n-1} \times \gX \times \{\bot\}} Z(\theta)f(s, \overrightarrow{s_{1:n+1}}) p_F^{\otimes n+1}(s,\overrightarrow{s_{1:n+1}}) \nu \otimes \kappa^{\otimes n+1}(ds\, \overrightarrow{ds_{1:n+1}})\\
    &= \int_{\{s_0\} \times \gS^{n} \times \{\bot\}} f(s, \overrightarrow{s_{1:n+1}}) r(s_n) p_B^{\otimes n}(\overrightarrow{s_{n:1}},s)  \nu \otimes \kappa^{\otimes n+1}(ds\, \overrightarrow{ds_{1:n+1}}) 
    \\
    &= \int_{\{s_0\}\times \gS^{n}} f(s, \overrightarrow{s_{1:n}}, \bot) r(s_n) \underbrace{\kappa(s_n, {\{\bot\}})}_{=1\ \textrm{(see \cref{eq:kappa-terminal-1})}} p_B^{\otimes n}(\overrightarrow{s_{n:1}},s)  \nu \otimes \kappa^{\otimes n}(ds\, \overrightarrow{ds_{1:n}}) \\
    &=  \int_{\{s_0\} \times \gS^{n}} f(s, \overrightarrow{s_{1:n}}, \bot) r(s_n) p_B^{\otimes n}(\overrightarrow{s_{n:1}},s)  \nu \otimes \kappa^{b, \otimes n}(\overrightarrow{ds_{n:1}}\, ds) \\
    &=  \int_{ \gS^{n+1}} f(s, \overrightarrow{s_{1:n}}, \bot) \indicator_{s = s_0} r(s_n) \nu(ds_n) p_B^{\otimes n}(\overrightarrow{s_{n:1}},s)   \kappa^{b,\otimes n}(s_{n},\overrightarrow{ds_{n-1:1}}\, ds)
\end{align*}

Replacing the measures in the last equality obtained, we recover the TB condition in \cref{def:tb-conditions} with $(Z \nu(\{s_0\}), P_F, P_B)$.

Here, we used $\nu \otimes \kappa^{\otimes n} = \nu \otimes \kappa^{b,\otimes n} $, $\forall n \in \{0,..N\}$. We can show this by simple induction : for $n=0$, it is trivially satisfied . Now suppose it is true for a given $n \leq N-1$, using \cref{eq:backward-kernel}. We have :

\begin{align*}
    \nu \otimes \kappa^{\otimes n+1}(ds \, \overrightarrow{ds_{1:n+1}}) 
    &= \nu \otimes \kappa^{\otimes n}(ds \, \overrightarrow{ds_{1:n}}) \kappa(s_n, ds_{n+1})
    =  \nu \otimes \kappa^{b,\otimes n}(\overrightarrow{ds_{n:1}}\, ds) \kappa(s_n, ds_{n+1})\\
    &= \nu(ds_n) \kappa(s_n, ds_{n+1})\kappa^{b,\otimes n}(s_n, \overrightarrow{ds_{n:1}} \, ds) 
    = \nu(ds_{n+1}) \kappa^b(s_{n+1}, ds_{n})\kappa^{b,\otimes n}(s_n, \overrightarrow{ds_{n:1}} \, ds) \\ 
    &= \nu(ds_{n+1}) \kappa^{b,\otimes n + 1}(s_{n+1}, \overrightarrow{ds_{n:1}} \, ds) 
    % = \nu(ds_n) \kappa(s_n, ds_{n+1}) \kappa^b(s_n, ds_{n-1} \dots \kappa(s_1, ds) 
    = \nu \otimes \kappa^{b,\otimes n+1}(\overrightarrow{ds_{n+1:1}}\, ds)
\end{align*}

Which proves the claim above.

\end{proof}

\section{Proofs of lemmas and propositions in \cref{appendix:transition-kernels}}
\label{appendix:proofs-of-appendices}

\begin{customlemma}{\ref{lemma:absolute-continuity-powers}}
\end{customlemma}

\begin{proof}
We prove the lemma by induction on $n$. The base case ($n=1$) is trivially satisfied. Assuming the property holds for some $n \geq 0$, let $s \in \bar{\gS}$ and $B \in \Sigma$ such that $\kappa^{n+1}(s, B) = 0$.
\begin{equation*}
    P_F^{n+1}(s, B) = \int_{\bar{\gS}} P_F^n(s, ds') P_F(s', B).
\end{equation*}
If $P_F^{n+1}(s, B) > 0$, that would mean there exists an open set $B' \in \gT$ such that $P_F^n(s, B') > 0$ and $P_F(s', B) > 0$ for all $s' \in B'$. From the induction hypothesis, it would follow that $\kappa^n(s, B') > 0$ and $\kappa(s', B)>0$ for all $s' \in B'$, meaning that:
\begin{align*}
    \kappa^{n+1}(s, B) = \int_{\bar{\gS}} \kappa^n(s, ds') \kappa(s', B) \geq \int_{B'} \kappa^n(s, ds') \kappa(s', B) > 0.
\end{align*}
A contradiction ! Hence, $P_F^{n+1}(s, B) = 0$
\end{proof}

\begin{customlemma}{\ref{lemma:terminating-state-dist-PF}}
\end{customlemma}
\begin{proof}
    Starting from the definition of $P_\top^n$:
    \begin{align*}
        \int_{\gX} f(x) P_{\top}^n (dx) = \int_{\bar{\gS}} f(s_n) \indicator_\gX(s_n) P_{\top}^n (dx) 
        = \int_{\bar{\gS}^{n+1}} \indicator_\gX(s_n) f(s_n) P_F^{\otimes{n}}(s_0, ds_1 \dots ds_{n+1})  \indicator_{s_{n+1} = \bot}
    \end{align*}
    Hence, using the recursive definition of $P_F^{\otimes{n}}$ in \cref{eq:n-step-trajectory-measure-recursion}:
    \begin{align*}
        \int_{\gX} f(x) P_{\top}^n (dx)
        &= \int_{\bar{\gS}^{n+1}} \indicator_\gX(s_n) f(s_n) P_F^{\otimes{n-1}}(s_0, ds_1 \dots ds_{n})  P_F(s_n, ds_{n+1})\indicator_{s_{n+1} = \bot}\\
        &=\int_{\bar{\gS}^{n}}  \underbrace{\indicator_\gX(s_n) f(s_n)P_F(s_n, \{\bot\}) }_{g(s_n)}P_F^{\otimes{n-1}}(s_0, ds_1 \dots ds_{n})\\
        &=\int_{\bar{\gS}^n} g(s_n) P_F^{\otimes{n-1}}(s_0, ds_1 \dots ds_{n})\\
        &=\int_{\bar{\gS}} g(s_n) P_F^{n-1}(s_0, ds_n)
        =\int_{\gX}f(x) P_F(x, \{\bot\}P_F^{n-1}(s_0, ds_n),
    \end{align*}
    where we applied \cref{lemma:P_F^n} to the bounded and measurable function $g$.
\end{proof}

\begin{customlemma}{\ref{lemma:existence-of-special-reverse-kernel}}
\end{customlemma}
\begin{proof}
Let $f: \bar{\gS}^2 \ra \R$ be a bounded measurable function.
\begin{align}
    \int_{\bar{\gS}^2} f(s, s') \nu \kappa(ds') \kappa'(s', ds)
    &=\int_{\bar{\gS}} f(s, s_0) \underbrace{\nu \kappa(\{s_0\})}_{=0} \kappa'(s_0, ds) + \int_\gS \int_{\bar{\gS} \setminus \{s_0\}} f(s, s') \nu \kappa(ds') \kappa'(s', ds) \nonumber\\
    & \quad + \int_{\gS \setminus \{s_0\}} f(\bot, s') \nu \kappa(ds') \underbrace{\kappa'(s', \{\bot\})}_{=0} + f(\bot, \bot) \nu \kappa(\{\bot\}) \kappa'(\bot, \{\bot\}) \nonumber\\
    &=\int_\gS \int_{\bar{\gS} \setminus \{s_0\}} f(s, s') \nu \kappa(ds') \kappa^b(s', ds)  + \int_{\bar{\gS}} f(s, s_0) \underbrace{\nu \kappa(\{s_0\})}_{=0} \kappa^b(s_0, ds) \nonumber \\
    & \quad + f(\bot, \bot) \nu \kappa(\{\bot\}) \kappa^b(\bot, \{\bot\}) \label{eq:proof-unique-reverse-kernel}
\end{align}
On the other hand, let $B$ be the largest open set within $\gS$such that $\forall s' \in B, \ \kappa^b(s', \{\bot\}) > 0$. Applying the definition of the reverse kernel \cref{eq:reverse-kernel} to the function $f:(s, s') \mapsto \indicator_{s=\bot} \indicator_{B}(s')$, we get:
\begin{align*}
    \int_{\bar{\gS}} \indicator_{B}(s') \nu(\{\bot\}) \kappa(\bot, s') = \int_{\bar{\gS}} \indicator_{B}(s') \nu \kappa(s') \kappa^b(s', \{\bot\})
\end{align*}
The LHS of the previous equality is $0$, following \cref{eq:absorption}. It follows from the assumption that $\forall s' \in B, \ \kappa^b(s', \{\bot\}) > 0$ that $\nu\kappa(B) = 0$. Hence:
\begin{align*}
    \int_{\gS \setminus \{s_0\}}& f(\bot, s') \nu \kappa(ds') \kappa^b(s', \{\bot\})\\
    &=\int_{\bar{\gS} \setminus \{s_0\}} \indicator_{\gS\setminus B}(s')f(\bot, s') \nu \kappa(ds') \kappa^b(s', \{\bot\})\\
    &\quad + \int_{\bar{\gS} \setminus \{s_0\}} \indicator_{B}(s')f(\bot, s') \nu \kappa(ds') \kappa^b(s', \{\bot\})
\end{align*}
The first summand of the RHS of the last equality is zero by the definition of $B$. The second summand is zero because $\nu \kappa(B) = 0$. Going back to \cref{eq:proof-unique-reverse-kernel}, we obtain:
\begin{align*}
    \int_{\bar{\gS}^2}& f(s, s') \nu \kappa(ds') \kappa'(s', ds) 
    = \int_{\bar{\gS}^2}f(s, s') \nu \kappa(ds') \kappa^b(s', ds)
    = \int_{\bar{\gS}^2}f(s, s') \kappa(ds) \kappa(s, ds')
\end{align*}
\end{proof}

\begin{customproposition}{\ref{prop:existence-of-backward-kernel}}
\end{customproposition}

\begin{proof}
First, note that \cref{eq:absorption,eq:finitely-absorbing} imply that $\forall n \geq N, \ \kappa^n(s_0, -) = \delta_\bot$. This can be shown by a simple induction on $n$, writing for any $B \in \Sigma$:
\begin{align*}
    \kappa^{n+1}(s_0, B) = \int_{\bar{\gS}} \kappa^n(s_0, ds) \kappa(ds, B).
\end{align*}
This entails that \cref{eq:accessibility} could be rewritten as:
\begin{align*}
    \forall B \in \gT \setminus \{\emptyset\}, \ \exists n \in \{0, \dots, N\} \ : \ \kappa^n(s_0, B) > 0.
\end{align*}
Hence, 
\begin{align*}
     \forall B \in \gT \setminus \{\emptyset\}, \ \nu(B) > 0.
\end{align*}
The measure $\nu$ is thus strictly positive.

Note that for any $B$ in $\Sigma_{|\gS}$ such that $s_0 \notin B$.
\begin{align*}
    \nu \kappa(B) &= \int_{\bar{\gS}} \nu(ds) \kappa(s, B) \\
    &= \int_{\bar{\gS}} \sum_{n=0}^N \kappa^n(s_0, ds) \kappa(s, B)
    = \sum_{n=0}^N \int_{\bar{\gS}} \kappa^n(s_0, ds) \kappa(s, B)\\
    &= \sum_{n=0}^N \kappa^{n+1}(s_0, B)
    = \sum_{n=1}^{N+1} \kappa^n(s_0, B).
\end{align*}
Because $B \subseteq \gS$, then $\kappa^{N+1}(s_0, B) = \delta_\bot(B) = 0$. And because $s_0 \notin B$, $\kappa^{0}(s_0, B) = \delta_{s_0}(B) = 0$. From this it follows that:
\begin{equation*}
    \nu \kappa(B) = \nu(B)
\end{equation*}

Then, let $B \in \Sigma_{|\gS} \otimes \Sigma_{|\gS}$ such that $(s_0, s_0) \notin B$. Using the definition of the reverse kernel, we obtain:
\begin{align*}
\nu \otimes \kappa(B) &= (\nu \kappa) \otimes \kappa^r_\nu (B) = \int_{\bar{\gS} \times \bar{\gS}} \indicator_B(s, s') \nu \kappa(ds') \kappa^r_\nu(ds', ds) \\
&=\int_{\gS  \times (\gS\setminus \{s_0\})} \indicator_B(s, s') \underbrace{\nu \kappa(ds')}_{=\nu(ds)} \kappa^r_\nu(ds', ds) + \int_{\gS} \indicator_B(s, s_0) \underbrace{\nu \kappa(s_0)}_{=0} \kappa^r_\nu(s_0, ds)\\
&=\int_{\gS  \times (\gS \setminus \{s_0\})} \indicator_B(s, s') \nu(ds') \kappa^r_\nu(ds', ds) \\
&= \int_{\gS  \times (\gS \setminus \{s_0\})} \indicator_B(s, s') \nu(ds') \kappa^r_\nu(ds', ds) + \int_{\gS \setminus \{s_0\}} \indicator_B(s, s_0) \kappa(s_0) \underbrace{\kappa^r_\nu(s_0, ds)}_{=0} \\
&= \int_{\bar{\gS}  \times \bar{\gS}} \indicator_B(s, s') \nu(ds') \kappa^r_\nu(ds', ds)\\
&= \nu \otimes \kappa^r_\nu(B)
\end{align*}

Finally, if $B \in \Sigma_{| \gS}$:
\begin{align*}
    \nu \otimes \kappa( B \times \{\bot\} ) &= \int_{\gS} \indicator_B(ds) \nu(ds) \kappa(s, \{\bot\}) = \int_{\gS} \indicator_B(ds) \nu(\{\bot\}) \kappa^b(\bot, ds) = \nu \otimes \kappa^b (B \times \{\bot\}) \\
    \nu \otimes \kappa(\{\bot\} \times B ) &= \nu(\{\bot\}) \underbrace{\kappa(\bot, B)}_{=0} = \int_{\gS} \indicator_B(s') \nu(ds') \underbrace{\kappa^b(s', \{\bot\})}_{=0} = \nu \otimes \kappa^b(\{\bot\} \times B )
\end{align*}

\end{proof}

\end{document}